\definecolor{linkcolor}{RGB}{83,83,182}
\definecolor{citecolor}{RGB}{128,0,128}
\newcommand{\Cc}{\mathcal{C}}
\newcommand{\Hh}{\mathcal{H}}
\newcommand{\Ff}{\mathcal{F}}
\newcommand{\Bb}{\mathcal{B}}
\newcommand{\Yy}{\mathcal{Y}}
\newcommand{\Ss}{\mathcal{S}}
\def\EE{\mathbb{E}}
\newcommand{\normFro}[1]{\norm{#1}_{\textup{F}}}
\newcommand{\normMSE}[1]{\norm{#1}_{\textup{MSE}}}
\newcommand{\convto}[2]{\xrightarrow[#1 \to #2]{}}
\def\convnproba{\xrightarrow[n \to \infty]{\mathcal{P}}}
\def\convn{\convto{n}{\infty}}
\def\GNN{\textup{GNN}}
\def\MLP{f^{\textup{MLP}}}
\newcommand{\dop}{S}
\newcommand{\cop}{\mathbf{S}}
\newcommand{\copA}{\mathbf{A}}
\newcommand{\copL}{\mathbf{L}}
\newcommand{\signop}{\mathbf{Q}}
\newcommand{\samp}[1]{\iota_{#1}}
\newcommand{\gnn}{\Phi}
\renewcommand{\dotp}[1]{\left\langle #1 \right\rangle}
\def\simiid{\stackrel{iid}{\sim}}
\def\PE{\mathrm{PE}}
\def\Gg{\mathcal{G}}
\def\CLip{\Cc_\textup{Lip}}
\def\Fdist{\Ff_\textup{Dist}}
\def\Feig{\Ff_\textup{Eig}}
\newtheorem{myexample}{Example}
\title{What functions can Graph Neural Networks compute on random graphs? The role of Positional Encoding}
\author{%
  Nicolas Keriven\\
  CNRS, IRISA, Rennes, France \\
  \texttt{nicolas.keriven@cnrs.fr}
  \and
  Samuel Vaiter \\
  CNRS, LJAD, Nice, France \\
  \texttt{samuel.vaiter@cnrs.fr}
}
\begin{document}

\maketitle

\begin{abstract}
We aim to deepen the theoretical understanding of Graph Neural Networks (GNNs) on large graphs, with a focus on their expressive power.
Existing analyses relate this notion to the graph isomorphism problem, which is mostly relevant for graphs of small sizes, or studied graph classification or regression tasks, while prediction tasks on \emph{nodes} are far more relevant on large graphs. Recently, several works showed that, on very general random graphs models, GNNs converge to certains functions as the number of nodes grows.
In this paper, we provide a more complete and intuitive description of the function space generated by equivariant GNNs for node-tasks, through general notions of convergence that encompass several previous examples. We emphasize the role of input node features, and study the impact of \emph{node Positional Encodings} (PEs), a recent line of work that has been shown to yield state-of-the-art results in practice. Through the study of several examples of PEs on large random graphs, we extend previously known universality results to significantly more general models. Our theoretical results hint at some normalization tricks, which is shown numerically to have a positive impact on GNN generalization on synthetic and real data. Our proofs contain new concentration inequalities of independent interest.
\end{abstract}

\section{Introduction}

Machine learning on graphs with Graph Neural Networks (GNNs) \cite{Wu2019a, Bronstein2021} is now a well-established domain, with application fields ranging from combinatorial optimization \cite{Cappart2021} to recommender systems \cite{Wang2018, Dong2020a}, physics \cite{Sanchez-Gonzalez2018,ArjonaMartinez2019}, chemistry \cite{Fout2017}, epidemiology \cite{Meirom2020}, physical networks such as power grids \cite{Ringsquandl2021}, and many more. Despite this, there is still much that is not properly understood about GNNs, both empirically and theoretically, and their performances are not always consistent \cite{Wu2019, Hu2020}, compared to simple baselines in some cases. It is generally admitted that a better theoretical understanding of GNNs, especially of their fundamental limitations, is necessary to design better models in the future.

Theoretical studies of GNNs have largely focused on their \emph{expressive power}, kickstarted by a seminal study \cite{Xu2018} that relates their ability to \emph{distinguish non-isomorphic graphs} to the historical Weisfeiler--Lehman (WL) test \cite{Weisfeiler1968}. Following this, many works have defined improved versions of GNNs to be ``more powerful than WL'' \cite{Maron2019b, Maron2019a, Keriven2019, Vignac2020a, Papp2022}, often by augmenting GNNs with various features, or by implementing ``higher-order'' versions of the basic message-passing paradigm. 
Among the simplest and most effective idea to ``augment'' GNNs is the use of \emph{Positional Encodings} (PE) as input to the GNN, inspired by the vocabulary of Transformers~\cite{vaswani2017attention}. The idea is to equip nodes with carefully crafted input features that would help break some of the indeterminancy in the subsequent message-passing framework. In early works, unique and/or random node identifiers have been used \cite{Loukas2019b, Sato2021}, but they technically break the permutation-invariance/equivariance -- consistency with a reordering of the nodes in the graph -- of the GNN. Most PEs in the current literature are based on eigenvectors of the adjacency matrix or Laplacian of the graph \cite{Dwivedi2020a, Dwivedi2021} (with recent variants to handle the sign/basis indeterminancy \cite{Lim2022}), random-walks \cite{Dwivedi2021}, node metrics \cite{You2019, Li2020}, or subgraphs \cite{Bouritsas2020}. Some of these have been shown to have an expressive power beyond WL \cite{Li2020, Bouritsas2020, Lim2022}.

In some contexts however, WL-based analyses have limitations: they pertain to tasks on graphs (e.g. graph classification or regression) and have limited to no connections to tasks on nodes or links; and they are mostly relevant for small-scale graphs, as medium or large graphs are never isomorphic but exhibit different characteristics (e.g. community structures). At the other end of the spectrum, the properties of GNNs on \emph{large} graphs have been analysed in the context of latent positions Random Graphs (RGs) \cite{Keriven2020a, Keriven2021, Ruiz2020a, Ruiz2020b, Levie2019, Baranwal2021, Maskey2022}, a family of models slightly more general than graphons \cite{Lovasz2012}. Such statistical models of large graphs are classically used in graph theory \cite{Goldenberg2009, Crane2018} to model various data such as epidemiological \cite{Kuperman2001a, Pellis2015}, biological \cite{Girvan2002}, social \cite{Hoff2002}, or protein-protein interaction \cite{Higham2008} networks, and are still an active area of research \cite{Crane2018}. For GNNs, the use of such models has shed light on their stability to deformations of the model \cite{Ruiz2020b, Levie2019, Keriven2020a}, expressive power \cite{Keriven2021}, generalisation \cite{Fountoulakis2022a, Maskey2022}, or some phenomena such as oversmoothing \cite{Keriven2022a, Baranwal2022a}.
One basic idea is that, as the number of nodes in a random graph grows, GNNs converge to ``continuous'' equivalents \cite{Keriven2020a, Cordonnier2023}, whose properties are somewhat easier to characterize than their discrete counterpart. 
As prediction tasks on \emph{nodes} are far more common and relevant on large graphs modelled by random graphs, this paper will focus on \emph{permutation-equivariant} GNNs, rather than permutation-invariant. In the limit, it has been shown that their output converge to \emph{functions} over some latent space to label the nodes, but the descriptions of this space of functions and its properties are still very much incomplete.
A partial answer was given in \cite{Keriven2021}, in which some universality properties are given for specific models of GNNs, but for limited models of random graphs \emph{with no random edges}, and specific models of GNNs that no not include node features or PEs.

\paragraph{Contributions.}
In this paper, we significantly extend existing results by providing a \textbf{complete description of the function space generated by permutation-equivariant GNNs} (Theorem~\ref{thm:main-fgnn}), in terms of simple stability rules, and show that it is equivalent to previous implicit definitions that were based on convergence bounds. We outline the \textbf{role of the input node features}, and particularly of Positional Encodings (PEs). %
We then study the several representative examples of PEs on large random graphs. 
In particular, we analyze SignNet \cite{Lim2022} (eigenvector-based) PEs (Theorem~\ref{thm:signnet}), and distance-based PEs \cite{Li2020} (Theorem~\ref{thm:distPE}). We derive simple normalization rules that are necessary for convergence, and show that they are relevant even on real data. Finally, our proofs contain new universality results for square-integrable functions and new concentration inequalities that are of independent interest. All technical proofs, and the code to reproduce the figures, are available as supplementary material or in the Appendix.

\section{Background on Random Graphs and Graph Neural Networks}\label{sec:background}

Let us start with generic notations and definitions. The norm $\norm{\cdot}$ is the Euclidean norm for vectors and the operator norm for matrices and compact operators between Hilbert spaces.
The latent space $\Xx$ is a compact metric set with a probability distribution $P$ over it. Square-integrable functions from $\Xx$ to $\RR^q$ w.r.t. $P$ are denoted by $L^2_q$, and are equipped with the Hilbertian norm
$
    \norm{f}_{L^2}^2 \eqdef \int_\Xx \norm{f(x)}^2 dP(x)
$. The (disjoint) union of multidimensional functions $L^2_\sqcup \eqdef \bigsqcup_{q \in \NN^*} L^2_q$ is a metric space for a metric defined as $\norm{f-g}_{L^2}$ if $f, g \in L_q^2$ for some $q$, and $1$ otherwise.
Continuous \emph{Lipschitz} functions between metric spaces $\Xx \to \Yy$ are denoted by $\CLip(\Xx, \Yy)$. For $X = \{x_1,\ldots, x_n\}$ where $x_i \in \Xx$, we define the sampling of $f:\Xx \to \RR^d$ as $\samp{X}f = [f(x_i)]_{i=1}^n \in \RR^{n \times d}$.
Given $Z \in \RR^{n \times d}$, the Frobenius norm is $\normFro{Z}$ and we define the normalized Frobenius norm as
$
    \normMSE{Z} = n^{-\frac12} \normFro{Z}
$. 
The notation comes from the fact that $\normMSE{\samp{X}(f-f^\star)}^2 = n^{-1} \sum_i \norm{f(x_i) - f^\star(x_i)}^2$ which is akin to an empirical Mean Square Error.

\paragraph{Latent position Random Graphs.}
In this paper, we consider \emph{latent position random graphs} \cite{Hoff2002, Lei2015, Lovasz2012}, a family of models that includes Stochastic Block Models (SBM), graphons, random geometric graphs, and many other examples. They are the primary models used for the study of GNNs in the literature \cite{Keriven2020a, Keriven2021, Levie2019, Ruiz2020a}. 
We generate a graph $G = (X, A, Z)$, where $X \in \RR^{n \times d}$ are \emph{unobserved} latent variables, $A \in \{0,1\}^{n \times n}$ its symmetric adjacency matrix, and $Z \in \RR^{n \times p}$ are (optional) observed node features. The latent variables and adjacency matrix are generated as such:
\begin{equation}\label{eq:rg}
  \forall i,~x_i \simiid P,\qquad \forall i<j,~ a_{ij} \sim \text{Bernoulli}(\alpha_n w(x_i,x_j)) \quad \text{independently}
\end{equation}
where $w:\Xx \times \Xx \to [0,1]$ is a continuous \emph{connectivity kernel} and $\alpha_n$ is the \emph{sparsity-level} of the graph, such that the expected degrees are in $\order{n^2 \alpha_n}$. Non-dense graph can be obtain with $\alpha_n = o(1)$, here we will go down to the \emph{relatively sparse} case $\alpha_n \gtrsim (\log n)/n$, a classical choice in the literature \cite{Lei2015, Keriven2020a}. Note that the continuity hypothesis of the kernel $w$ is not really restrictive: neither $\Xx$ nor the support of the distribution $P$ need be connected. For instance, SBMs can be obtained by taking $\Xx$ to be a finite set.
We do not specify a model for the node features yet, see Sec.~\ref{sec:pe}.

\paragraph{Graph shift matrix and operator.} When the number of nodes grows on random graphs, it is known that certain discrete operators associated to the graph converge to their continuous version, as well as the GNNs that employ them \cite{Keriven2020a, Cordonnier2023}. Here, some of our results will be valid under quite generic assumptions. We consider a \textbf{graph shift matrix}~\cite{sandryhaila2013discrete} $\dop = \dop(G) \in \RR^{n \times n}$, which can be either directly the adjacency matrix of the graph or various notions of graph Laplacians. We define an associated \textbf{graph shift operator} $\cop: L_\sqcup^2 \to L_\sqcup^2$ such that the restriction $\cop_{|L_q^2}$ is a compact linear operator of $L_q^2$ onto itself. Note that we reserve ``matrix'' and ``operator'' respectively for the discrete and continuous versions.
The results in Sec.~\ref{sec:main} will be valid under generic convergence assumptions from $\dop$ to $\cop$, while the results of Sec.~\ref{sec:pe} will focus on the following two representative examples.

\begin{example}[Normalized adjacency matrix and kernel operator]\label{ex:adj}
    Here $\dop = \tilde A = (n\alpha_n)^{-1}A$ and $\cop f = \copA f= \int w(\cdot, x) dP(x)$. This choice requires to know, or estimate, the sparsity level $\alpha_n$. In this case, our results will hold whenever $\alpha_n \gtrsim (\log n)/n$ with an arbitrary multiplicative constant.
\end{example}
\begin{example}[Normalized Laplacian matrix\footnote{Note that the normalized Laplacian is traditionally defined as $\Id-L$, here it does not change our definition of GNNs since they include residual connections} and operator]\label{ex:lapl}
    Here $\dop = L = D_A^{-1/2} A D_A^{-1/2}$ where $D_A = \diag(A 1_n)$ is the degree matrix of $G$, and $\cop f = \copL f = \int \frac{w(\cdot, x)}{\sqrt{d(\cdot) d(x)}} dP(x)$ where $d(\cdot) = \int w(\cdot, x) dP(x)$ is the \emph{degree function}. Whenever we opt for this choice, we assume that $d_{\min} \eqdef \inf_\Xx d(x) >0$, and our results will hold whenever $\alpha_n \geq C (\log n)/n$ with a multiplicative constant $C$ that depends (in a non-trivial way) on $w$, see Thm.~\ref{thm:concentration-opnorm} in App.~\ref{app:third-party}.
\end{example}
To sometimes unify notations, when we adopt these examples, we define $w_\cop$ such that $w_\cop(x,y) = w(x,y)$ in the adjacency case and $w_\cop(x,y) = \frac{w(x,y)}{\sqrt{d(x)d(y)}}$ in the normalized Laplacian case. Therefore for these two examples the continuous operator has a single expression $\cop f = \int w_\cop(\cdot, x) dP(x)$.

\paragraph{Graph Neural Network.}
As mentioned in the introduction, we focus on \emph{equivariant} GNNs that can compute functions over \emph{nodes}, as this makes the most sense on large graphs that RGs seek to model. Recall that we observe a graph shift operator $\dop$ and node features $Z \in \mathbb{R}^{n \times p}$, and we return a vector per nodes 
$
    \gnn(\dop,Z) \in \RR^{n \times d_L}
$. 
We adopt a traditional message-passing neural network (MPNN) that uses the graph shift matrix $\dop$: given input features $Z^{(0)} \in \RR^{n \times d_0}$,
\begin{align}
  Z^{(\ell)} &= \rho\pa{Z^{(\ell-1)}\theta^{(\ell-1)}_0 + \dop Z^{(\ell-1)}\theta^{(\ell-1)}_1 + 1_n (b^{(\ell)})^\top} \in\RR^{n \times d_{\ell}}, \notag\\
  \gnn_{\theta}(\dop, Z^{(0)}) &= Z^{(L-1)}\theta^{(L-1)} + 1_n (b^{(L)})^\top \label{eq:gnn}
\end{align}
where $\rho$ is the ReLU function applied element-wise, and $\theta^{(\ell)}_i \in \RR^{d_{\ell} \times d_{\ell+1}}$, $b^{(\ell)} \in \RR^{d_\ell}$ are learnable parameters gathered in $\theta \in \Theta$. We denote by $\Theta$ the set of all possible parameters. We note that here we employ the ReLU function as a non-linearity, as some of our results will use its specific properties. Multi-Layer Perceptrons (MLP, densely connected networks) using the ReLU activation, and with potentially more than one hidden layer, will be denoted by $\MLP_\gamma$, where $\gamma$ gathers their parameters.

Following recent literature \cite{Dwivedi2020a, Dwivedi2021}, we consider inputing \emph{Positional Encoding} (PE) at each node. Such PE are generally computed using only the graph structure and concatenated to existing node features $Z$, here we simply introduce a generic notation:
\begin{equation}
  Z^{(0)} = \PE_\gamma(\dop, Z) \in \RR^{n \times d_0} \label{eq:pe}
\end{equation}
with some parameter $\gamma \in \Gamma$. In our notations, the PE module uses the node features $Z$, generally by concatenating them to its output. For short, we may denote the whole architecture with PE and GNN as $\gnn_{\theta, \gamma}(\dop, Z) \eqdef \gnn_{\theta}(\dop, \PE_\gamma(\dop, Z))$.
It is not difficult to see that if the PE computation is equivariant, then the whole GNN is equivariant: denoting by $\sigma$ a permutation matrix of $\{1,\ldots, n\}$,
\begin{equation*}
  \forall \sigma,~ \gnn_{\theta,\gamma}(\sigma \dop \sigma^\top, \sigma Z) = \sigma \gnn_{\theta,\gamma}(\dop, Z) \quad\Leftrightarrow\quad \forall \sigma,~ \PE_{\gamma}(\sigma \dop \sigma^\top, \sigma Z) = \sigma \PE_{\gamma}(\dop, Z) .
\end{equation*}
All the examples of PEs examined in Sec.~\ref{sec:pe} are equivariant.

\section{Function spaces of Graph Neural Networks}\label{sec:main}

In this section, we provide a complete and intuitive description of the function space approximated by equivariant GNNs applied on RGs. All technical proofs are provided in App.~\ref{app:main}.
It has been shown \cite{Keriven2020a, Keriven2021, Cordonnier2023, Levie2019} that GNNs converge to functions over the latent space: when the node features are a sampling of a certain function $\samp{X} f^{(0)}$, then the output of the GNN is close to being a sampling of another function $\samp{X} f^{(L)}$. Assuming the node features or PEs approximate some function set $\Bb \subset L^2_\sqcup$, we define the space of functions that a GNN can approximate as follows.

\begin{definition}\label{def:Fgnn}
Given a base set $\Bb \subset L^2_\sqcup$, the \textbf{set of functions approximated by GNNs} $\Ff_\GNN(\Bb)$ is formed by all the functions $f \in L_\sqcup^2$ such that: for all $\epsilon>0$, there are $\theta \in \Theta, f^{(0)} \in \Bb$ such that
\begin{equation}
    \mathsmaller{\PP\Big(\normMSE{\gnn_{\theta}(\dop,\samp{X}f^{(0)})-\samp{X} f} \geq \epsilon\Big) \convn 0} . \label{eq:Fgnn}
\end{equation}
\end{definition}
In other words, $\Ff_\GNN(\Bb)$ are the functions whose sampling can be $\epsilon$-approximated by the output of a GNN, with probability going to $1$ as $n$ grows. Note that if the quantifiers of $\theta, f^{(0)}$ and $\epsilon$ were reversed, the MSE would converge to $0$ in probability. Here this is \emph{not} the case: \emph{$\theta, f^{(0)}$ may depend on $\epsilon$}, which is akin to an approximation level.
Similar to the permutation equivariance of GNNs, there is a notion of continuous equivariance for functions well-approximated by GNNs \cite{Keriven2020a, Keriven2021, Cordonnier2023}, where the permutations are replaced by bijections over the latent space $\Xx$. We adopt the notations $\Ff_\GNN(\Bb) = \Ff_\GNN(\Bb, w,P)$. For all continuous bijections $\phi$ over $\Xx$, we define $w_\phi(x,y) = w (\phi(x), \phi(y))$, $P_\phi = \phi^{-1} \sharp P$ where $\sharp$ is the push-forward operation, and $\Bb_\phi = \{f \circ \phi ~|~ f \in \Bb\}$. Then, we have the following result.
\begin{proposition}\label{prop:equivariance}
    Let $\dop = \dop(A)$ be a graph shift operator that only depends on the adjacency matrix of the graph in a permutation-equivariant manner. Then, for all continuous bijections $\phi:\Xx \to \Xx$,
    \begin{align*}
        \Ff_\GNN(\Bb_\phi, w_\phi, P_\phi) &= \left\lbrace f \circ \phi ~|~ f \in \Ff_\GNN(\Bb, w, P)\right\rbrace .
    \end{align*}
\end{proposition}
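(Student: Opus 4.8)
The plan is to prove the statement by a single \emph{change of variables}: generating a random graph from the model $(w_\phi, P_\phi)$ is, up to the deterministic relabeling $x = \phi(\tilde x)$ of latent positions, exactly the same as generating one from $(w, P)$. Concretely, I would first record the coupling. Draw $\tilde x_i \simiid P_\phi$ and set $x_i \eqdef \phi(\tilde x_i)$. Since $P_\phi = \phi^{-1}\sharp P$, the $x_i$ are i.i.d. $P$, and since $w_\phi(\tilde x_i, \tilde x_j) = w(\phi(\tilde x_i), \phi(\tilde x_j)) = w(x_i, x_j)$, the edge probabilities $\alpha_n w_\phi(\tilde x_i, \tilde x_j)$ coincide with $\alpha_n w(x_i, x_j)$. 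Hence the pair $(X, A)$ with $X = \phi(\tilde X)$ has exactly the law of the $(w, P)$ model, and because $\dop = \dop(A)$ depends on the adjacency matrix alone, the graph shift matrix is literally the \emph{same} random object under both viewpoints (this is where the hypothesis on $\dop$ is used; no node is permuted, so the same ordering is reused on both sides).

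Next I would observe that the map $T : f \mapsto f \circ \phi$ is the natural dictionary between the two settings. Since $\Xx$ is compact, the continuous bijection $\phi$ is a homeomorphism, so $T$ is a linear isometry from $L^2(P)$ onto $L^2(P_\phi)$: the change-of-variables formula $\int (h\circ\phi)\, dP_\phi = \int h\, dP$ applied to $h = \norm{f}^2$ gives $\norm{f\circ\phi}_{L^2(P_\phi)} = \norm{f}_{L^2(P)}$, and $T$ is invertible via $g \mapsto g\circ\phi^{-1}$; moreover $T\Bb = \Bb_\phi$ by definition. The one computation that makes everything collapse is the sampling identity: for any $f$, evaluating $f\circ\phi$ at the $\tilde x_i$ gives $\samp{\tilde X}(f\circ\phi) = [f(\phi(\tilde x_i))]_i = [f(x_i)]_i = \samp{X} f$. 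Applying this to both a base function $f^{(0)} \in \Bb$ (with $g^{(0)} \eqdef f^{(0)}\circ\phi \in \Bb_\phi$) and a target $f$ (with $g \eqdef f\circ\phi$) yields, for every $\theta$,
\begin{equation*}
  \normMSE{\gnn_\theta(\dop, \samp{\tilde X} g^{(0)}) - \samp{\tilde X} g} = \normMSE{\gnn_\theta(\dop, \samp{X} f^{(0)}) - \samp{X} f}.
\end{equation*}

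With these pieces in hand the conclusion is immediate. For fixed $\epsilon$, the event in \eqref{eq:Fgnn} for $g = f\circ\phi$ under $(w_\phi, P_\phi)$ coincides, under the coupling, with the corresponding event for $f$ under $(w, P)$; since the two models induce the same joint law on $(A, X)$, the two probabilities are equal for every $n$, so one $\convn 0$ iff the other does. Quantifying over $\epsilon$, and using that $f^{(0)}\mapsto f^{(0)}\circ\phi$ is a bijection from $\Bb$ onto $\Bb_\phi$, shows $f\circ\phi \in \Ff_\GNN(\Bb_\phi, w_\phi, P_\phi)$ iff $f \in \Ff_\GNN(\Bb, w, P)$, which is the claimed set equality. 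I do not expect a genuine analytic obstacle here: the content is entirely the coupling, and the only thing to watch is bookkeeping — keeping the direction of the pushforward ($P_\phi = \phi^{-1}\sharp P$) consistent with the direction of composition in $T$ and in $\Bb_\phi$, and confirming that the permutation-equivariance of $\dop$ enters only through ``$\dop$ depends on $A$ alone''.
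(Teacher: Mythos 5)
Your proposal is correct and follows essentially the same route as the paper's proof: both rest on the observation that if $(X,A)$ is drawn from the $(w_\phi,P_\phi)$ model then $(\phi(X),A)$ has the law of the $(w,P)$ model, combined with the sampling identity $\samp{X}(f\circ\phi)=\samp{\phi(X)}f$ and the fact that $\dop=\dop(A)$ is unchanged under the coupling. The only cosmetic difference is that you phrase the conclusion as a single ``iff'' via the bijection $f\mapsto f\circ\phi$, whereas the paper proves one inclusion and obtains the other by repeating the argument with $\phi^{-1}$.
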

That is, if one ``permutes'' the kernel $w$, the distribution $P$ and the base set $\Bb$, then the function space $\Ff_\GNN$ contains exactly the permuted version of the original space.

The goal of this section is to provide a more intuitive description of the space $\Ff_\GNN$, which we will do under some basic convergence assumption from $\dop$ to $\cop$.
GNNs \eqref{eq:gnn} basically include two components: dense connections and MLPs that can approximate any continuous function by the universality theorem \cite{Pinkus1999}, and applications of $\dop$. Hence, we define the following function space.

\begin{definition}
 We define $\Ff_{\cop}(\Bb) \subset L^2_\sqcup$ the (minimal) \textbf{$\cop$-extension} of a base set $\Bb \subset L^2_\sqcup$ by the following rules:
  \begin{enumerate}[label=(\roman*)]
    \item \textbf{Base space:} $\Bb \subset \Ff_{\cop}(\Bb)$;
    \item \textbf{Stability by composition with continuous functions:} for all $f \in \Ff_{\cop}(\Bb)$ with a $p$-dimensional output and $g \in \CLip(\RR^p,\RR^q)$, it holds\footnote{Note that, since $g$ is Lipschitz, when $f \in L^2_\sqcup$ we indeed have $g \circ f \in L^2_\sqcup$.} that $g \circ f \in \Ff_{\cop}(\Bb)$;
    \item \textbf{Stability by graph operator:} for all $f \in \Ff_{\cop}(\Bb)$, it holds that $\cop f \in \Ff_{\cop}(\Bb)$;
    \item \textbf{Linear span:} for all $q$, $\Ff_{\cop}(\Bb) \cap L^2_q$ is a vector space;
    \item \textbf{Closure:} $\Ff_{\cop}(\Bb)$ is closed in $L^2_\sqcup$;
    \item \textbf{Minimality:} for all $\mathcal{G} \subset L^2_\sqcup$ satisfying all the properties above, $\Ff_{\cop} \subset \mathcal{G}$.
  \end{enumerate}
\end{definition}
In words, $\Ff_{\cop}$ take a base set $\Bb$, and extend it to be stable by composition with Lipschitz functions, application of the graph operator, and linear combination (of its elements with the same dimensionality).
Our result will use the following assumption, which is naturally true for our running examples.

\begin{assumption}\label{ass:cvgt-mse}
With probability going to $1$, $\norm{\dop}$ is bounded. Moreover, for all $f \in L^2_\sqcup$,
\begin{equation*}
    \mathsmaller{\normMSE{\dop \samp{X} f - \samp{X} \cop f} \convnproba 0}
\end{equation*}
where $\xrightarrow{\mathcal{P}}$ indicates convergence in probability.
\end{assumption}

\begin{proposition}\label{prop:cvgt-mse-example}
    Assumption \ref{ass:cvgt-mse} is true for the adjacency matrix (ex.~\ref{ex:adj}) and normalized Laplacian (ex.~\ref{ex:lapl}).
\end{proposition}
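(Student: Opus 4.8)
The plan is to check the two requirements of Assumption~\ref{ass:cvgt-mse} in turn, treating the adjacency case (ex.~\ref{ex:adj}) first and then reducing the normalized Laplacian (ex.~\ref{ex:lapl}) to it. The operator-norm bound is the lighter half. In the adjacency case $\norm{\dop}=(n\alpha_n)^{-1}\norm{A}$, and the spectral concentration of sparse adjacency matrices (Thm.~\ref{thm:concentration-opnorm}), valid precisely in the regime $\alpha_n\gtrsim(\log n)/n$, gives $\norm{A-\EE[A\mid X]}=\order{\sqrt{n\alpha_n}}$; combined with $\norm{\EE[A\mid X]}\le\alpha_n\norm{[w(x_i,x_j)]_{ij}}=\order{n\alpha_n}$ (as $w\le 1$), this yields $\norm{\dop}=\order{1}$ with high probability. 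For the Laplacian the same theorem applies once the degrees are shown to concentrate, using $d_{\min}>0$ to keep $D_A^{-1/2}$ bounded.

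For the convergence statement, fix $f\in L^2_q$; since any element of $L^2_\sqcup$ lies in some $L^2_q$ this suffices. In the adjacency case, write the $i$-th row of the error as
\begin{equation*}
    (\dop \samp{X} f)_i - \cop f(x_i)
    = \underbrace{\frac{1}{n\alpha_n}\sum_{j}\bigl(a_{ij}-\alpha_n w(x_i,x_j)\bigr) f(x_j)}_{\text{edge fluctuation}}
    + \underbrace{\Bigl(\frac{1}{n}\sum_{j} w(x_i,x_j) f(x_j) - \int w(x_i,x) f(x)\,dP(x)\Bigr)}_{\text{sampling error}} ,
\end{equation*}
up to an $\order{1/n}$ diagonal correction coming from $a_{ii}=0$. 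I would bound each term's contribution to $\normMSE{\cdot}^2=n^{-1}\sum_i\norm{\cdot}^2$ in expectation and then invoke Markov's inequality. Conditioning on the latent variables $X$, the edge-fluctuation row is a sum of independent centered vectors with conditional second moment controlled by $\mathrm{Var}(a_{ij})\le\alpha_n w(x_i,x_j)\le\alpha_n$; since $\EE\norm{f(x_j)}^2=\norm{f}_{L^2}^2<\infty$, its averaged expected squared norm $n^{-1}\sum_i\EE\norm{\cdot}^2$ is of order $(n\alpha_n)^{-1}\to 0$. The sampling row is, conditionally on $x_i$, a centered empirical average of the i.i.d.\ vectors $w(x_i,x_j)f(x_j)$, whose averaged expected squared norm is a variance of order $n^{-1}\to 0$ (again using $w\le 1$ and $f\in L^2$). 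Both vanishing rates require only $n\alpha_n\to\infty$, which the threshold $\alpha_n\gtrsim(\log n)/n$ guarantees.

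The normalized Laplacian case reduces to the previous one. I would first show that the rescaled degrees $(n\alpha_n)^{-1}\sum_j a_{ij}$ concentrate around the degree function $d(x_i)=\int w(x_i,x)\,dP(x)$, uniformly in $i$ and with high probability; here $\alpha_n\gtrsim(\log n)/n$ and $d_{\min}>0$ ensure every degree stays bounded away from $0$. Replacing $D_A^{-1/2}$ by the population factor $(n\alpha_n d(x_i))^{-1/2}$ turns $L\samp{X}f$ into an adjacency-type average against the kernel $w_\cop(x,y)=w(x,y)/\sqrt{d(x)d(y)}$, so the two-term argument above applies verbatim with $w$ replaced by $w_\cop$; the replacement error is absorbed using the uniform degree concentration together with the boundedness of $\norm{\dop}$.

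The hard part is the uniform control of the random degrees in the Laplacian case: because $D_A^{-1/2}$ enters multiplicatively, a single anomalously small degree could inflate an entire row of $L$, so one genuinely needs a high-probability lower bound on $\min_i\sum_j a_{ij}$, which is exactly where both the relatively sparse threshold $\alpha_n\gtrsim(\log n)/n$ and the hypothesis $d_{\min}>0$ are used. A secondary subtlety, present already in the adjacency case, is that the kernel argument $w(x_i,\cdot)$ depends on the same $x_i$ indexing the row; this is resolved cleanly by conditioning on $X$ (or on $x_i$) before exploiting the independence of the remaining randomness, after which the edge and sampling fluctuations decouple.
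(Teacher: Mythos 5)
Your proposal is correct, but for the convergence statement it takes a genuinely different route from the paper. The paper's proof leans entirely on the operator-norm concentration $\norm{\dop - W}\to 0$ of Theorem~\ref{thm:concentration-opnorm} (Lei--Rinaldo for the adjacency, \cite[Theorem 6]{Keriven2020a} plus degree concentration for the Laplacian), then decomposes
$\normMSE{\dop\samp{X}f-\samp{X}\cop f}\le\normMSE{(\dop-W)\samp{X}f}+\normMSE{W\samp{X}(f-g)}+\normMSE{W\samp{X}g-\samp{X}\cop g}+\normMSE{\samp{X}\cop(g-f)}$, where $g$ is a continuous approximation of $f$ --- the density step is needed because the uniform-LLN Lemma~\ref{lem:chaining}, which handles the third term, requires a bounded continuous integrand. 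Your argument replaces all of this by a direct conditional second-moment computation: conditioning on $X$ makes the edge fluctuations a sum of independent centered terms with averaged variance $\order{(n\alpha_n)^{-1}}$, the sampling error has variance $\order{n^{-1}}$, and Markov's inequality finishes. This is more elementary, handles arbitrary $f\in L^2_q$ with no density argument, and for this part needs only $n\alpha_n\to\infty$; the price is that you must still invoke the spectral concentration separately for the $\norm{\dop}=\order{1}$ claim (which is where the $(\log n)/n$ threshold genuinely enters), and in the Laplacian case you re-derive by hand the uniform degree lower bound that the paper gets packaged inside Theorem~\ref{thm:concentration-opnorm} --- you correctly identify this as the delicate step and correctly locate where $d_{\min}>0$ and $\alpha_n\gtrsim(\log n)/n$ are used. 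One point worth making explicit if you write this up: the symmetry $a_{ij}=a_{ji}$ creates correlations \emph{across} rows of the edge-fluctuation matrix, but since you only sum expected squared norms row by row, this never interferes with your variance bound.
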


Under this assumption, the main result of this section states that the functions well-approximated by GNNs are exactly the $\cop$-extension of the base input features $\Bb$.

\begin{theorem}\label{thm:main-fgnn}
  Under Assumption \ref{ass:cvgt-mse}, for all $\Bb \subset L^2_\sqcup$, we have:
  \begin{equation*}
    \Ff_\GNN(\Bb) = \Ff_{\cop}(\Bb)
  \end{equation*}
\end{theorem}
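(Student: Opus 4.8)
The plan is to prove the two inclusions separately, exploiting the \emph{axiomatic} description of $\Ff_{\cop}(\Bb)$ for the inclusion $\Ff_{\cop}(\Bb) \subseteq \Ff_\GNN(\Bb)$, and an explicit discrete-to-continuous comparison for the reverse one. For the first inclusion I would invoke the minimality rule (vi): it suffices to verify that the set $\Ff_\GNN(\Bb)$ itself satisfies rules (i)--(v), after which minimality forces $\Ff_{\cop}(\Bb) \subseteq \Ff_\GNN(\Bb)$.

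The base inclusion (i) holds because one affine layer, together with the standard encoding of the identity $\rho(u)-\rho(-u)=u$, reproduces any $f^{(0)} \in \Bb$. Stability under $\cop$ (iii) follows by appending one message-passing layer with $\theta_0=0$ and $\theta_1$ the identity: if a GNN output is MSE-close to $\samp{X} f$, then applying $\dop$ stays close to $\dop\samp{X} f$ since $\norm{\dop}$ is bounded with high probability, and $\dop\samp{X} f$ is itself close to $\samp{X}\cop f$ by Assumption~\ref{ass:cvgt-mse}. The vector-space structure (iv) is obtained by running two computations in parallel inside one wide network whose input concatenates the two base functions, and combining them linearly in the final affine readout; this relies on the convention that the input may stack finitely many elements of $\Bb$, a closure that leaves $\Ff_{\cop}(\Bb)$ unchanged since stacking $(f_1,f_2)=(f_1,0)+(0,f_2)$ is already derivable from rules (ii) and (iv). The closure rule (v) is then a diagonal $\epsilon/2$ argument, approximating the $L^2$-limit by a member of $\Ff_\GNN(\Bb)$ and that member by a GNN while controlling the random MSE.

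The delicate step is stability under composition (ii): given $f \in \Ff_\GNN(\Bb)$ and $g \in \CLip$, I would append an MLP approximating $g$ and propagate the error through the Lipschitz constant of $g$. The obstacle is that ReLU networks are universal for \emph{continuous} functions on compacts, whereas elements of $L^2_\sqcup$ need not have bounded range, so the classical universality theorem does not apply directly; one must approximate $g$ in an $L^2$ sense and bound the contribution of the tails using $\operatorname{Lip}(g)$ and the $L^2$-mass of $f$ outside a large ball. This $L^2$-universality for square-integrable functions is the genuine new ingredient (as advertised in the introduction) and the principal difficulty of the whole argument.

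For the reverse inclusion $\Ff_\GNN(\Bb) \subseteq \Ff_{\cop}(\Bb)$, I would introduce the \emph{continuous GNN} $\cgnn_\theta(\cop,\cdot)$ obtained from \eqref{eq:gnn} by replacing $\dop$ with $\cop$ and samplings by functions. Each layer is an application of $\cop$, a stacking, and a composition with the Lipschitz map $u \mapsto \rho(u\theta_0+v\theta_1+b)$, so rules (ii)--(iv) give $\cgnn_\theta(\cop,f^{(0)}) \in \Ff_{\cop}(\Bb)$ for every $f^{(0)} \in \Bb$. The core lemma is a layer-by-layer induction proving $\normMSE{\gnn_\theta(\dop,\samp{X} f^{(0)}) - \samp{X}\cgnn_\theta(\cop,f^{(0)})} \convnproba 0$, each step combining Assumption~\ref{ass:cvgt-mse}, the Lipschitz continuity of the layers, and the high-probability bound on $\norm{\dop}$. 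Given $f \in \Ff_\GNN(\Bb)$, fixing $\epsilon$ yields $\theta,f^{(0)}$ whose GNN is within $\epsilon$ of $\samp{X} f$ with probability tending to one; combining the induction lemma with the law of large numbers $\normMSE{\samp{X} h} \to \norm{h}_{L^2}$ gives $\norm{\cgnn_\theta(\cop,f^{(0)}) - f}_{L^2} \le \epsilon$. Letting $\epsilon \to 0$ places $f$ in the $L^2$-closure of $\Ff_{\cop}(\Bb)$, which equals $\Ff_{\cop}(\Bb)$ by rule (v). Here I expect the bookkeeping of in-probability statements to be the main nuisance, while the $L^2$-universality of step (ii) remains the hardest part overall.
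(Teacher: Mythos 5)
Your reverse inclusion is essentially the paper's argument verbatim: the continuous GNN $\gnn_\theta(\cop,f^{(0)})$ lies in $\Ff_\cop(\Bb)$ by the stability rules, a layer-by-layer induction (Lemma~\ref{lem:cvgtthm}) gives $\normMSE{\gnn_\theta(\dop,\samp{X}f^{(0)})-\samp{X}\gnn_\theta(\cop,f^{(0)})}\convnproba 0$, and the law of large numbers upgrades the in-probability MSE bound to a deterministic $L^2$ bound, after which closedness of $\Ff_\cop(\Bb)$ concludes. For the forward inclusion you take a genuinely different, dual route: instead of verifying axioms (i)--(v) for $\Ff_\GNN(\Bb)$ and invoking minimality, the paper exhibits a constructive subclass $\Ff_c(\Bb)$ of functions built by iterating $f^{(\ell+1)}=g_1^{(\ell)}\circ f^{(\ell)}+g_2^{(\ell)}\circ\cop f^{(\ell)}$, proves it dense in $\Ff_\cop(\Bb)$ (Lemma~\ref{lem:fcisdense}, itself a minimality argument), approximates each such function by a continuous GNN with errors propagated through the product of layer Lipschitz constants (Lemma~\ref{lem:univ-cgnn}), and then reuses the discrete-to-continuous lemma. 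Both organizations rest on the same two pillars --- the $L^2$ universality lemma for $g\circ f$ with $f$ merely square-integrable, and Assumption~\ref{ass:cvgt-mse} together with the high-probability bound on $\norm{\dop}$ --- so your plan is sound; it localizes the error propagation axiom by axiom, while the paper's version yields an explicit quantitative chain of approximations at the cost of an extra density lemma.

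Two points deserve care. First, in the $L^2$ universality step it does not suffice to control the tail of $g\circ f$ via $\mathrm{Lip}(g)$ and the $L^2$-mass of $f$ outside a ball: the MLP given by classical universality is only controlled on the compact $[-k,k]^q$ and may blow up outside it, so its \emph{own} tail must be tamed; the paper does this by post-composing with a ReLU clipping network that saturates at $\pm k$, which is exactly where the advertised ``special properties of ReLU'' enter. Second, your verification of the linear-span axiom feeds the GNN a stacking of two elements of $\Bb$, whereas Definition~\ref{def:Fgnn} quantifies over a single $f^{(0)}\in\Bb$; showing that stacking does not enlarge $\Ff_\cop(\Bb)$ does not by itself show it does not enlarge $\Ff_\GNN(\Bb)$, so the statement must be read with $\Bb$ closed under concatenation. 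Since the paper's own $\Ff_c(\Bb)$ faces the same issue with the linear-span rule, this is a shared convention rather than a defect of your route, but it should be stated explicitly rather than derived.
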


Given the definition of GNNs \eqref{eq:gnn} and construction of $\Ff_\cop$, Theorem~\ref{thm:main-fgnn} appears quite natural. Its proof, provided in App.~\ref{app:main-fgnn}, is however far from trivial. The inclusion $\Ff_{\cop}(\Bb) \subset \Ff_\GNN(\Bb)$ is similar in spirit to previous convergence results \cite{Keriven2020a}, since one has to construct a GNN that approximates a particular function. It involves however a new extended universality theorem for MLPs for square-integrable functions (Lemma \ref{lem:univL2} in App.~\ref{app:main-fgnn}), which uses \emph{the special properties of ReLU}. The reverse inclusion $\Ff_\GNN(\Bb) \subset \Ff_{\cop}(\Bb)$ is quite different from previous work on GNN convergence: given $f \in \Ff_\GNN(\Bb)$ whose only property is to be well-approximated by GNNs, one must construct a sequence of functions in $\Ff_{\cop}(\Bb)$ that converge to $f$, and uses the closure of $\Ff_\cop(\Bb)$. The need to work within square-integrable function is here obvious, as we only have convergence of the MSE, an approximation of the $L^2$-norm. For instance, this inclusion would not be true in the space of continuous functions.

Using composition with continuous functions, if $\Ff_\cop(\Bb)$ contains a continuous bijection $\phi:\Xx \to \text{Im}(\phi)$, then $\Ff_\cop(\Bb)$ contains all continuous functions, and by density all square integrable functions. That is, the equivariant GNNs are then \textbf{universal} over $\Xx$: they can generate any function to label the nodes. Another criterion using the Stone-Weierstrass theorem (e.g. \cite{Hornik1989}), similar to the proofs in \cite{Keriven2021}, is the following.

\begin{proposition}\label{prop:universality}
    Assume that for all $x \neq x'$ in $\Xx$, there is a continuous function $f \in \Ff_\cop(\Bb) \cap \CLip(\Xx, \RR)$ such that $f(x) \neq f(x')$. Then, $\Ff_\cop(\Bb) = L^2_\sqcup$.
\end{proposition}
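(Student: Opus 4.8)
The plan is to verify the hypotheses of the Stone--Weierstrass theorem for the set $\Aa \eqdef \Ff_\cop(\Bb) \cap \CLip(\Xx,\RR)$ of real-valued continuous Lipschitz functions inside $\Ff_\cop(\Bb)$, deduce that $\Aa$ is uniformly dense in $C(\Xx,\RR)$, and then upgrade this to $L^2$-density using the closure rule (v). This matches the paper's stated intent of invoking Stone--Weierstrass.

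First I would record two elementary consequences of the defining rules. \emph{Concatenation:} if $f_1,\dots,f_q \in \Ff_\cop(\Bb)\cap L^2_1$, then $(f_1,\dots,f_q)\in\Ff_\cop(\Bb)\cap L^2_q$; indeed, composing each $f_i$ with the linear (hence Lipschitz) coordinate embedding $a\mapsto a\,e_i\in\RR^q$ stays in $\Ff_\cop(\Bb)$ by rule (ii), and summing the $q$ results yields $(f_1,\dots,f_q)$ by the vector-space rule (iv). \emph{Constants:} since the hypothesis guarantees $\Ff_\cop(\Bb)\neq\emptyset$, composing any element with a constant map (which is $0$-Lipschitz) places every constant function in $\Ff_\cop(\Bb)$, and constants are trivially Lipschitz, hence in $\Aa$.

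Next I would check that $\Aa$ is a point-separating subalgebra of $C(\Xx,\RR)$. Separation is exactly the hypothesis, and closure under addition and scalar multiplication is rule (iv) restricted to Lipschitz elements (sums and scalar multiples of Lipschitz maps are Lipschitz). The delicate point is closure under \emph{products}: given $f,g\in\Aa$, concatenation gives $(f,g)\in\Ff_\cop(\Bb)\cap L^2_2$, and I would like to compose with the multiplication map $m(a,b)=ab$. Since $m$ is not globally Lipschitz, I use that $\Xx$ is compact so that $f,g$ are bounded, say $|f|,|g|\le M$; I may then replace $m$ by a globally Lipschitz $\tilde m:\RR^2\to\RR$ agreeing with $m$ on $[-M,M]^2$ (truncate the arguments before multiplying). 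Then $\tilde m\circ(f,g)=fg\in\Ff_\cop(\Bb)$ by rule (ii), and $fg$ is continuous and Lipschitz, so $fg\in\Aa$. This truncation is the main technical obstacle, as it is precisely where the local product structure must be reconciled with the \emph{global} Lipschitz requirement in rule (ii); together with the constants from the previous step it completes the algebra hypotheses.

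With $\Aa$ a point-separating subalgebra containing the constants, Stone--Weierstrass gives that $\Aa$ is uniformly dense in $C(\Xx,\RR)$. Since $P$ is a probability measure, uniform convergence implies $L^2$ convergence, and $C(\Xx,\RR)$ is itself $L^2$-dense in $L^2_1$; hence $\Aa$ is $L^2$-dense in $L^2_1$. Because $\Ff_\cop(\Bb)$ is closed in $L^2_\sqcup$ (rule (v)) and contains $\Aa$, I conclude $L^2_1\subset\Ff_\cop(\Bb)$. Finally, applying the concatenation consequence to arbitrary $f_1,\dots,f_q\in L^2_1=\Ff_\cop(\Bb)\cap L^2_1$ gives $L^2_q\subset\Ff_\cop(\Bb)$ for every $q$, so $\Ff_\cop(\Bb)=\bigsqcup_q L^2_q=L^2_\sqcup$, as desired.
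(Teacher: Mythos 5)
Your proof is correct and follows essentially the same route as the paper's: show that $\Ff_\cop(\Bb)\cap\CLip(\Xx,\RR)$ is a point-separating subalgebra via concatenation and composition, apply Stone--Weierstrass, and pass from uniform to $L^2$ density before invoking the closure rule. Your truncation of the multiplication map to make it globally Lipschitz is in fact a more careful treatment of the product step than the paper's (which composes with $(a,b)\mapsto ab$ directly despite rule (ii) requiring a Lipschitz function), so no gap here.
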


In the rest of the paper, we study several examples of PEs and corresponding set $\Bb$, that will generalize the results of \cite{Keriven2021}. We expect many other interesting characteristics of $\Ff_\cop$ to be derived in the future.

\section{Node features and Positional encodings}\label{sec:pe}

In the previous section, we have provided a complete description of the function space generated by equivariant GNNs when fed samplings of functions as node features, and the set of $\Bb$ is thus crucial for the properties of $\Ff_\cop(\Bb)$. For instance, in the absence of node features and PEs, it is classical to input \emph{constant features} to GNNs \cite{Keriven2021}, such that the space of interest is $\Ff_\cop(1)$. However, similar to the failure of the WL test on regular graphs,
if $\cop 1 \propto 1$ (e.g. constant degree function), then \emph{$\Ff_\cop(1)$ contains only constant functions}! The role of PEs is often to mitigate such situations.

\begin{definition}\label{def:Fpe}
    The \textbf{set of functions approximated by PEs} $\Ff_\PE$ is formed by all the functions $f \in L_\sqcup^2$ such that: for all $\epsilon>0$, there is $\gamma \in \Gamma$ such that
    \begin{equation}
        \mathsmaller{\PP\Big(\normMSE{\PE_\gamma(\dop,Z)-\samp{X} f} \geq \epsilon\Big) \convn 0}\, . \label{eq:Fpe}
    \end{equation}
\end{definition}
Note that, as before, $\gamma$ may depend on $\epsilon$. When passing PEs as input to GNNs, $\Ff_\PE$ serves as the base space $\Bb$, and the space of interest to characterize the functions well approximated by the whole architecture $\gnn_{\theta,\gamma}$ is therefore $\Ff_\cop(\Ff_\PE)$. In fact, by simple Lipschitz property: for any $f \in \Ff_\cop(\Ff_\PE)$ and $\epsilon>0$, there are $\theta \in \Theta, \gamma \in \Gamma$ such that
$$
     \mathsmaller{\PP\Big(\normMSE{\gnn_{\theta, \gamma}(\dop,Z)-\samp{X} f} \geq \epsilon\Big) \convn 0}
$$
In the rest of the section, we therefore aim to characterize $\Ff_\PE$ for several representative examples. We first briefly comment on observed node features, then move on to PEs. Proofs are in App.~\ref{app:pe}.

\subsection{Node features}

A first, simple example, is when observed node features are actually a sampling of some function $Z = \samp{X} f^{(0)}$. This is a convenient choice that is often adopted in the literature \cite{Keriven2020a, Keriven2021, Keriven2022a, Cordonnier2023, Levie2019}. In this case, by adopting the identity $\PE_\gamma(\dop, Z) = Z$, it is immediate that $\Ff_\PE = \{f^{(0)}\}$. A more realistic example is the presence of centered noise:
\begin{equation}\label{eq:feat-noise}
    Z = \samp{X}f^{(0)} + \nu \in \RR^{n \times d_0}
\end{equation}
where $\nu = [\nu_1,\ldots, \nu_n]$ and the $\nu_i$ are i.i.d. noise vectors with $\EE \nu_i=0$ and $\text{Cov}(\nu_i) = C_\nu$. This time, $\Ff_\PE$ cannot contain directly $f^{(0)}$, as the Law of Large Numbers (LLN) gives
\[
\mathsmaller{\normMSE{Z - \samp{X}f^{(0)}}^2 = \normMSE{\nu}^2 \convnproba \text{Tr}(C_\nu)>0}
\]
However, when applying the graph shift matrix at least once, one obtains convergent PEs.
\begin{proposition}\label{prop:feat-noise}
    Consider the adjacency matrix (ex. \ref{ex:adj}) or normalized Laplacian (ex. \ref{ex:lapl}). If the node features are a noisy sampling \eqref{eq:feat-noise} and the PE are defined $\PE_\gamma(\dop, Z) = \dop Z$, then, $\Ff_\PE = \{\cop f^{(0)}\}$.
\end{proposition}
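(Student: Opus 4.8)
The plan is to observe first that the PE map here carries no free parameters: $\PE_\gamma(\dop, Z) = \dop Z$ regardless of $\gamma$, so by Definition~\ref{def:Fpe} a function $f$ belongs to $\Ff_\PE$ if and only if $\normMSE{\dop Z - \samp{X} f} \convnproba 0$. Thus I only need to determine for which $f$ this single quantity vanishes. Using the model $Z = \samp{X} f^{(0)} + \nu$, I split
$$\dop Z - \samp{X}\cop f^{(0)} = \big(\dop \samp{X} f^{(0)} - \samp{X}\cop f^{(0)}\big) + \dop \nu.$$
The first bracket tends to $0$ in $\normMSE{\cdot}$ by Assumption~\ref{ass:cvgt-mse} (valid for both examples via Proposition~\ref{prop:cvgt-mse-example}), so everything reduces to controlling the noise term $\dop\nu$.

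The crux is to show $\normMSE{\dop\nu}\convnproba 0$, i.e. that applying the graph shift once smooths out the centered noise. Note that the naive bound $\normMSE{\dop\nu}\le\norm{\dop}\,\normMSE{\nu}$ is useless, since $\normMSE{\nu}^2\convnproba\text{Tr}(C_\nu)>0$; I must instead exploit the \emph{averaging} structure of $\dop$. I would bound the second moment by conditioning on the graph $A$ and latent positions $X$, which are independent of $\nu$. In the adjacency case $\dop=\tilde A=(n\alpha_n)^{-1}A$ each row gives $(\dop\nu)_i=(n\alpha_n)^{-1}\sum_j a_{ij}\nu_j$, and since the $\nu_j$ are centered, independent, and $a_{ij}^2=a_{ij}$,
$$\EE\big[\|(\dop\nu)_i\|^2\mid A\big]=(n\alpha_n)^{-2}\,\text{Tr}(C_\nu)\sum_j a_{ij}=\text{Tr}(C_\nu)\,\frac{d_i}{(n\alpha_n)^2},$$
where $d_i$ is the degree of node $i$. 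Averaging over $i$, $\EE\normMSE{\dop\nu}^2\lesssim \text{Tr}(C_\nu)\,\bar d/(n\alpha_n)^2$ with mean degree $\bar d = O(n\alpha_n)$; since $n\alpha_n\gtrsim\log n\to\infty$ this vanishes, and Markov's inequality gives convergence in probability. Combined with the first bracket, this yields $\cop f^{(0)}\in\Ff_\PE$.

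For the normalized Laplacian $\dop=L=D_A^{-1/2}AD_A^{-1/2}$ the same scheme applies, with the extra work of handling the random degree normalization: conditionally on $A$ one gets $\EE[\|(\dop\nu)_i\|^2\mid A]=\text{Tr}(C_\nu)\sum_j a_{ij}/(d_id_j)$, and I would control this on a high-probability event where the degrees concentrate, $d_i\gtrsim n\alpha_n d_{\min}$ (using $d_{\min}>0$ from Example~\ref{ex:lapl} together with standard degree concentration in the relatively sparse regime $\alpha_n\gtrsim(\log n)/n$), which again produces an $O(1/(n\alpha_n))$ bound. I expect this degree-control to be the main technical obstacle, as it is the only place where the randomness of $\dop$ itself enters the estimate rather than merely its averaging effect.

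Finally, for uniqueness, suppose $g\in\Ff_\PE$, so $\normMSE{\dop Z-\samp{X}g}\convnproba 0$. Combining with $\normMSE{\dop Z-\samp{X}\cop f^{(0)}}\convnproba 0$ through the triangle inequality gives $\normMSE{\samp{X}(\cop f^{(0)}-g)}\convnproba 0$. Since $\samp{X}$ samples at i.i.d. points $x_i\simiid P$, the weak law of large numbers yields $\normMSE{\samp{X}(\cop f^{(0)}-g)}^2\convnproba \norm{\cop f^{(0)}-g}_{L^2}^2$ (both functions lying in $L^2_{d_0}$, the common output dimension $d_0$ being forced by the subtraction). Hence $\norm{\cop f^{(0)}-g}_{L^2}=0$, i.e. $g=\cop f^{(0)}$ in $L^2$, which together with the previous step proves $\Ff_\PE=\{\cop f^{(0)}\}$.
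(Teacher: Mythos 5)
Your proposal is correct, and it reaches the conclusion by a genuinely different route for the key step. Both you and the paper begin with the same split $\dop Z - \samp{X}\cop f^{(0)} = (\dop\samp{X}f^{(0)} - \samp{X}\cop f^{(0)}) + \dop\nu$ and dispatch the first bracket via Proposition~\ref{prop:cvgt-mse-example}; your uniqueness argument (triangle inequality plus the LLN identification of $\normMSE{\samp{X}(\cdot)}$ with the $L^2$ norm) is also sound, and is a step the paper leaves implicit. The divergence is in handling $\normMSE{\dop\nu}$. The paper writes $\dop\nu = (\dop - W)\nu + W\nu$, kills the first piece using the operator-norm concentration $\norm{\dop - W}\convnproba 0$ (Theorem~\ref{thm:concentration-opnorm}) together with the boundedness in probability of $\normMSE{\nu}$, and treats $W\nu$ as an empirical average of the centered i.i.d.\ summands $w_\cop(\cdot,x_j)\nu_{j\ell}$ via the uniform law of large numbers of Lemma~\ref{lem:chaining}. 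You instead compute the conditional second moment of $\dop\nu$ given $(A,X)$ directly, exploiting $a_{ij}^2 = a_{ij}$ and the centering and independence of the $\nu_j$, obtaining the explicit bound $\EE\normMSE{\dop\nu}^2 \lesssim \mathrm{Tr}(C_\nu)/(n\alpha_n)$ and concluding by Markov. Your route is more elementary and quantitative, and it only requires the noise to have finite covariance, whereas the paper's appeal to Lemma~\ref{lem:chaining} formally requires the summands $w_\cop(\cdot,x_j)\nu_{j\ell}$ to be uniformly bounded (i.e.\ bounded noise, or an additional truncation step). The price you pay is the separate degree-concentration argument in the normalized-Laplacian case, $d_i \gtrsim n\alpha_n d_{\min}$ with high probability, which the paper's route absorbs into the single statement $\norm{L - \bar W}\convnproba 0$; that concentration does hold in the regime $\alpha_n \geq C(\log n)/n$ assumed in Example~\ref{ex:lapl}, so your plan goes through as stated.
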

Of course this may not be the only possibility for removing noise from node features, and moreover it is not clear how realistic the node features model \eqref{eq:feat-noise} actually is. The study of more refined models linking graph structure and node features is a major path for future work.

\subsection{Positional Encodings}

In this section, we consider classical PEs computed solely from the graph structure and show how they articulate with our framework. We consider two examples that are the most-often used in the literature: PEs as eigenvectors of the graph shift matrix \cite{Dwivedi2020a, Dwivedi2021} (actually a recent variant that account for sign indeterminancy \cite{Lim2022}), and PE based on distance-encoding \cite{Li2020} (again a variant that, as we will see, generalize other architectures \cite{Vignac2020a}).
For most of the results below, we will focus on two representative cases of kernels, that include many practical examples:
\begin{myexample}[Stochastic Block Models]\label{ex:sbm}
    In this case, the space of latent variables $\Xx = \{1,\ldots, K\}$ is finite, each element correspond to a community label. The kernel $w$ is represented by a matrix $C \eqdef [w(\ell, k)] \in \RR_+^{K\times K}$ that gives the probability of connection between communities $\ell$ and $k$, and $P\in \RR_+^K$ is a probability vector of size $K$ that sum to $1$.
\end{myexample}
\begin{myexample}[P.s.d. kernel]\label{ex:psd}
    Here we assume that $w$ is positive semi-definite (p.s.d.). This includes for instance the Gaussian kernel.
\end{myexample}

For any symmetric matrix (resp. self-adjoint compact operator) $M$, we denote by $\lambda_i^M$ its eigenvalues and $u^M_i$ its eigenvectors (resp. eigenfunctions), with any arbitrary choice of sign or basis here. Since in all our examples operators are either p.s.d. or finite-rank, the eigenvalues are ordered as such: first the non-zero eigenvalues by decreasing order (from positive to negative), then all zero eigenvalues.

\subsubsection{Eigenvectors and SignNet}

It has been proposed \cite{Dwivedi2020a, Dwivedi2021} to feed the first $q$ eigenvectors of the graph into the GNN, for a fixed $q$. A potential problem with this approach is the sign ambiguity of the eigenvectors, or even the basis ambiguity in case of eigenvalues with multiplicities. Here we consider only the sign ambiguity for simplicity: we will assume that the first eigenvalue of $\cop$ are distinct. The sign ambiguity was alleviated in \cite{Lim2022} by taking a \emph{sign-invariant} function: considering an eigenvector $u_i^\dop$ of $\dop$,
\begin{equation}\label{eq:signnet}
    (\signop f)(u_i^\dop) \eqdef f(u_i^\dop) + f(-u_i^\dop) \in \RR^{n\times p}
\end{equation}
where $f:\RR\to\RR^p$ is a function applied to each coordinate of $u_i^\dop$ to preserve permutation-equivariance. The resulting function is sign-invariant, and one can parameterized $f$.
Given the first $q$ eigenvectors $u^\dop_i$ and a collection of MLPs $\MLP_{\gamma_i}:\RR\to \RR^{p_i}$ for some output dimensions $p_i$, the PE considered in this subsection concatenates the outputs:
\begin{equation}\label{eq:signnet-pe}
    \PE_{\gamma}(\dop) = [(\signop\MLP_{\gamma_i})(\sqrt{n} u^\dop_i)]_{i=1}^q \in \RR^{n \times p}
\end{equation}
where $p=\sum_{i=1}^q p_i$ and the MLP are applied element-wise. The parameter $\gamma$ gathers the $\gamma_i$.
The equation \eqref{eq:signnet-pe} involves a renormalization of the eigenvectors $u^\dop$ by the square root of the size of the graph $\sqrt{n}$: indeed, as $u^\dop_i$ is normalized \emph{in $\RR^n$}, this is necessary for consistency across different graph sizes. See Sec.~\ref{sec:normalization} for a discussion and some numerical illustrations.

\begin{figure}
    \centering
    \includegraphics[width=.24\textwidth]{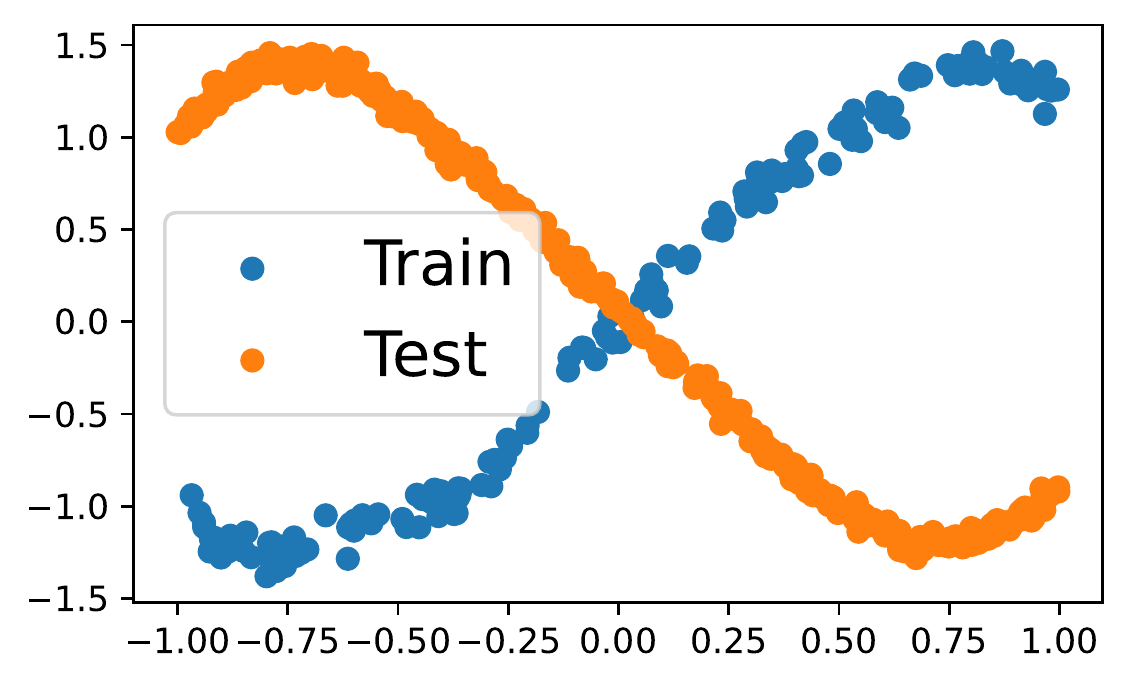}
    \includegraphics[width=.24\textwidth]{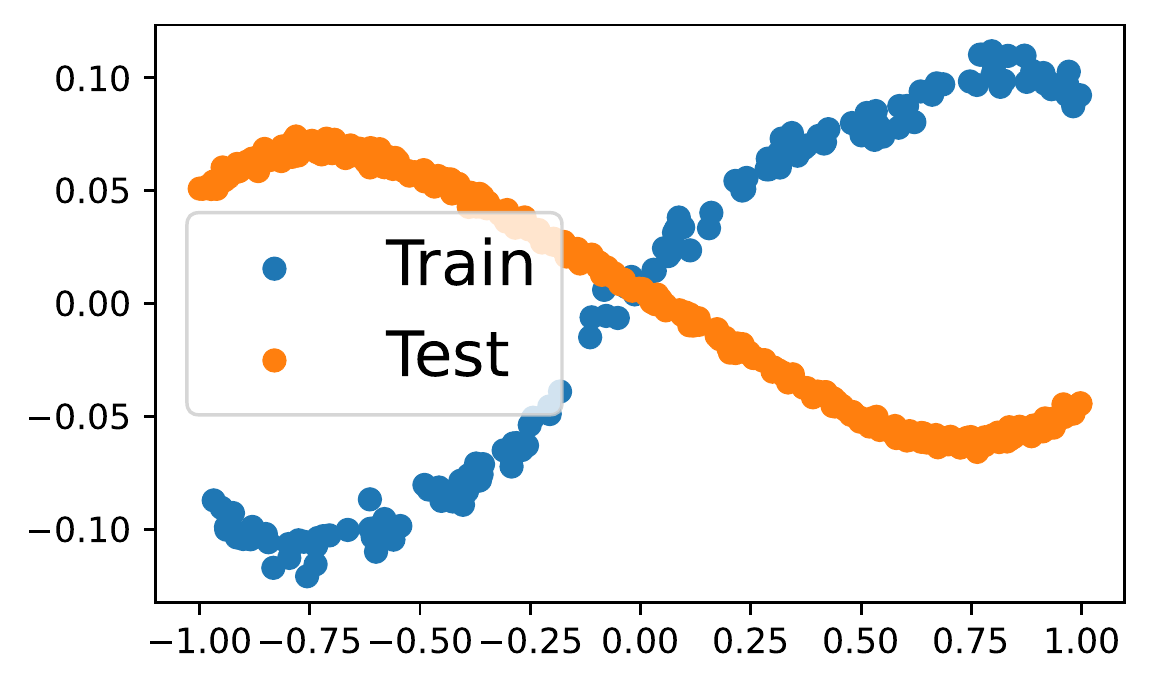}
    \includegraphics[width=.24\textwidth]{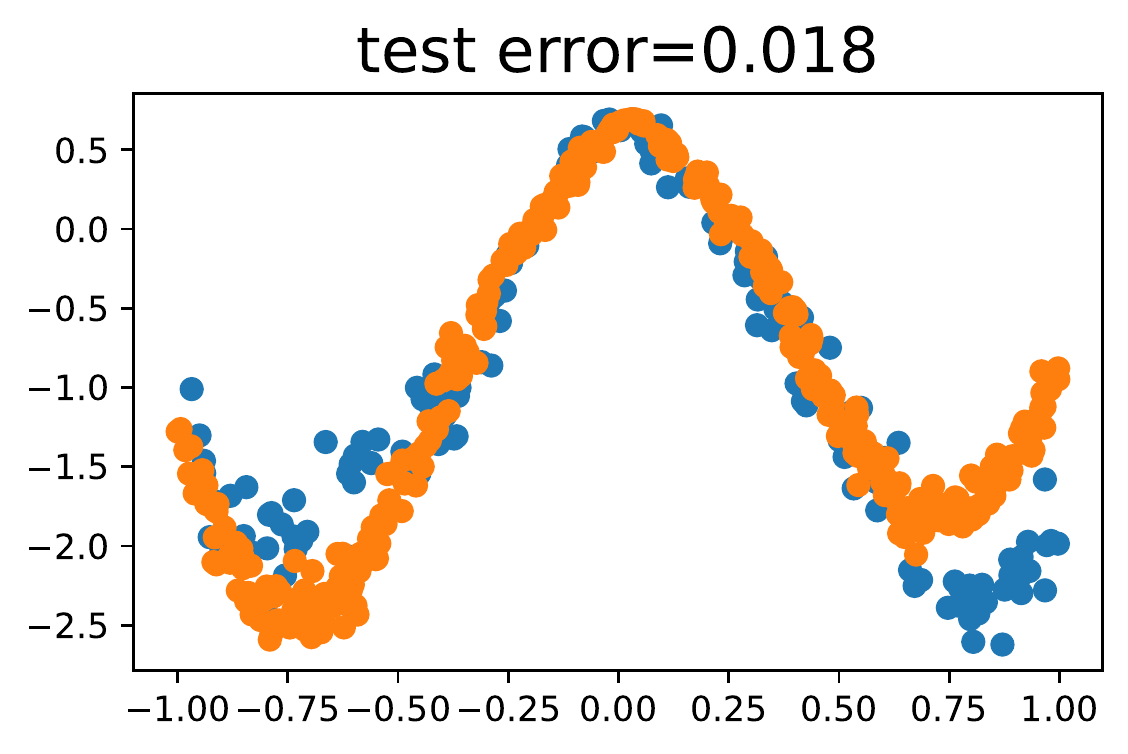}
    \includegraphics[width=.24\textwidth]{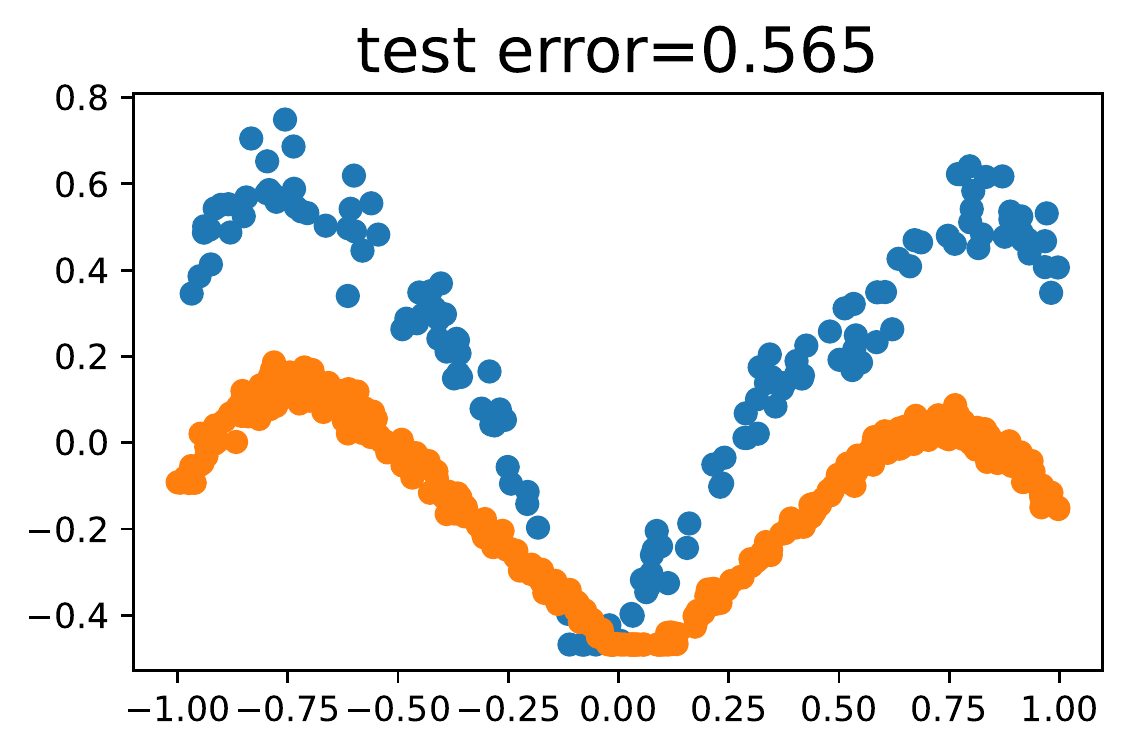}
    \caption{\small Illustration of the role of the SignNet architecture and of the renormalization by $\sqrt{n}$ of the eigenvectors on synthetic data, with a latent space $\Xx=[-1,1]$ ($x$-axis), a Gaussian kernel $w$, and uniform distribution $P$. Blue dots represent a graph from the training set, orange dot a test graph that is twice bigger. \textbf{From left to right:} eigenvectors with renormalization (with a different sign for the two graphs), eigenvectors without, PEs with, and PEs without, with the regression test errors of a GNN trained using these PE with or without renormalization. We observe that SignNet indeed fixed the sign ambiguity. The absence of renormalization yields unconsistent PEs across graphs of different sizes, which results in a high test error on test graphs than training graphs.}
    \label{fig:spectralnorm}
\end{figure}

As can be expected, the eigenvectors of $\dop$ generally converge to the eigenfunctions of $\cop$, under a spectral gap assumption. We provide the theorem below which handles all of our running examples. We suppose that the relevant eigenvalues have single multiplicities, to only have sign ambiguity.

\begin{theorem}\label{thm:signnet}
    Consider either SBM (ex. \ref{ex:sbm}) or p.s.d. kernel (ex. \ref{ex:psd}), and either adjacency matrix (ex. \ref{ex:adj}) or normalized Laplacian (ex. \ref{ex:lapl}). Fix $q$, assume the first $q+1$ eigenvalues $\lambda^\cop_1,\ldots, \lambda^\cop_{q+1}$ of $\cop$ are two-by-two distinct. We define
    \begin{equation}\label{eq:feig}
        \Feig\eqdef \left\lbrace [(\signop f_i) \circ u^\cop_i]_{i=1}^q ~|~ f_i \in \CLip(\RR,\RR^{p_i}), p_i \in \NN^*\right\rbrace
    \end{equation}
    Then $\Ff_\PE = \overline{\Feig}$.
\end{theorem}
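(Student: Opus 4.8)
The plan is to reduce both inclusions to a single \emph{spectral convergence lemma}: for each $i=1,\dots,q$, up to a sign $s_i\in\{-1,+1\}$ (eigenvectors being defined only up to sign), the renormalized eigenvectors of $\dop$ converge to the sampled eigenfunctions of $\cop$,
\begin{equation*}
    \min_{s_i=\pm 1}\normMSE{\sqrt{n}\,u^\dop_i - s_i\,\samp{X}u^\cop_i}\convnproba 0 .
\end{equation*}
This is the technical heart of the theorem, and I would prove it in two stages using the operator-norm concentration bound (Thm.~\ref{thm:concentration-opnorm}). First, compare $\dop$ to the sampled kernel matrix with entries $\tfrac1n w_\cop(x_j,x_k)$, whose nonzero eigenpairs are the natural discretization of those of $\cop$; concentration gives operator-norm closeness with probability $\to 1$. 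Second, since the assumption that $\lambda^\cop_1,\dots,\lambda^\cop_{q+1}$ are two-by-two distinct guarantees a uniform spectral gap around each of the first $q$ (simple) eigenvalues, a Davis--Kahan $\sin\Theta$ argument transfers operator-norm closeness into convergence of the individual eigenvectors, while a Nyström/LLN argument identifies the eigenvectors of the sampled matrix with $\tfrac1{\sqrt n}\samp{X}u^\cop_i$ (the factor $\sqrt n$ reflecting that $\samp{X}u^\cop_i$ has norm $\approx\sqrt n$ whereas $u^\dop_i$ is unit-norm in $\RR^n$). The residual sign ambiguity is exactly the $\min$ over $s_i$.

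Granting this lemma, the inclusion $\overline{\Feig}\subset\Ff_\PE$ follows by approximation. Fix $f=[(\signop f_i)\circ u^\cop_i]_{i=1}^q\in\Feig$ and $\epsilon>0$. Each $u^\cop_i$ is continuous on the compact $\Xx$, hence has compact (symmetric) range, so by the universality of ReLU MLPs I may pick $\MLP_{\gamma_i}$ with $\sup|\MLP_{\gamma_i}-f_i|$ arbitrarily small on that range. The corresponding PE \eqref{eq:signnet-pe} satisfies, coordinatewise and for any sign, $(\signop\MLP_{\gamma_i})(\sqrt n u^\dop_i)=(\signop\MLP_{\gamma_i})(s_i\sqrt n u^\dop_i)$ by sign-invariance of $\signop$. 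I then chain three $\normMSE{\cdot}$ estimates: the global Lipschitzness of $\signop\MLP_{\gamma_i}$ (a ReLU MLP is globally Lipschitz, so no $L^\infty$ control of the eigenvectors is needed) together with the spectral lemma controls $(\signop\MLP_{\gamma_i})(s_i\sqrt n u^\dop_i)-(\signop\MLP_{\gamma_i})(\samp{X}u^\cop_i)$; the latter equals \emph{exactly} $\samp{X}((\signop\MLP_{\gamma_i})\circ u^\cop_i)$; and the uniform MLP approximation controls $\samp{X}((\signop\MLP_{\gamma_i})\circ u^\cop_i)-\samp{X}((\signop f_i)\circ u^\cop_i)$. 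Summing over $i$ shows $f\in\Ff_\PE$, and since $\Ff_\PE$ is closed (a triangle-inequality-plus-LLN argument), $\overline{\Feig}\subset\Ff_\PE$.

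For the reverse inclusion $\Ff_\PE\subset\overline{\Feig}$, take $f\in\Ff_\PE$ and $\epsilon>0$, with $\gamma$ such that $\normMSE{\PE_\gamma(\dop)-\samp{X}f}<\epsilon$ with probability $\to 1$. Running the same three estimates in reverse shows $\PE_\gamma(\dop)$ is $\normMSE{\cdot}$-close to $\samp{X}g$ where $g\eqdef[(\signop\MLP_{\gamma_i})\circ u^\cop_i]_{i=1}^q\in\Feig$ (ReLU MLPs being Lipschitz). Hence $\normMSE{\samp{X}(g-f)}<2\epsilon$ with high probability, and since $\normMSE{\samp{X}(g-f)}\convnproba\norm{g-f}_{L^2}$ by the LLN, we obtain $\norm{g-f}_{L^2}\le 2\epsilon$. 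As $\epsilon$ is arbitrary, $\mathrm{dist}_{L^2}(f,\Feig)=0$, i.e.\ $f\in\overline{\Feig}$.

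The main obstacle is the spectral convergence lemma, and specifically its second stage: turning operator-norm proximity into eigenvector proximity requires the quantitative gap furnished by distinctness of $\lambda^\cop_1,\dots,\lambda^\cop_{q+1}$, and the two running cases must be handled uniformly (for the p.s.d.\ kernel $\cop$ is compact self-adjoint with continuous eigenfunctions; for the SBM $\Xx$ is finite and the ``eigenfunctions'' are piecewise constant). Everything downstream is routine Lipschitz/universality/LLN bookkeeping, with the one genuinely useful structural fact being that global Lipschitzness of ReLU MLPs lets the perturbation estimates bypass any uniform $L^\infty$ control on the eigenvectors.
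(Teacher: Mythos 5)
Your proposal follows essentially the same route as the paper: reduce both inclusions to the spectral convergence $\min_{s=\pm1}\normMSE{s\sqrt{n}\,u_i^\dop-\samp{X}u_i^\cop}\convnproba 0$ obtained from operator-norm concentration of $\dop$ around the Gram matrix $W$ plus Davis--Kahan under the eigenvalue-gap assumption, then prove $\overline{\Feig}\subset\Ff_\PE$ by MLP universality, sign-invariance of $\signop$, global Lipschitzness of ReLU MLPs, and closedness of $\Ff_\PE$, and the reverse inclusion by using the LLN to convert the MSE bound into an $L^2$ bound so that $f$ lies in the closure of $\Feig$. The only step you merely gesture at --- identifying $u_i^W$ with $n^{-1/2}\samp{X}u_i^\cop$ --- is where the paper invests most of its effort (RKHS machinery following Rosasco et al.\ in the p.s.d.\ case, an explicit community-matrix orthonormalization in the SBM case), but you correctly single it out as the technical heart and your intended tool (a Nystr\"om/LLN identification, handled separately in the two cases) is exactly the paper's.
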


Hence $\Ff_\PE$ contains the eigenfunctions of $\cop$, modified by the SignNet architecture to account for the sign indeterminancy. We further discuss this space in Sec.~\ref{sec:normalization}. An illustration is provided in Fig.~\ref{fig:spectralnorm}.

\subsubsection{Distance-encoding PEs}

In \cite{Li2020}, the authors propose to define PEs through the aggregation of a set of ``distances'' $\xi(i,j)$ from each node $i$ to a set $j \in V_T$ of target nodes (typically, labelled nodes in semi-supervised learning, or anchor nodes selected randomly \cite{You2019}):
$$
(\PE_\gamma)_{i,:} = \text{AGG}(\{\xi(i,j)~|~ j \in V_T\})
$$
where $\text{AGG}$ is an \emph{aggregation} function that acts on (multi-)sets, and $\xi(i,j)$ is selected in \cite{Li2020} as random-walk based distances $\xi(i,j) = [(AD_A^{-1})_{ij}, \ldots, ((AD_A^{-1})^q)_{ij}] \in \RR^q$. For simplicity, since here we do not consider any particular set of target nodes, we just consider $V_T=V$ the set of all nodes. Moreover, to use our convergence results, we replace the random walk matrix with our graph shift matrix $\dop$. As aggregation, we opt for the deep-set architecture \cite{Zaheer2017}, which applies an MLP on each $\xi(i,j)$ then a sum. Deep sets can approximate any permutation-invariant function. As we will see below, with the proper normalization to ensure convergence, we obtain:
$$
\mathsmaller{\PE_\gamma = \frac1{n} \sum_j \MLP_{\gamma}\pa{n \cdot [\dop e_j, \ldots, \dop^q e_j]} \in \RR^{n \times q}}
$$
where $\MLP_\gamma: \RR^{q} \to \RR^p$ is applied row-wise and $e_j \in \RR^n$ are one-hot basis vectors.
We note that a similar architecture was proposed in a different line of work: it was called Structured Message Passing by \cite{Vignac2020a}, or Structured GNN by \cite{Keriven2021}. In these works, the inspiration is to give nodes unique identifiers, \emph{e.g.}, one-hot encodings $e_i$. However, this process is not equivariant. To restore equivariance, \cite{Vignac2020a} propose a deep-set pooling in the ``node-id'' dimension
$
    \PE_\gamma(\dop) = \sum_j \gnn_\gamma(\dop, e_j)
$,
where $\gnn_\gamma$ is itself a permutation-equivariant GNN, and the equivariance of $\PE_\gamma$ is restored. By choosing $\gnn_\gamma(\cop, e_j) = n^{-1}\MLP_{\gamma}\pa{n\cdot [\dop e_j, \ldots, \dop^q e_j]}$ (which is a valid choice for a message-passing GNN), we obtain exactly distance-encoding PEs above.

In \cite{Keriven2021}, powerful universality results were shown for this choice of architecture \emph{in the case of non-random edges} $a_{ij} = w(x_i,x_j)$ and $q=1$. With our notations, they implicitely studied PE functions of the following form: $\int f(w(\cdot, x))dP(x)$. This allows to \emph{modify the values of the kernel} before computing the degree function, and can therefore break potential indeterminancy such as constant degrees. 
Unfortunately, their proof technique and the concentration inequalities they use are \emph{not true anymore for Bernoulli random edges}, which are far more realistic than deterministic weighted edges. Here we show that for a large class of kernels, concentration can be restored when we add an MLP filter on the eigenvalues of $\dop$ with ReLU. Our definition of distance-encoding PEs is therefore:
\begin{equation}\label{eq:distPE}
    \mathsmaller{\PE_\gamma = \frac1{n} \sum_j \MLP_{\gamma_1}\pa{n \cdot [\dop_{\gamma_2} e_j, \ldots, \dop_{\gamma_2}^q e_j]}}
\end{equation}
where $\dop_{\gamma_2} \eqdef h_{\MLP_{\gamma_2}}(\dop)$ is a filter that applies an MLP $\MLP_{\gamma_2}$ on the eigenvalues of $\dop$.

\begin{theorem}\label{thm:distPE}
    Consider either SBM (ex. \ref{ex:sbm}) or p.s.d. kernel (ex. \ref{ex:psd}), and either adjacency matrix (ex. \ref{ex:adj}) or normalized Laplacian (ex. \ref{ex:lapl}). Consider the PE \eqref{eq:distPE}. We define
    \begin{equation}\label{eq:Fdist}
        \Fdist \eqdef \left\lbrace \int f([\cop \delta_x (\cdot), \ldots, \cop^q \delta_x (\cdot)])dP(x) ~|~ f \in \CLip([0,1]^q, \RR^p), p \in \NN^*\right\rbrace
    \end{equation}
    where $\cop \delta_x \eqdef \{z \mapsto w_\cop(z, x)\}$ by abuse of notation. Then $\Fdist \subset \Ff_\PE$.
\end{theorem}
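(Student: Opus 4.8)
The plan is to realize a target $g \in \Fdist$, say $g(z) = \int f([\cop\delta_x(z),\ldots,\cop^q\delta_x(z)])\,dP(x)$ with $f \in \CLip([0,1]^q,\RR^p)$, as a limit of the discrete PE \eqref{eq:distPE} in the sense of Definition~\ref{def:Fpe}. The whole argument rests on one identification: after rescaling by $n$, the entries of powers of the \emph{filtered} shift matrix should converge to the continuous power kernels, $n(\dop_{\gamma_2}^k)_{ij} \approx \cop_r^k\delta_{x_j}(x_i)$, where $\cop_r = \sum_{m=1}^r \lambda_m^\cop\, u_m^\cop\otimes u_m^\cop$ is the rank-$r$ spectral truncation of $\cop$ and the filter $h_{\MLP_{\gamma_2}}$ has been tuned to isolate the top-$r$ signal eigenvalues from the noisy bulk. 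Once this holds, choosing $\MLP_{\gamma_1}$ to approximate $f$ turns the discrete PE into an empirical mean over the $x_j$, which by a law of large numbers converges to the integral defining $g$. Since each step introduces an error that can be made arbitrarily small, this yields $\Fdist \subset \Ff_\PE$.

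First I would reduce to finite rank. For the SBM (resp.\ p.s.d.) case $\cop$ has rank $\le K$ (resp.\ is compact with summable nonnegative eigenvalues), so the truncated target $g_r(z) = \int f([\cop_r\delta_x(z),\ldots,\cop_r^q\delta_x(z)])\,dP(x)$ satisfies $\norm{g-g_r}_{L^2}\to 0$ along a sequence of cut indices $r$ sitting at spectral gaps $\lambda_r^\cop>\lambda_{r+1}^\cop$ (exactly $g_r=g$ for SBM with $r=K$), using $\cop_r^k\to\cop^k$ and that $f$ is Lipschitz with arguments in the fixed compact $[0,1]^q$. I fix such an $r$ making this truncation error small. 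The assumed separation of the first $q+1$ eigenvalues of $\cop$, together with the operator-norm concentration $\norm{\dop-\cop}\to 0$ (Thm~\ref{thm:concentration-opnorm}) valid down to the relatively sparse regime, produces a spectral gap isolating $\lambda_1^\cop,\ldots,\lambda_r^\cop$ from an asymptotically vanishing bulk. I would then choose the eigenvalue filter $\MLP_{\gamma_2}$ to realize a piecewise-linear (ReLU) function $h$ equal to the identity on small neighborhoods of each $\lambda_m^\cop$, $m\le r$, and zero on a neighborhood of $0$ containing the whole bulk (noise eigenvalues and the genuine tail $\lambda_{>r}^\cop$); on the high-probability spectral event this makes $\dop_{\gamma_2}=h(\dop)$ effectively the rank-$r$ operator $\sum_{m\le r} h(\lambda_m^\dop)\,u_m^\dop (u_m^\dop)^\top$ with $h(\lambda_m^\dop)\to\lambda_m^\cop$.

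The core step is the concentration $\tfrac1{n^2}\normFro{n\dop_{\gamma_2}^k - K^{(k)}}^2 \convnproba 0$ for $k=1,\ldots,q$, where $K^{(k)}_{ij}=\cop_r^k\delta_{x_j}(x_i)$. Writing both objects as sums of $r$ rank-one terms, $n\dop_{\gamma_2}^k=\sum_{m\le r} h(\lambda_m^\dop)^k(\sqrt n u_m^\dop)(\sqrt n u_m^\dop)^\top$ and $K^{(k)}=\sum_{m\le r}(\lambda_m^\cop)^k(\samp X u_m^\cop)(\samp X u_m^\cop)^\top$, this reduces to the rescaled eigenvector convergence $\normMSE{\sqrt n u_m^\dop - \samp X u_m^\cop}\to 0$ (up to sign) together with $h(\lambda_m^\dop)\to\lambda_m^\cop$; the former is exactly the spectral convergence underlying Theorem~\ref{thm:signnet} (Davis--Kahan combined with sampling concentration). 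It is essential that the filter has suppressed the noisy eigenvalues, since the \emph{unfiltered} $n\dop^k$ does not concentrate (e.g.\ its diagonal scales like $\alpha_n^{-1}$), and this is precisely where the new concentration inequalities for Bernoulli, relatively sparse graphs are required; I expect this to be the main obstacle.

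With this in hand I would conclude. Choosing $\MLP_{\gamma_1}$ to approximate $f$ on the relevant compact range and using its Lipschitz continuity to transport the entrywise convergence through the averaging over $j$, Cauchy--Schwarz gives
\[
\normMSE{\PE_\gamma-\samp X g_r}^2 \le C\,\sum_{k=1}^q \frac{1}{n^2}\normFro{n\dop_{\gamma_2}^k - K^{(k)}}^2 + \varepsilon_{\mathrm{app}} + \varepsilon_{\mathrm{LLN}} ,
\]
where $C$ depends on $\mathrm{Lip}(\MLP_{\gamma_1})$, $\varepsilon_{\mathrm{app}}$ is the approximation error of $f$ by $\MLP_{\gamma_1}$, and $\varepsilon_{\mathrm{LLN}}\convnproba 0$ is the law-of-large-numbers fluctuation converting the empirical mean $\tfrac1n\sum_j f([\cop_r\delta_{x_j}(x_i),\ldots,\cop_r^q\delta_{x_j}(x_i)])$ into $g_r(x_i)$ uniformly in the averaged ($\normMSE{}$) sense. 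Collecting the truncation, filter, MLP-approximation, concentration, and LLN errors and driving them below $\epsilon$ with probability tending to $1$ then yields $g\in\Ff_\PE$, hence $\Fdist\subset\Ff_\PE$.
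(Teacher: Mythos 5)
Your overall architecture matches the paper's: use the ReLU eigenvalue filter to kill the noisy bulk, transport the resulting entrywise concentration through the Lipschitz MLP $\MLP_{\gamma_1}$, and finish with a law of large numbers for the outer average over $j$. Your observation that the unfiltered powers $n\dop^k$ do not concentrate is also correct and is exactly the paper's motivation for the filter. However, your ``core step'' takes a genuinely different route from the paper, and as written it has a gap. You reduce $\tfrac1{n}\normFro{n\dop_{\gamma_2}^k - K^{(k)}}\to 0$ to \emph{per-eigenvector} convergence $\min_s\normMSE{s\sqrt n\, u_m^\dop - \samp X u_m^\cop}\to 0$ via a rank-one decomposition. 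This requires every retained eigenvalue $\lambda_1^\cop,\dots,\lambda_r^\cop$ to be simple (Davis--Kahan only controls individual eigenvectors under pairwise separation), but Theorem~\ref{thm:distPE} makes \emph{no} simplicity assumption — that hypothesis belongs to Theorem~\ref{thm:signnet}, and you import it here without justification. Even if it held, it would not suffice: the $q$ in the distance PE counts matrix powers, not eigenvectors, and your truncation rank $r$ (everything above the spectral cutoff $\lambda_\epsilon$) can far exceed $q$. Repeated eigenvalues are not a pathology in this setting: symmetric SBMs and rotation-invariant p.s.d.\ kernels on spheres (explicitly covered by the paper's universality corollaries, with multiplicities $N(d,k)>1$ from spherical harmonics) all violate simplicity, so your argument fails on instances the theorem is meant to cover.

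The paper avoids this entirely. Its Theorem~\ref{thm:usvt-mlp-detailed} compares the filtered matrix to the \emph{full} Gram matrix $W=[w_\cop(x_i,x_j)/n]$ in Frobenius norm, bounding $\normFro{\hat\dop - G}\le\sqrt{2|T|}\,\norm{\hat\dop - G}$ and controlling the operator norm only through the \emph{magnitudes} of discarded eigenvalues ($\sup_{i\in T^c}|\lambda_i^\dop|$, $\sup_{i\in T^c}|\lambda_i^W|$) plus $\norm{\dop-W}$ — no eigenvector alignment, hence no simplicity needed; the only spectral condition is that the threshold sits in a gap between retained and discarded magnitudes. The passage from $n(W^k)_{ij}$ to $\cop^k\delta_{x_j}(x_i)$ is then handled by a direct uniform-LLN/chaining argument on iterated kernel sums (Lemma~\ref{lem:chainingk}), again with no spectral decomposition of $\cop$ and no truncation of the target. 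Your route could in principle be repaired by replacing individual eigenvectors with spectral projectors onto groups of eigenvalues and proving a sampled-eigenspace analogue of your convergence, but that is essentially reconstructing the paper's Frobenius-norm bound by a longer path. As submitted, the reliance on simple eigenvalues is a genuine gap.
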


Note that here we only have an inclusion $\Fdist \subset \Ff_\PE$ instead of an equality as in Thm.~\ref{thm:signnet}: indeed, we show that the PE \eqref{eq:distPE} can approximate functions in $\Fdist$, but they may converge to other functions. Nevertheless, as a consequence of our analysis, all the universality results of \cite[Sec. 5.3]{Keriven2021} are valid with the choice of PE \eqref{eq:distPE}, see Appendix \ref{app:universality} for a reminder using our notations. This is a strict, and non-trivial improvement over \cite{Keriven2021}, as their results were only derived for non-random edges. For this, Theorem \ref{thm:distPE} relies mostly on a new concentration inequality for Bernoulli matrices with ReLU filters in Frobenius norm, that we give below since it is of independent interest.

\begin{theorem}\label{thm:usvt-mlp}
    Consider either SBM (ex. \ref{ex:sbm}) or p.s.d. kernel (ex. \ref{ex:psd}), and either adjacency matrix (ex. \ref{ex:adj}) or normalized Laplacian (ex. \ref{ex:lapl}). Define the Gram matrix $W = [w_\cop(x_i,x_j)/n]_{ij}$. For all $\epsilon>0$, there is an MLP filter $\dop_\gamma = h_{\MLP_\gamma}(\dop)$ such that
    \begin{equation*}
        \PP(\normFro{\dop_\gamma - W} \geq \epsilon) \to 0 .
    \end{equation*}
\end{theorem}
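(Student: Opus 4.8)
The plan is to realize the filter as a spectral \emph{singular-value-thresholding} operation (in the spirit of Chatterjee's USVT), exploiting that the target $W$ is exactly (SBM) or approximately (p.s.d.\ kernel) low-rank, while $\dop$ concentrates around $W$ in \emph{operator} norm. The tension to overcome is that operator-norm concentration is far too weak for a Frobenius statement: writing $\Delta \eqdef \dop - W$, we have $\norm{\Delta}\to 0$ whp but $\normFro{\Delta}$ may stay of order $1$, since $\Delta$ carries $\Theta(n)$ small ``noise'' eigenvalues. The role of the ReLU filter is precisely to annihilate these noise directions \emph{exactly}, keeping only a fixed number of signal directions, on which Frobenius and operator norms are comparable.

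First I would invoke the operator-norm bound $\norm{\dop - W} \convnproba 0$ valid for $\alpha_n \gtrsim (\log n)/n$ (Thm.~\ref{thm:concentration-opnorm}), together with $\norm{\dop}$ bounded whp: in the adjacency case this is the standard Bernoulli concentration after removing the $O(1/n)$ diagonal of $W$; in the Laplacian case one additionally uses degree concentration (here $d_{\min}>0$ enters). By Weyl's inequality the eigenvalues of $\dop$ then lie within $\norm{\Delta}$ of those of $W$, and the latter converge to the eigenvalues $\lambda_i^\cop$ of $\cop$. For the SBM, $W = \tfrac1n H C H^\top$ with $H\in\{0,1\}^{n\times K}$ is exactly of rank $r\le K$, so there is a fixed gap $\delta>0$ between its nonzero eigenvalues (bounded away from $0$, as they converge to the nonzero $\lambda_i^\cop$) and its null space. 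For a p.s.d.\ kernel, $\cop$ is Hilbert--Schmidt, so $\normFro{W}^2=\tfrac1{n^2}\sum_{ij}w_\cop(x_i,x_j)^2 \to \norm{\cop}_{\mathrm{HS}}^2=\sum_i(\lambda_i^\cop)^2$ by the law of large numbers; given $\epsilon$, I pick a fixed $r$ with a strict drop $\lambda_r^\cop>\lambda_{r+1}^\cop$ (which exists since the eigenvalues decrease to $0$) and $\sum_{i>r}(\lambda_i^\cop)^2<\epsilon^2/4$, yielding both a gap $\delta>0$ at level $r$ and a negligible tail $\normFro{W-\Pi_W W}\to(\sum_{i>r}(\lambda_i^\cop)^2)^{1/2}$, where $\Pi_W$ is the top-$r$ eigenprojection of $W$.

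Next I take $h$ to be the fixed piecewise-linear ``dead-zone'' map equal to $\lambda$ for $|\lambda|\ge\tau$ and $0$ for $|\lambda|\le\tau_0$, with $\tau,\tau_0$ strictly between the noise level $o(1)$ and the signal level $\delta$ (e.g.\ $\tau=\tfrac12(\lambda_r^\cop+\lambda_{r+1}^\cop)$ in the p.s.d.\ case, any $\tau_0,\tau\in(0,\delta)$ for the SBM); such an $h$ is realized exactly by a ReLU $\MLP_\gamma$ that is independent of $n$. Whp all signal eigenvalues of $\dop$ exceed $\tau$ in magnitude and all others fall below $\tau_0$, so the noise spectrum is killed exactly and $h_{\MLP_\gamma}(\dop)=\Pi_\dop\dop\Pi_\dop$, the rank-$r$ signal part of $\dop$, with $\Pi_\dop$ its top-$r$ eigenprojection. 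It then remains to bound $\normFro{\Pi_\dop\dop\Pi_\dop - W}$. Davis--Kahan applied to $\dop=W+\Delta$ with gap $\delta$ gives $\norm{\Pi_\dop-\Pi_W}\le 2\norm{\Delta}/\delta\to 0$, hence $\normFro{\Pi_\dop-\Pi_W}\le\sqrt{2r}\,\norm{\Pi_\dop-\Pi_W}\to 0$ since $r$ is fixed. Splitting
$$\Pi_\dop\dop\Pi_\dop - \Pi_W W\Pi_W = \Pi_\dop\dop(\Pi_\dop-\Pi_W) + (\Pi_\dop-\Pi_W)\dop\Pi_W + \Pi_W\Delta\Pi_W$$
and using $\normFro{AB}\le\norm{A}\,\normFro{B}$ together with $\normFro{\Pi_W\Delta\Pi_W}\le\sqrt r\,\norm{\Delta}$ (rank $r$) shows each term $\to 0$. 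For the SBM $\Pi_W W\Pi_W=W$ and we conclude; for the p.s.d.\ case we add the tail $\normFro{\Pi_W W\Pi_W-W}<\epsilon/2$ from the previous step. Collecting the $o(1)$ terms, whp $\normFro{h_{\MLP_\gamma}(\dop)-W}<\epsilon$, which is the claim.

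The crux is the passage from operator- to Frobenius-norm control. It hinges on (i) a genuine spectral gap separating a fixed number of signal eigenvalues from the $\Theta(n)$ noise eigenvalues, so that the ReLU filter can suppress the noise block \emph{exactly} rather than merely shrink it, and (ii) keeping the filter $\gamma$ fixed and $n$-independent while the spectrum of $\dop$ is random --- handled by anchoring $\tau,\tau_0$ at the population quantities $\lambda_i^\cop$ and invoking the concentration, Weyl, and Davis--Kahan estimates only through high-probability events. The p.s.d.\ case additionally requires trading off the truncation rank $r$ (tail error) against the shrinking gap as $r$ grows, which is exactly where Hilbert--Schmidt summability of the spectrum of $\cop$ is used.
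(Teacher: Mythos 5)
Your proposal is correct and follows the same core strategy as the paper's proof of Theorem~\ref{thm:usvt-mlp} (Theorem~\ref{thm:usvt-mlp-detailed} in App.~\ref{app:distance}): truncate the spectrum of $\dop$ at a threshold placed in a gap of the spectrum of $\cop$, realize this truncation \emph{exactly} by a ReLU dead-zone filter on a high-probability event, and convert operator-norm concentration into a Frobenius bound by splitting into a low-rank signal part (where the two norms differ only by a $\sqrt{2|T|}$ factor) and a small Hilbert--Schmidt tail. Two places where your bookkeeping differs are worth noting. First, to compare the truncated $\dop$ with the truncated $W$, you invoke Davis--Kahan on the spectral projections followed by a three-term telescoping; the paper avoids eigenvector perturbation entirely, bounding $\norm{\hat\dop - G}$ by $\norm{\hat\dop-\dop}+\norm{\dop-W}+\norm{W-G}$, where the first and last terms are just the suprema of the discarded eigenvalues and are controlled by Weyl/Kato. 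The paper's route is leaner (the gap is used only to decide membership in $T$, not to control projections), though yours is equally valid since the same gap is available with probability going to one. Second, for the tail $\normFro{W-\Pi_W W\Pi_W}$, you use the law of large numbers on $\normFro{W}^2\to\norm{\cop}_{\mathrm{HS}}^2$ combined with convergence of the retained eigenvalues, whereas the paper cites the Hilbert--Schmidt concentration $\sum_i(\lambda_i^W-\lambda_i^\cop)^2\to 0$ from Rosasco et al.; your argument is somewhat more elementary and self-contained, though it still implicitly relies on $\lambda_i^W\to\lambda_i^\cop$ for each retained index, the same spectral convergence fact the paper imports. One small imprecision: in the SBM/adjacency case the signal set contains large \emph{negative} eigenvalues, so ``top-$r$ eigenprojection'' should be read as the projection onto eigenvalues of magnitude above the threshold; your symmetric dead-zone filter already does the right thing, and the paper's choice $T=\{i:\abs{\lambda_i^\cop}\ge\lambda_\epsilon\}$ makes this explicit.
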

\begin{wrapfigure}{r}{.3\textwidth}
\vspace{-15pt}
    \centering
    \includegraphics[height=2.5cm]{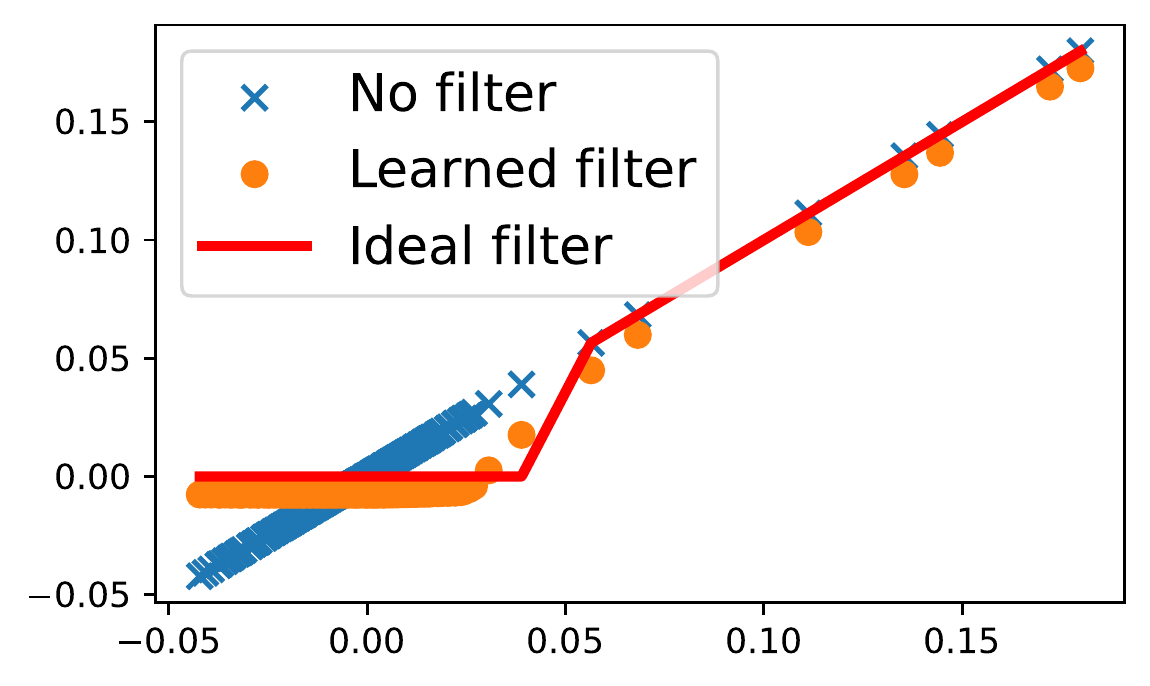}
    \caption{\small Illustration of Theorem \ref{thm:usvt-mlp} on synthetic data where $W$ is known, with a Gaussian kernel.
    Unfiltered eigenvalues of $\dop$ are represented by blue crosses, filtered ones obtained by minimizing $\min_{\gamma_2}\normFro{\dop_{\gamma_2}-W}$ by orange dots, and the ideal ReLU-filter used in the proof of Thms.~\ref{thm:distPE} and \ref{thm:usvt-mlp} is represented by a red line.}
    \label{fig:relu}
    \vspace{-5pt}
\end{wrapfigure}
The proof of this theorem, given in appendix \ref{app:distance}, is inspired by the so-called USVT estimator \cite{Chatterjee2015}.
One notes that the use of an MLP graph filter is quite unconventional. A more classical choice is polynomial filters: this avoids the diagonalization of $\dop$ by computing $\sum_k a_k \dop^k$, it is for instance the basis for the ChebNet architecture \cite{Defferrard2016}. For the purpose of Theorems \ref{thm:distPE} and \ref{thm:usvt-mlp}, \emph{polynomial filters do not work, and ReLU is of crucial importance}: indeed, we need the filter to zero-out $\order{n}$ eigenvalues \emph{uniformly} in some interval $[-\tau,\tau]$.
With polynomial, this could be done by taking learned parameters that depend on $n$ (to get a finer approximation as $n$ increases), but this is not allowed in our framework, where we want to generalize on large graphs $n \to \infty$.
On the other hand, when choosing $f$ as an MLP with ReLU, due to the shape of this non-linearity, $\MLP_{\gamma_2}$ can be \emph{uniformly} $0$ on a whole domain.
Of course, polynomial filters offer great computational advantages, and perform well in practice, despite their flaw in our asymptotic analysis. Moreover, ReLU is technically non-differentiable. Designing filters that offer both computational advantages and exact approximation is still an open question.
In practice, we observe that the ReLU-filter does learn to approximate its expected shape, when we minimize the reconstruction error $\normFro{\dop_\gamma - W}$ on synthetic data where $W$ is known, see Fig.~\ref{fig:relu}.

\subsubsection{Discussion}\label{sec:normalization}

\paragraph{Approximation power.} As mentioned above, in the absence of node features, one may opt for constant input, but this may lead to degenerate situations. PEs aim to counteract that, by increasing GNNs' approximation power. We quickly verify that this is indeed the case for our two examples.
\begin{proposition}\label{prop:approx-power}
    There are cases where $\Ff_\cop(1) \!\subset \!\Ff_\cop(\Feig)$ or $\Ff_\cop(1) \!\subset \!\Ff_\cop(\Fdist)$ with strict inclusions.
\end{proposition}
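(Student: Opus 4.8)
The plan is to exhibit a single explicit model $(w,P,\cop)$ in which the degree is constant, so that $\Ff_\cop(1)$ collapses to the constants, and then to display concrete non-constant elements of $\Feig$ and of $\Fdist$ in that same model. I would work on $\Xx=[0,1]$ with $P$ uniform and the adjacency operator $\cop=\copA$ (Example~\ref{ex:adj}), and take the degenerate positive semi-definite kernel
\[
  w(x,y)=\tfrac12+\tfrac12\cos(2\pi x)\cos(2\pi y),
\]
which takes values in $[0,1]$. Writing $\phi(x)=\sqrt2\cos(2\pi x)$, so that $\phi$ is unit-norm and orthogonal to constants in $L^2([0,1])$, one has $w=\tfrac12\,1\otimes1+\tfrac14\,\phi\otimes\phi$. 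Since $\int\phi\,dP=0$ and $\int\phi^2\,dP=1$, the degree function equals the constant $\tfrac12$ and $\copA 1=\tfrac12\,1\propto1$. As in the remark preceding Definition~\ref{def:Fpe}, the extension rules then force $\Ff_{\copA}(1)$ to contain only constant functions: the base is constant, $\copA$ sends constants to constants, and composition with continuous functions, linear span and closure all preserve constancy.

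For the SignNet space I would take $q=2$. The operator $\copA$ has the rank-two spectrum $\lambda_1^{\copA}=\tfrac12>\lambda_2^{\copA}=\tfrac14>\lambda_3^{\copA}=0$, so the first $q+1=3$ eigenvalues are two-by-two distinct, as Theorem~\ref{thm:signnet} requires, with $u_1^{\copA}=1$ and $u_2^{\copA}=\phi$. Choosing the \emph{even} profile $f_2(t)=t^2$, the second block of the corresponding element of $\Feig$ is
\[
  (\signop f_2)\circ u_2^{\copA}=\phi^2+(-\phi)^2=2\phi^2=4\cos^2(2\pi\,\cdot),
\]
which is non-constant. Hence $\Feig$, and a fortiori $\Ff_{\copA}(\Feig)$, contains a non-constant function while $\Ff_{\copA}(1)$ does not; together with the easy inclusion $\Ff_{\copA}(1)\subset\Ff_{\copA}(\Feig)$ (the constants lie in $\Feig$, realized through the $u_1$ block), this yields the strict inclusion. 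The point that needs care here — and the reason a symmetric two-block SBM would \emph{not} be a valid witness — is that the sign-symmetrization in $\signop$ annihilates odd profiles and turns even ones into functions of $|u_i^{\copA}|$; the chosen eigenfunction must therefore take values of genuinely different magnitude, which is exactly what $\phi$ does and what the $\pm1$-valued Fiedler vector of a symmetric SBM fails to do. This evenness/magnitude check is the main (mild) obstacle.

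For the distance-encoding space it suffices to take $q=1$ and again $f(t)=t^2$ in \eqref{eq:Fdist}. Since $\copA\delta_x(\cdot)=w(\cdot,x)$, the resulting element of $\Fdist$ is
\[
  z\mapsto\int_0^1 w(z,x)^2\,dP(x)=\tfrac14+\tfrac18\cos^2(2\pi z),
\]
using $\int\phi\,dP=0$, and this is non-constant. Thus $\Ff_{\copA}(\Fdist)$ also strictly contains $\Ff_{\copA}(1)$, so the same model witnesses both claims. The remaining computations (checking $w\in[0,1]$, the rank-two spectral decomposition, and the two integrals) are routine; the only genuinely structural ingredients are the constant-degree construction, which trivializes the left-hand side, and the observation that an even $f$ keeps the PE image non-constant after sign-symmetrization.
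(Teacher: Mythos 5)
Your proposal is correct in substance and follows the same high-level strategy as the paper -- pick a model with constant degree so that $\Ff_\cop(1)$ collapses to the constants, then exhibit a non-constant element of the PE class -- but the witnesses differ in an interesting way. For $\Feig$ the paper uses a two-community SBM with $C=\bigl(\begin{smallmatrix}1/2 & 1/4\\ 1/4 & 3/8\end{smallmatrix}\bigr)$ and $P=(1/3,2/3)$, chosen so that $\cop 1\propto 1$ while the orthogonality relation $(u_2^\cop)^\top\diag(P)u_1^\cop=0$ forces the two entries of $u_2^\cop$ to have different magnitudes; your remark that sign-symmetrization only sees $|u_i^\cop|$, and that a symmetric two-block SBM would therefore fail, is exactly the mechanism the paper exploits. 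Your continuous rank-two kernel $w=\tfrac12\,1\otimes 1+\tfrac14\,\phi\otimes\phi$ achieves the same thing with a non-constant-magnitude eigenfunction, and has the added benefit of being a single model that witnesses both claims. For $\Fdist$ the paper simply cites Theorem~4 of the prior universality work rather than computing anything, so your explicit evaluation of $\int w(\cdot,x)^2\,dP(x)=\tfrac14+\tfrac18\cos^2(2\pi\,\cdot)$ is more self-contained than what the paper does.

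One small repair is needed in the $\Feig$ part: the definition \eqref{eq:feig} requires $f_i\in\CLip(\RR,\RR^{p_i})$, i.e.\ globally Lipschitz on all of $\RR$, and $t\mapsto t^2$ is not. Since $\phi$ is bounded by $\sqrt2$, replace $f_2$ by any Lipschitz function agreeing with $t^2$ on $[-\sqrt2,\sqrt2]$ (e.g.\ $f_2(t)=\min(t^2,2)$, which is piecewise smooth with derivative bounded by $2\sqrt2$); then $(\signop f_2)\circ u_2^{\copA}=2\phi^2$ exactly as you computed. No such issue arises for $\Fdist$, where the domain in \eqref{eq:Fdist} is already the compact set $[0,1]^q$. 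With that adjustment the argument is complete.
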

Moreover, as mentioned in the previous section, existing universality results \cite{Keriven2021} can be generalized in our case, see App.~\ref{app:universality}. Another interesting question is somewhat the opposite: given the already rich class of functions generated by PEs, are GNNs really more powerful?
\begin{proposition}\label{prop:fcop-better}
    There are cases where $\Feig \subset \Ff_\cop(\Feig)$ or $\Fdist \subset \Ff_\cop(\Fdist)$, with strict inclusions.
\end{proposition}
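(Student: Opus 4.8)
For both claims I will exhibit an explicit, fully computable example — a finite Stochastic Block Model (Ex.~\ref{ex:sbm}) with uniform $P$ and the adjacency shift (Ex.~\ref{ex:adj}), so that $\cop$ is the matrix $\tfrac13 C$ acting on $\RR^3$ and its eigenfunctions coincide with the eigenvectors of $C$. The mechanism is the same in both cases. I identify a \emph{degeneracy invariant} satisfied by every element of the base class (namely, being constant on a distinguished pair of communities $\{1,2\}$), then produce a single base-class function $v$ such that $\cop v$ \emph{violates} the invariant. Since $v \in \Ff_\cop(\Bb)$ by the base-space rule~(i) and $\cop v \in \Ff_\cop(\Bb)$ by the graph-operator stability rule~(iii), while $\cop v$ cannot lie in the base class, strict inclusion follows. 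I stress that the invariant need only hold on the base set itself, not be preserved by the other $\Ff_\cop$-operations, which is what makes this argument short.

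\textbf{The $\Feig$ case.} Take $P=(\tfrac13,\tfrac13,\tfrac13)$ and
$C=\bigl(\begin{smallmatrix}0.3&0.2&0.4\\0.2&0.5&0.3\\0.4&0.3&0.95\end{smallmatrix}\bigr)$, with $q=1$. One checks directly that $C(1,1,2)^\top=1.3\,(1,1,2)^\top$; since $C$ has strictly positive entries, Perron--Frobenius makes $1.3$ the simple top eigenvalue (so $\lambda^\cop_1\neq\lambda^\cop_2$ as required), and the top eigenfunction is $u^\cop_1\propto(1,1,2)$. Because $u^\cop_1(1)=u^\cop_1(2)$, \emph{every} element of $\Feig=\{(\signop f_1)\circ u^\cop_1\}$ takes equal values on communities $1$ and $2$: this is the invariant. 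Now choose $f_1$ Lipschitz agreeing with $s\mapsto\tfrac12 s^2$ on the (bounded) range of $u^\cop_1$, so that $(\signop f_1)(s)=s^2$ and $v\eqdef (u^\cop_1)^2\propto(1,1,4)\in\Feig$. Then $\cop v\propto \tfrac13 C(1,1,4)^\top$, whose first two coordinates are proportional to $2.1$ and $1.9$, hence \emph{unequal}. Thus $\cop v\in\Ff_\cop(\Feig)$ is not constant on $\{1,2\}$, so $\cop v\notin\Feig$, proving $\Feig\subsetneq\Ff_\cop(\Feig)$.

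\textbf{The $\Fdist$ case.} Take $P=(\tfrac13,\tfrac13,\tfrac13)$, $q=1$, and $C'=\bigl(\begin{smallmatrix}0.1&0.2&0.3\\0.2&0.3&0.1\\0.3&0.1&0.5\end{smallmatrix}\bigr)$. Here rows $1$ and $2$ of $C'$ are the same multiset $\{0.1,0.2,0.3\}$ but are not equal. For $q=1$, every $g\in\Fdist$ has the form $g(\ell)=\tfrac13\sum_k f(C'_{\ell k})$ with $f\in\CLip([0,1],\RR^p)$; since rows $1,2$ are permutations of one another, $g(1)=g(2)$ for all such $g$ — again the invariant. Taking $f=\mathrm{id}$ gives $g=(0.2,0.2,0.3)$, so $g(1)=g(3)$ fails and, computing $\cop g=\tfrac13 C'g$, the values at communities $1$ and $2$ are $0.05$ and $0.0433\ldots$, which are \emph{unequal}. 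Hence $\cop g\in\Ff_\cop(\Fdist)$ violates the invariant and $\cop g\notin\Fdist$, giving $\Fdist\subsetneq\Ff_\cop(\Fdist)$.

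\textbf{Main obstacle and remarks.} The delicate point is \emph{engineering the degeneracy so that it is broken by exactly one extra application of $\cop$}: the base-class invariant (an accidental equality $u^\cop_1(1)=u^\cop_1(2)$ for $\Feig$, or equal row-value multisets for $\Fdist$) must coexist with $\cop$ separating the two communities. For $\Fdist$ this is subtle because a permutation symmetry between the two columns can make the first-order aggregation of $\cop$ collapse as well; the construction above avoids this by having rows $1,2$ be a \emph{nontrivial} permutation ($C'_{11}\neq C'_{22}$ and $C'_{13}\neq C'_{23}$), so that $\cop g(1)-\cop g(2)\propto(C'_{11}-C'_{22})g(1)+(C'_{13}-C'_{23})g(3)$ is generically nonzero. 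In both examples the required conditions reduce to a couple of open inequalities solved by elementary linear algebra together with Perron--Frobenius. Since the statement is an ``or'', either example already suffices; the finite SBMs are chosen for full verifiability, and an entirely analogous continuous construction (a p.s.d.\ kernel whose top eigenfunction has a nontrivial level set on which $\cop$ still distinguishes points) yields the same conclusion.
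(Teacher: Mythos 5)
Your proof is correct and follows essentially the same strategy as the paper's: exhibit a degeneracy invariant satisfied by every element of the base class ($\Feig$ or $\Fdist$) and a base element whose image under $\cop$ violates it, so that the graph-operator stability rule forces the inclusion to be strict. The paper instantiates this with different concrete SBMs (a $2$-periodicity invariant for $\Feig$ and a connectivity matrix with zero rows for $\Fdist$), but your explicit $3\times 3$ examples check out numerically and serve the same purpose.
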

The proof, which is not so trivial, invokes functions with at least one round of message-passing after the computation of PEs, so the additional approximation power does not come only from MLPs. 
Intuitively, it seems natural that message-passing rounds are useful for other reasons, e.g. noise reduction or smoothing \cite{Keriven2022a}. We leave these complementary lines of investigation for future work.

\paragraph{Renormalization.} A striking point in our variants of PEs is the presence of various normalization factors by the graph size $n$ to ensure convergence: the equation \eqref{eq:signnet-pe} involves a renormalization of the eigenvectors $u^\dop$ by the square root of the size of the graph $\sqrt{n}$, while \eqref{eq:distPE} involves a multiplicative factor $n$ \emph{inside} the MLP $\MLP_{\gamma_1}$ (the $1/n$ outside of the sum is more classical).
Our analysis shows that these normalization factors are necessary for convergence when $n \to \infty$, and more generally for consistency across different graph sizes.

In practice, this is generally not used. Indeed, if the training and testing graphs have roughly the same ``range'' of sizes $n \in [n_{\min}, n_{\max}]$, then a GNN model can \emph{learn} the proper normalization to perform, which is not the point of view of our analysis $n \to \infty$. While in-depth benchmarking of PEs has been done in the literature \cite{Dwivedi2021} and is out-of-scope of this paper, we give a small numerical illustration of the effect of normalization, for a synthetic dataset (Fig.~\ref{fig:spectralnorm}) and two real-world datasets that contain graphs of different sizes\footnote{Technically, these datasets are graph-tasks instead of node-tasks. Indeed, we needed graphs of different sizes to test the renormalization, and there are few (if any) node-task datasets containing many graphs of different sizes. We perform a final pooling on our equivariant GNNs to obtain permutation-invariant versions.} (Tab.~\ref{tab:my_label}).
On a synthetic dataset that is exactly formed of random graphs of vastly different sizes, the renormalization is of course necessary to obtain good performance, as predicted by our theory: without it, the PEs do not converge when $n$ grows.
On real data, we see that renormalization generally helps generalization, and this is more true for IMDB-BINARY, which contains a larger range of graph sizes, and distance-based PEs. Note that here we use relatively small GNNs that are \emph{not state-of-the-art}, as well as a different train/test split than most papers ($K=5$ CV-folds instead of $K=10$): indeed, we do not want our models to \emph{learn} the proper normalization on the limited range of sizes $n$ in the dataset, so we limit their number of parameters and use a smaller training set. We do not expect our simple renormalization process to make a significant difference on large-scale benchmarks with state-of-the-art models \cite{Dwivedi2021}, but this is a pointer in an interesting direction that will be explored in the future. In particular, this type of normalization may be useful in real-world scenarii where the test graphs are far larger than the labelled training graphs.

\begin{table}
    \centering
    \begin{tabular}{lrrrrrr}
        \toprule
        Dataset & \multicolumn{2}{c}{Eigenvectors} & \multicolumn{2}{c}{Distance-encoding} \\ 
        & w/ norm. & w/o norm. & w/ norm. & w/o norm. \\ \midrule
        \texttt{IMDB-BINARY} \cite{Yanardag2015} & 67.80 & 66.10 & 71.10 & 63.95 \\
        \texttt{COLLAB} \cite{Yanardag2015} & 73.74 & 74.77 & 75.65 & 75.02 \\
        \bottomrule
    \end{tabular}
    \caption{\small Test accuracy for GNNs with different PEs, with or without renormalization by the graph size $n$. Results for $5$-fold cross-validation averaged over $3$ experiments.}
    \label{tab:my_label}
\end{table}

\section{Conclusion}

On large random graphs, the manner in which GNNs label \emph{nodes} can be modelled by functions. The analysis of the resulting function spaces is still in its infancy, and of a very different nature to the studies of \emph{graph-tasks}, both discrete \cite{Xu2018} or in the limit \cite{Maskey2022}. In this paper, we clarified significantly the nature of the space of functions well-approximated by GNNs on large-graphs, showing that it can be defined by a few extension rules within the space of square-integrable functions. We then showed the usefulness of Positional Encodings by analyzing two popular examples, established new universality results, as well as some concentration inequalities of independent interest. Our theory hinted at some process for consistency across graphs of different sizes that can help generalization in practice.
This paper, which in large part consisted in \emph{properly defining} the objects of interest, is without doubt only a first step in their analysis. Future studies might look at specific settings and derive more useful properties of the space $\Ff_\cop$, more powerful PEs, a better understanding of their limitations, or more realistic models for node features.

\bibliographystyle{plain}

\appendix
\clearpage

\section{Proofs of Sec.~\ref{sec:main}}\label{app:main}

\subsection{Proof of Prop.~\ref{prop:equivariance}}\label{app:equivariance}
Denote $\Gg$ the distribution of $(X,A)$ with $(w,P)$ and $\Gg_\phi$ for $(w_\phi,P_\phi)$. Note that if $X,A \sim \Gg_\phi$, then $\phi(X), A \sim \Gg$, and recall that $\dop = \dop(A)$ only depends on $A$. 

Consider $f \in \Ff_\GNN(\Bb, w, P)$, $\epsilon>0$, and $\theta, f^{(0)}$ such that $\PP_{\Gg}\pa{\normMSE{\gnn_{\theta}(\dop,\samp{X}f^{(0)})-\samp{X} f} \geq \epsilon} \to 0$. 
Then, denoting by $\phi(X) = \{\phi(x_1,\ldots, \phi(x_n)\}$,
\begin{align*}
    \PP_{\Gg_\phi}\Big(\normMSE{\gnn_{\theta}(\dop,\samp{X}(f^{(0)} \circ \phi))-\samp{X} (f\circ \phi)} \geq \epsilon \Big) &= \PP_{\Gg_\phi}\Big(\normMSE{\gnn_{\theta}(\dop,\samp{\phi(X)}f^{(0)})-\samp{\phi(X)} f} \geq \epsilon\Big) \\
    &= \PP_{\Gg}\pa{\normMSE{\gnn_{\theta}(\dop,\samp{X}f^{(0)})-\samp{X} f} \geq \epsilon} \to 0 
\end{align*}
which shows that $f \circ \phi \in \Ff_\GNN(\Bb_\phi, w_\phi, P_\phi)$ and one inclusion. The other inclusion is immediate by doing the same reasoning for $\phi^{-1}$.

\subsection{Proof of Prop.~\ref{prop:cvgt-mse-example}}\label{app:cvgt-mse-examples}

Define $W = [w_\cop(x_i,x_j)/n]$ the Gram matrix. Using Theorem \ref{thm:concentration-opnorm}, for both our examples we have
$$
\norm{\dop - W} \convnproba 0
$$
Since $\norm{W}\leq \sup_{x,y}\abs{w_\cop(x,y)}$ is bounded, it shows that $\norm{\dop}$ is bounded with probability going to $1$.

Let $f \in L^2_q$ and any $\epsilon>0$. Since continuous functions are dense in square-integrable functions on compact spaces (see e.g. \cite[Sec. 8.2]{Folland1999}), let $g \in \CLip(\Xx, \RR^q)$ such that $\norm{f-g}_{L^2} \leq \epsilon$. We have
\begin{align*}
    \normMSE{W \samp{X} g - \samp{X} \cop g}^2 &= \frac1{n} \sum_i \norm{\frac1{n}\sum_j w_\cop(x_i,x_j) g(x_j) - \int w_\cop(x_i, x) g(x) dP(x)}^2 \\
    &\leq \norm{\frac1{n}\sum_j w_\cop(\cdot,x_j)f(x_j) - \int w_\cop(\cdot, x) g(x) dP(x)}_\infty^2 \convnproba 0
\end{align*}
where we have used Lemma \ref{lem:chaining} and the fact that $g$ is bounded.

Finally,
\begin{align*}
    \normMSE{\dop \samp{X} f - \samp{X} \cop f} &\leq 
        \normMSE{\dop \samp{X} f - W \samp{X} f} +
        \normMSE{W \samp{X} f - W \samp{X} g} \\ &\qquad+
        \normMSE{W \samp{X} g -  \samp{X} \cop g} +
        \normMSE{\samp{X} \cop g -  \samp{X} \cop f}
\end{align*}
Using $\normFro{AB} \leq \norm{A}\normFro{B}$ and the LLN, and the fact that $\norm{f}_{L^2}$, $\norm{W}$, $\norm{\cop}$ are bounded, with probability going to $1$,
\begin{align*}
    \normMSE{\dop \samp{X} f - \samp{X} \cop f} &\leq 
        \norm{\dop - W} \normMSE{\samp{X}f } +
        \norm{W}\normMSE{\samp{X} (f - g)} \\ &\qquad+
        \normMSE{W \samp{X} g -  \samp{X} \cop g} +
        \normMSE{\samp{X} \cop (g -  f)} \\
        &\lesssim
        \norm{\dop - W} \norm{f}_{L^2} +
        \norm{W}\norm{f - g}_{L^2} +
        0 +
        \norm{\cop}\norm{g -  f}_{L^2}
        \lesssim \epsilon
\end{align*}
which, since $\epsilon$ was chosen arbitrarily, concludes the proof.

\subsection{Proof of Theorem \ref{thm:main-fgnn}}\label{app:main-fgnn}

The proof uses intermediate results. Recall the definition of GNNs: given input node features $Z^{(0)} \in \RR^{n \times d_0}$,
\begin{align*}
  Z^{(\ell)} &= \rho\pa{Z^{(\ell-1)}\theta^{(\ell-1)}_0 + \dop Z^{(\ell-1)}\theta^{(\ell-1)}_1 + 1_n (b^{(\ell)})^\top} \in\RR^{n \times d_{\ell}}, \\
  \gnn_{\theta}(\dop, Z^{(0)}) &= Z^{(L-1)}\theta^{(L-1)} + 1_n (b^{(L)})^\top
\end{align*}
We can define a continuous equivalent of GNNs, called c-GNNs in the literature \cite{Keriven2020a}, using the operator $\cop$. Given $f^{(0)} \in L^2_{d_0}$,
\begin{align*}
  f^{(\ell)} &= \rho\pa{(\theta^{(\ell-1)}_0)^\top f^{(\ell-1)} + (\theta^{(\ell-1)}_1)^\top \cop f^{(\ell-1)} + b^{(\ell)}} \in L^2_{d_{\ell}}, \\
  \gnn_{\theta}(\cop, f^{(0)}) &= (\theta^{(L-1)})^\top f^{(L-1)} + b^{(L)}
\end{align*}

Then, under our assumption on the operators $(\dop,\cop)$, discrete GNNs converge to continuous GNNs.
\begin{lemma}\label{lem:cvgtthm}
    Suppose Assumption~\ref{ass:cvgt-mse} holds.
    For all $f \in L^2_\sqcup$ and $\theta$,
    \[
        \normMSE{\gnn_\theta(\dop, \samp{X} f) - \samp{X} \gnn_\theta(\cop, f)}\convnproba 0
    \]
\end{lemma}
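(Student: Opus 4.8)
The plan is to prove the statement by induction on the layer index, tracking the discrete and continuous intermediate representations in parallel. Writing $Z^{(\ell)}$ for the hidden states of the discrete GNN fed with $Z^{(0)} = \samp{X} f$, and $f^{(\ell)}$ for the hidden states of the c-GNN fed with $f^{(0)} = f$, I would establish that
\[
  \normMSE{Z^{(\ell)} - \samp{X} f^{(\ell)}} \convnproba 0 \quad \text{for every } \ell .
\]
The base case $\ell = 0$ is an equality, since $Z^{(0)} = \samp{X} f^{(0)}$. The conclusion of the lemma then follows from the case $\ell = L-1$, after applying the final affine layer, which turns the difference into $(Z^{(L-1)} - \samp{X} f^{(L-1)})\theta^{(L-1)}$ and only introduces the bounded factor $\norm{\theta^{(L-1)}}$.

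For the inductive step, I assume the bound at layer $\ell-1$. Comparing the two pre-activations and using that sampling commutes with pointwise ReLU, $\samp{X}(\rho \circ g) = \rho(\samp{X} g)$, together with the compatibility of sampling with the linear maps $\theta_0, \theta_1$, the bias terms cancel and the difference of pre-activations reduces to
\[
  (Z^{(\ell-1)} - \samp{X} f^{(\ell-1)})\theta^{(\ell-1)}_0 + \big(\dop Z^{(\ell-1)} - \samp{X}(\cop f^{(\ell-1)})\big)\theta^{(\ell-1)}_1 .
\]
The first term is controlled by $\norm{\theta^{(\ell-1)}_0}$ times the inductive bound. For the second, I would insert $\dop \samp{X} f^{(\ell-1)}$ and split it as $\dop(Z^{(\ell-1)} - \samp{X} f^{(\ell-1)})$ plus $(\dop \samp{X} f^{(\ell-1)} - \samp{X} \cop f^{(\ell-1)})$: the first piece is bounded by $\norm{\dop}$ times the inductive bound, while the second vanishes in probability directly by Assumption~\ref{ass:cvgt-mse} applied to $f^{(\ell-1)} \in L^2_\sqcup$. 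Finally, since the ReLU is $1$-Lipschitz element-wise it is a contraction for $\normMSE{\cdot}$, so passing through the activation preserves the bound and closes the induction.

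The main obstacle is the $\dop$-term, and specifically the bookkeeping of convergence in probability there. The factor $\norm{\dop}$ is only bounded \emph{with probability going to $1$}, not deterministically, so I cannot simply multiply it against a vanishing MSE. Instead I would fix a constant $M$ with $\PP(\norm{\dop} > M) \to 0$ and, for any threshold $\delta>0$, use $\PP(\norm{\dop}\normMSE{E^{(\ell-1)}} > \delta) \leq \PP(\norm{\dop} > M) + \PP(M\normMSE{E^{(\ell-1)}} > \delta)$, where $E^{(\ell-1)} = Z^{(\ell-1)} - \samp{X} f^{(\ell-1)}$; both terms tend to $0$. The only other point to check is that, since the depth $L$ is fixed, the finitely many per-layer errors each vanish in probability and a finite sum of such terms still vanishes in probability, so the induction over $L$ goes through without any uniformity issue.
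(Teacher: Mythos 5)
Your proof is correct and follows essentially the same route as the paper's: induction on the layer index with base case $Z^{(0)}=\samp{X}f^{(0)}$, the same decomposition of the $\dop$-term into $\dop(Z^{(\ell-1)}-\samp{X}f^{(\ell-1)})$ plus $(\dop\samp{X}-\samp{X}\cop)f^{(\ell-1)}$, the Lipschitz property of ReLU, and Assumption~\ref{ass:cvgt-mse} for the residual term. Your explicit union-bound handling of the event $\{\norm{\dop}\leq M\}$ is a point the paper leaves implicit, but it is the same argument.
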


\begin{proof}
    Writing $Z^{(0)} = \samp{X} f$ and $f^{(0)} = f$, we show by recursion on the layers that $\normMSE{Z^{(\ell)} - \samp{X}f^{(\ell)}} \convnproba 0$.
    
    For $\ell=0$, we have exactly $\normMSE{Z^{(0)} - \samp{X} f^{(0)}} = 0$. Assuming the convergence holds for $\ell-1$, we have,
    \begin{align*}
        \normMSE{Z^{(\ell)} - \samp{X}f^{(\ell)}} &= \Big\lVert\rho\pa{Z^{(\ell-1)}\theta^{(\ell-1)}_0 + \dop Z^{(\ell-1)}\theta^{(\ell-1)}_1 + 1_n (b^{(\ell)})^\top} \\
        &\qquad - \samp{X}\rho\pa{(\theta^{(\ell-1)}_0)^\top f^{(\ell-1)} + (\theta^{(\ell-1)}_1)^\top \cop f^{(\ell-1)} + b^{(\ell)}}\Big\rVert_\textup{MSE} \\
        &\lesssim \Big\lVert Z^{(\ell-1)}\theta^{(\ell-1)}_0 + \dop Z^{(\ell-1)}\theta^{(\ell-1)}_1  \\
        &\qquad - (\samp{X}f^{(\ell-1)}) \theta^{(\ell-1)}_0 + (\samp{X} \cop f^{(\ell-1)}) \theta^{(\ell-1)}_1 \Big\rVert_\textup{MSE} \\
        &\leq \pa{\norm{\theta^{(\ell-1)}_0} + \norm{\theta^{(\ell-1)}_1} \norm{\dop}}\normMSE{Z^{(\ell-1)} - \samp{X} f^{(\ell-1)}} \\
        &\qquad + \normMSE{(\dop \samp{X} - \samp{X} \cop) f^{(\ell-1)}}
    \end{align*}
    using the Lipschitz property of $\rho$ in the first line, and $\normFro{AB} \leq \norm{A}\normFro{B}$ after. The first term converges to $0$ by recursion hypothesis since $\norm{\dop}$ is bounded with probability going to $1$, and the second converges to $0$ by Assumption \ref{ass:cvgt-mse}. This concludes the proof.
\end{proof}

\begin{lemma}\label{lem:fcisdense}
    Given a base space $\Bb \subset L^2_\sqcup$, denote by $\Ff_c(\Bb) \subset L^2_\sqcup$ the following space of all functions $f$ of the form:
    \begin{align}
        &f^{(0)} \in \Bb \notag\\
        &f^{(\ell+1)} = g^{(\ell)}_1 \circ f^{(\ell)} + g^{(\ell)}_2 \circ \cop f^{(\ell)}  &&\text{ where $g^{(\ell)}_1, g^{(\ell)}_2 \in \CLip(\RR^{d_\ell},\RR^{d_{\ell+1}})$} \notag \\
        &f = f^{(L)} \in L^2_{d_L} \label{eq:Fc}
    \end{align}
    for all $k,L,d_i$. Then $\Ff_c(\Bb)$ is dense in $\Ff_{\cop}(\Bb)$.
\end{lemma}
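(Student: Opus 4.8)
The plan is to establish the two inclusions $\overline{\Ff_c(\Bb)} \subseteq \Ff_{\cop}(\Bb)$ and $\Ff_{\cop}(\Bb) \subseteq \overline{\Ff_c(\Bb)}$, the second through the minimality rule (vi) of $\Ff_{\cop}(\Bb)$; since $\Ff_{\cop}(\Bb)$ is closed by (v), density then amounts to $\overline{\Ff_c(\Bb)} = \Ff_{\cop}(\Bb)$. The first inclusion is the easy one: I would show $\Ff_c(\Bb) \subseteq \Ff_{\cop}(\Bb)$ by induction on the layer index $\ell$, proving $f^{(\ell)} \in \Ff_{\cop}(\Bb)$. The base case $f^{(0)} \in \Bb$ is rule (i); and if $f^{(\ell)} \in \Ff_{\cop}(\Bb)$, then $\cop f^{(\ell)} \in \Ff_{\cop}(\Bb)$ by (iii), both $g_1^{(\ell)}\circ f^{(\ell)}$ and $g_2^{(\ell)} \circ \cop f^{(\ell)}$ lie in $\Ff_{\cop}(\Bb)$ by (ii), and their sum (same output dimension $d_{\ell+1}$) lies in $\Ff_{\cop}(\Bb)$ by the vector space rule (iv). Taking the closure and using (v) gives $\overline{\Ff_c(\Bb)} \subseteq \Ff_{\cop}(\Bb)$.

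For the reverse inclusion, I would set $\mathcal{G} := \overline{\Ff_c(\Bb)}$ and verify that $\mathcal{G}$ satisfies all of (i)--(v); minimality then forces $\Ff_{\cop}(\Bb) \subseteq \mathcal{G}$. Rule (i) holds by taking depth $L=0$, and (v) is automatic since $\mathcal{G}$ is a closure. Rules (ii) and (iii) follow from an ``append-one-layer'' trick: for $f \in \Ff_c(\Bb)$, choosing $(g_1,g_2) = (g,0)$ realizes $g \circ f \in \Ff_c(\Bb)$ and $(g_1,g_2) = (0,\mathrm{id})$ realizes $\cop f \in \Ff_c(\Bb)$; passing to the closure then uses that $g$ is Lipschitz, $\norm{g\circ f_n - g \circ f}_{L^2} \le \mathrm{Lip}(g)\,\norm{f_n - f}_{L^2}$, and that $\cop$ is bounded, $\norm{\cop f_n - \cop f} \le \norm{\cop}\,\norm{f_n - f}$, so both operations survive limits and stay in $\mathcal{G}$.

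The crux is rule (iv), that $\mathcal{G}\cap L^2_q$ is a vector space. Scalar multiples come for free from (ii) applied to $u \mapsto \alpha u$, and $0 \in \mathcal{G}$ by composing with the zero map, so the real content is closure under addition. By a limiting argument it suffices to show that for $f, f' \in \Ff_c(\Bb)\cap L^2_q$ the sum $f+f'$ lies in $\overline{\Ff_c(\Bb)}$. I would do this by running the two defining recursions in parallel inside a single vector-valued computation: carry the concatenated state $F^{(\ell)} = (f^{(\ell)}, f'^{(\ell)})$, use that $\cop$ acts coordinate-wise so $\cop F^{(\ell)} = (\cop f^{(\ell)}, \cop f'^{(\ell)})$, and apply the block-diagonal Lipschitz maps $(u,v)\mapsto (g_i^{(\ell)}(u), g_i'^{(\ell)}(v))$ at each step (padding the shorter computation to equal depth with identity layers $(g_1,g_2)=(\mathrm{id},0)$); a final linear layer $(u,v)\mapsto u+v$ then outputs $f+f'$.

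I expect the main obstacle to be exactly this addition step, and specifically the case where $f$ and $f'$ are built from \emph{different} base elements $f^{(0)} \neq f'^{(0)}$: the parallel construction then requires the joint base $(f^{(0)}, f'^{(0)})$ to itself be an admissible input, i.e. it needs $\Bb$ to be stable under concatenation. This holds in all the examples of interest (constant or noisy features, and the PE families $\Feig$, $\Fdist$, which are closed under stacking the parameterizing Lipschitz maps), so (iv) goes through there; I would either invoke this stability explicitly or absorb it into the convention that the GNN input may concatenate several base features. Once (i)--(v) are verified for $\mathcal{G}$, minimality yields $\Ff_{\cop}(\Bb) \subseteq \overline{\Ff_c(\Bb)}$, which together with the first inclusion proves $\overline{\Ff_c(\Bb)} = \Ff_{\cop}(\Bb)$, i.e. that $\Ff_c(\Bb)$ is dense in $\Ff_{\cop}(\Bb)$.
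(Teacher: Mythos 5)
Your proposal follows the same route as the paper's (very terse) proof: show $\Ff_c(\Bb)\subset\Ff_\cop(\Bb)$ because the recursion \eqref{eq:Fc} only uses operations under which $\Ff_\cop(\Bb)$ is stable, then show $\overline{\Ff_c(\Bb)}$ satisfies rules (i)--(v) and invoke minimality. Where you go further is in actually verifying rule (iv), and the obstacle you isolate there is real: the recursion \eqref{eq:Fc} is a single stream issued from a \emph{single} $f^{(0)}\in\Bb$, so your parallel-computation trick for realizing $f+f'$ needs either a common base element (which suffices when $\Bb$ is a singleton, via a first duplication layer $u\mapsto(u,u)$) or stability of $\Bb$ under concatenation. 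The paper's proof asserts the stability of $\overline{\Ff_c(\Bb)}$ without checking this point, and the point is not vacuous: for a base set with two ``independent'' elements (say $h_1(x)=x_1$, $h_2(x)=x_2$ on $\Xx=[0,1]^2$ with $w\equiv 0$, hence $\cop=0$), every element of $\Ff_c(\Bb)$ is a function of $x_1$ alone or of $x_2$ alone, so $\overline{\Ff_c(\Bb)}$ is not a vector space and cannot equal $\Ff_\cop(\Bb)$, which contains $x_1+x_2$. So the lemma implicitly requires the concatenation-stability you name (satisfied by all the base sets used downstream: singletons, $\Feig$, $\Fdist$), and your proposal is the correct proof once that hypothesis is made explicit; the remaining steps (append-a-layer for (ii) and (iii), Lipschitz/boundedness to pass to the closure, block-diagonal maps plus a final sum for (iv)) are all sound.
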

\begin{proof}
    By definition, $\Ff_c \subset \Ff_{\cop}$ since its contruction uses only rules that leave $\Ff_{\cop}$ stable. Conversely, $\overline{\Ff_c}$ satisfies all the rules of stability of $\Ff_{\cop}$ so by minimality $\Ff_{\cop} \subset \overline{\Ff_c}$.
\end{proof}

\begin{lemma}[Universality in $L^2$]\label{lem:univL2}
    Let $f \in L^2_q$ and $g \in \CLip(\RR^q, \RR^p)$, for all $\epsilon>0$, there exists an MLP $\MLP_\gamma$ that uses ReLU, with two hidden layers, such that
    \begin{equation}
        \norm{g\circ f -  \MLP_\gamma \circ f}_{L^2} \leq \epsilon
    \end{equation}
\end{lemma}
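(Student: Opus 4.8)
The plan is to recast the statement as an approximation problem for $g$ alone, in the weighted $L^2$ sense induced by $f$. Let $\mu$ denote the law of $f(x)$ for $x\sim P$ (the pushforward of $P$ by $f$), a probability measure on $\RR^q$. By the change of variables,
\[
\norm{g\circ f - \MLP_\gamma\circ f}_{L^2}^2 = \int_{\RR^q}\norm{g(y)-\MLP_\gamma(y)}^2\, d\mu(y),
\]
so it suffices to approximate $g$ within $\epsilon$ in $L^2(\mu)$. The crucial structural fact is that $f\in L^2_q$ forces $\mu$ to have a finite second moment, $\int\norm{y}^2 d\mu(y)=\norm{f}_{L^2}^2<\infty$, while $g$ being Lipschitz gives $g\in L^2(\mu)$. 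The difficulty — and the reason the usual universal approximation theorem does not apply verbatim — is that $\mu$ is supported on all of $\RR^q$, whereas density of MLPs in sup-norm only holds on compact sets.

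To overcome this I would compose two ingredients, which is exactly where the two hidden layers and the special shape of ReLU come in. First, define the coordinatewise clipping $c:\RR^q\to[-R,R]^q$ onto a cube of half-width $R$. For a single coordinate this is the piecewise-linear map $t\mapsto -R+\rho(t+R)-\rho(t-R)$, which is realized \emph{exactly} by one ReLU hidden layer; applied in parallel over the $q$ coordinates, $c$ is a genuine one-hidden-layer ReLU network. Second, on the compact cube $[-R,R]^q$ the classical universality theorem \cite{Pinkus1999} yields a one-hidden-layer ReLU network $\psi$ with $\sup_{y\in[-R,R]^q}\norm{g(y)-\psi(y)}\leq\delta$. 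Setting $\MLP_\gamma\eqdef\psi\circ c$ then gives a two-hidden-layer ReLU MLP, since the affine maps sandwiched between the two nonlinear layers merge into a single affine map.

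Finally I would bound the error by splitting off the tail. Writing $L$ for the Lipschitz constant of $g$ and using $c(y)\in[-R,R]^q$,
\[
\norm{g(y)-\MLP_\gamma(y)} \leq \norm{g(y)-g(c(y))} + \norm{g(c(y))-\psi(c(y))} \leq L\norm{y-c(y)} + \delta.
\]
Since $\norm{y-c(y)}^2\leq\norm{y}^2\mathbf 1_{\norm{y}>R}$ pointwise, integrating against the probability measure $\mu$ gives
\[
\int\norm{g-\MLP_\gamma}^2 d\mu \leq 2L^2\int_{\norm{y}>R}\norm{y}^2 d\mu(y) + 2\delta^2 .
\]
Choosing $R$ large enough that $2L^2$ times the tail integral (which vanishes as $R\to\infty$ by finiteness of the second moment) is at most $\epsilon^2/2$, and then $\delta$ small enough that $2\delta^2\leq\epsilon^2/2$, concludes the argument. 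The main obstacle is precisely the passage from compact-domain universality to the unbounded support of $\mu$: it is resolved by the clipping layer, whose \emph{exact} implementability hinges on ReLU — a smooth activation would only approximate the clip and lose the clean linear control — together with the finite second moment of $\mu$ and the Lipschitz growth of $g$ to tame the tail.
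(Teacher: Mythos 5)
Your proof is correct, and it reaches the conclusion by a genuinely different decomposition than the paper's. Both arguments share the same overall strategy --- exploit the fact that ReLU implements an \emph{exact} piecewise-linear clip, apply compact-domain universality on a cube, and control the tail using the Lipschitz growth of $g$ together with $\int \norm{f}^2 dP < \infty$ --- but they place the clip at opposite ends of the network. The paper approximates $g$ uniformly on $C_{k_\epsilon}=[-k_\epsilon,k_\epsilon]^q$ and then clips the \emph{output} of the approximating MLP, which forces it to split the integral over $\Xx_{k_\epsilon}$ and its complement and to bound three separate tail terms ($\norm{g\circ f}^2$ and $\norm{\MLP_\gamma\circ f}^2$ on $\Xx_{k_\epsilon}^c$, plus the uniform error on $\Xx_{k_\epsilon}$). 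You instead clip the \emph{input}, pass to the pushforward measure $\mu=f\sharp P$, and obtain the single pointwise bound $\norm{g(y)-\psi(c(y))}\leq L\norm{y-c(y)}+\delta$ valid on all of $\RR^q$, after which one integration finishes the proof. Your version buys two things: the pushforward reformulation makes transparent that the only structure used is the finite second moment of $\mu$; and the input-clip sidesteps the paper's claim that the output-clip at level $k_\epsilon$ leaves $\MLP_{\gamma'}$ unchanged on $C_{k_\epsilon}$ --- a claim that, as stated, would require $\norm{\MLP_{\gamma'}(y)}_\infty\leq k_\epsilon$ there and hence an adjustment of the clipping level when $g$ has large Lipschitz constant. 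The paper's output-clip, conversely, has the minor advantage of keeping the final network's output uniformly bounded, a property it reuses implicitly when bounding the tail; your argument does not need it. Both constructions use exactly two hidden layers, so the statement is recovered verbatim.
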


\begin{proof}
    Denote by $L_g$ the Lipschitz constant of $g$. Let $C_k = [-k, k]^q$, and $\Xx_k = \{x \in \Xx ~|~ f(x) \in C_k\}$, and $\xi_k = \int_{\Xx_k} \norm{f(x)}^2 dP(x)$. We have $\xi_k$ positive and increasing, and $\lim_{k \to \infty} \xi_k = \norm{f}_{L_2}^2$. Define $k_\epsilon$ such that $\xi_{k_\epsilon} \geq \norm{f}_{L^2}^2 - \frac{\epsilon^2}{1+L_g^2 +\norm{g(0)}^2}$, such that $\int_{\Xx_{k_\epsilon}^c} \norm{f(x)}^2 dP(x) \leq \frac{\epsilon^2}{1+L_g^2 +\norm{g(0)}^2}$.

    Since $C_{k_\epsilon}$ is compact, by the universality theorem \cite{Hornik1989, Pinkus1999}, there is an MLP $\MLP_{\gamma'}$ such that $\sup_{y \in C_{k_\epsilon}} \abs{g(y) - \MLP_{\gamma'}(y)} \leq \epsilon$. Moreover, using the property of ReLU, it is easy to see that the following function can be implemented by an MLP:
    \begin{equation*}
        \MLP_{\gamma''}(t) = \begin{cases}
            -k_\epsilon &\text{for $t\leq -k_\epsilon$} \\
            t &\text{for $-k_\epsilon \leq t \leq k_\epsilon$} \\
            k_\epsilon &\text{for $t\geq k_\epsilon$}
        \end{cases}
    \end{equation*}
    Then, we define $\MLP_\gamma = \MLP_{\gamma''} \circ \MLP_{\gamma'}$, where $\MLP_{\gamma''}$ is applied coordinate-wise. As a result, we have $\MLP_\gamma(y) = \MLP_{\gamma'}(y)$ on $C_{k_\epsilon}$, and $\norm{\MLP_\gamma(y)}_\infty \leq k_{\epsilon}$ outside. Then, we have
    \begin{align*}
        \norm{g \circ f - \MLP_\gamma \circ f}_{L^2}^2 &= \int \norm{g \circ f(x) - \MLP_\gamma \circ f(x)}^2 dP(x) \\
        &\leq \int_{\Xx_{k_\epsilon}} \norm{g \circ f(x) - \MLP_\gamma \circ f(x)}^2 dP(x) \\
        &\qquad+ 2\Big(\int_{\Xx_{k_\epsilon}^c} \norm{g \circ f}^2dP(x) + \int_{\Xx_{k_\epsilon}^c} \norm{\MLP_\gamma \circ f}^2 dP(x)\Big)
    \end{align*}
    For the first term, since on $\Xx_{k_\epsilon}$ we have $f(x) \in C_{k_\epsilon}$, we use the approximation property and we have
    $$
    \int_{\Xx_{k_\epsilon}} \norm{g \circ f(x) - \MLP_\gamma \circ f(x)}^2 dP(x) \leq \epsilon^2
    $$
    For the second term, since $\norm{f(x)}^2 \geq dk_{\epsilon}^2 \geq 1$ on $\Xx_{k_\epsilon}^c$, we have
    \begin{align*}
        \int_{\Xx_{k_\epsilon}^c} \norm{g \circ f}^2dP(x) &\leq 2 \int_{\Xx_{k_\epsilon}^c} \norm{g \circ f - g(0)}^2dP(x) + \norm{g(0)}^2 \int_{\Xx_{k_\epsilon}^c} 1 dP(x) \\
        &\leq 2(L_g^2 + \norm{g(0)}^2) \int_{\Xx_{k_\epsilon}^c} \norm{f}^2dP(x) \leq 2\epsilon^2
    \end{align*}
    And for the third term, given the property of the built MLP, 
    \begin{align*}
    \int_{\Xx_{k_\epsilon}^c} \norm{\MLP_\gamma \circ f}^2dP(x) &\leq \int_{\Xx_{k_\epsilon}^c} dk_\epsilon^2 dP(x) \\
    &\leq \int_{\Xx_{k_\epsilon}^c} \norm{f}^2dP(x) \leq \epsilon^2
    \end{align*}
    which concludes the proof.
\end{proof}

\begin{lemma}\label{lem:univ-cgnn}
    Let $f \in \Ff_c(\Bb)$. For all $\epsilon>0$, there exists $\theta$ and $f^{(0)} \in \Bb$ such that
    \begin{equation}
        \norm{\gnn_\theta(\cop, f^{(0)}) - f} \leq \epsilon
    \end{equation}
\end{lemma}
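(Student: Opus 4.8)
The plan is to proceed by induction on the number of layers $L$ used to build $f \in \Ff_c(\Bb)$, peeling off the \emph{outermost} layer at each step; this ordering is precisely what keeps the error propagation under control. For the base case $L=0$ we have $f = f^{(0)} \in \Bb$, and the trivial c-GNN with no hidden layer and final linear map equal to the identity returns $f^{(0)}$ exactly. For the inductive step, write $f = g_1 \circ h + g_2 \circ \cop h$ where $h = f^{(L-1)} \in \Ff_c(\Bb)$ is built with $L-1$ layers and $g_1, g_2 \in \CLip$. By the induction hypothesis, for any $\delta>0$ there is a c-GNN with output $\hat h \eqdef \gnn_{\theta'}(\cop, f^{(0)})$ satisfying $\norm{\hat h - h}_{L^2} \le \delta$.

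The key construction is to append to this inner c-GNN a block that applies $\cop$ once and then approximates the Lipschitz combination $g_1 \circ (\cdot) + g_2 \circ \cop(\cdot)$. For the graph-operator step I would use a single c-GNN layer that losslessly stores both $\hat h$ and $\cop \hat h$ through the ReLU, exploiting the identity $t = \rho(t) - \rho(-t)$: choosing $\theta_0 = [\Id, -\Id, 0, 0]$, $\theta_1 = [0, 0, \Id, -\Id]$ and $b = 0$ produces the feature function $s \eqdef (\rho(\hat h), \rho(-\hat h), \rho(\cop \hat h), \rho(-\cop \hat h)) \in L^2$, from which $\hat h$ and $\cop \hat h$ are recovered by a fixed linear map. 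I would then stack further ReLU layers with the graph operator switched off (i.e. $\theta_1 = 0$), which realize an MLP acting on $s$. Since $G \eqdef g_1 \circ \mathrm{recov}_1 + g_2 \circ \mathrm{recov}_2$ (recovery being linear, hence $G$ Lipschitz) and $s \in L^2$, Lemma~\ref{lem:univL2} provides an MLP with $\norm{\MLP_\gamma \circ s - G \circ s}_{L^2} \le \epsilon'$, and the whole appended block is a valid sequence of c-GNN layers.

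The error bound then follows from a single triangle inequality. Writing the final output as $\MLP_\gamma \circ s(\hat h)$ and the target as $f = G \circ s(h)$, split
\[
  \norm{\MLP_\gamma \circ s(\hat h) - f}_{L^2} \le \norm{\MLP_\gamma \circ s(\hat h) - G \circ s(\hat h)}_{L^2} + \norm{G \circ s(\hat h) - G \circ s(h)}_{L^2} .
\]
The first term is $\le \epsilon'$ by Lemma~\ref{lem:univL2}, because the MLP is designed \emph{after} the inner c-GNN is fixed, hence directly along the distribution of its input $s(\hat h)$. The second term is bounded using the Lipschitz constant $L_G$ of $G$ together with the boundedness of $\cop$ and the $1$-Lipschitzness of $\rho$: $\norm{s(\hat h) - s(h)}_{L^2} \le C \norm{\hat h - h}_{L^2} \le C \delta$ for a constant $C$ depending only on $\norm{\cop}$, so the second term is $\le L_G C \delta$. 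Choosing $\delta$ and $\epsilon'$ small enough that both terms are at most $\epsilon/2$ concludes the step.

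The main obstacle is exactly the control of error propagation across layers: the guarantee of Lemma~\ref{lem:univL2} is an $L^2$ bound \emph{tied to a fixed input distribution}, so naively composing approximate layers would force one to bound the uncontrolled Lipschitz constants of the approximating MLPs, and these could blow up as the accuracy improves. The outer-layer induction circumvents this, since at each step the new MLP is the last component and is fitted to the already-fixed function $\hat h$; only the fixed constants $L_{g_1}$, $L_{g_2}$ and $\norm{\cop}$ enter the amplification, all independent of the approximation accuracy. The lossless $\rho(t) - \rho(-t)$ encoding is the complementary ingredient that lets a ReLU c-GNN layer reproduce the linear operation $\cop$ (and the identity branch) exactly, so no error is introduced by the single message-passing step itself.
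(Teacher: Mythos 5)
Your proof is correct and follows essentially the same strategy as the paper's: approximate the network layer by layer, fitting each new MLP (via Lemma~\ref{lem:univL2}) to the distribution of the already-fixed approximate inner network, so that the error recursion only involves the Lipschitz constants of the target functions $g^{(\ell)}_i$ and $\norm{\cop}$ rather than those of the approximating MLPs. Your backward induction on the outermost layer unrolls to exactly the paper's forward construction and error recursion; the explicit $\rho(t)-\rho(-t)$ encoding is a welcome implementation detail that the paper leaves implicit.
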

\begin{proof}
    Let $f \in \Ff_c(\Bb)$ be constructed as \eqref{eq:Fc}. Denote by $L_{g^{(\ell)}_i}$ the Lipschitz constant of $g^{(\ell)}_i \in \CLip(\RR^{d_\ell},\RR^{d_{\ell+1}})$. Let $\epsilon>0$.

    We build the following continuous GNN: $\bar f^{(0)} = f^{(0)}$, and
    \begin{align*}
        \bar f^{(\ell+1)} &= \MLP_{\theta^{(\ell)}_1} \circ \bar f^{(\ell)} + \MLP_{\theta^{(\ell)}_2} \circ \cop \bar f^{(\ell)} \\
        \gnn_\theta(\cop, \bar f^{(0)}) &= \bar f^{(L)}
    \end{align*}
    for well-chosen MLPs. We design them by increasing layer indices: assuming the MLPs up to layer $\ell-1$ are choosen (i.e. $\bar f^{(\ell)}$ is chosen), we use Lemma \ref{lem:univL2} and choose $\theta^{(\ell)}_i$ (which depends on $\theta^{(0)},\ldots, \theta^{(\ell-1)}$ then) such that
    \begin{align*}
        \norm{\pa{g^{(\ell)}_1 -\MLP_{\theta^{(\ell)}_1}}\circ \bar f^{(\ell)} }_{L^2} +
        \norm{\pa{g^{(\ell)}_2 -\MLP_{\theta^{(\ell)}_2}}\circ \cop \bar f^{(\ell)} }_{L^2} \leq \epsilon^{(\ell)} \eqdef \frac{\epsilon}{L\prod_{q=\ell+1}^{L-1}\pa{ L_{g_1^{(\ell)}} + L_{g_2^{(\ell)}} \norm{\cop}}}
    \end{align*}
    Then we get
    \begin{align*}
        \norm{f^{(\ell+1)} - \bar f^{(\ell+1)}}_{L^2} &\leq \norm{g^{(\ell)}_1 \circ f^{(\ell)} -\MLP_{\theta^{(\ell)}_1} \circ \bar f^{(\ell)}}_{L^2}  + \norm{g^{(\ell)}_2 \circ \cop f^{(\ell)} -\MLP_{\theta^{(\ell)}_2} \circ \cop \bar f^{(\ell)}}_{L^2} \\
        &\leq \norm{g^{(\ell)}_1 \circ f^{(\ell)} - g^{(\ell)}_1 \circ \bar f^{(\ell)}}_{L^2}  + \norm{g^{(\ell)}_2 \circ \cop f^{(\ell)} - g^{(\ell)}_2 \circ \cop \bar f^{(\ell)}}_{L^2} + \epsilon^{(\ell)} \\
        &\leq \pa{L_{g_1^{(\ell)}} + L_{g_2^{(\ell)}} \norm{\cop}} \norm{f^{(\ell)} - \bar f^{(\ell)}}_{L^2} + \epsilon^{(\ell)}
    \end{align*}
    Hence by a simple recursion and since $f^{(0)} = \bar f^{(0)}$ we have
    \begin{equation*}
        \norm{f^{(L)} - \bar f^{(L)}}_{L^2} \leq \sum_{\ell=0}^{L-1} \pa{\prod_{q=\ell+1}^{L-1}\pa{L_{g_1^{(\ell)}} + L_{g_2^{(\ell)}} \norm{\cop}}} \epsilon^{(\ell)} \leq \epsilon
    \end{equation*}
    by our choice of $\epsilon^{(\ell)}$, which concludes the proof.
\end{proof}

\begin{proof}[Proof of Theorem \ref{thm:main-fgnn}]

  We start with the inclusion $\Ff_{\cop} \subseteq \Ff_\GNN$.
  Let $f \in \Ff_{\cop} (\Bb)$ and $\epsilon>0$.
  By Lemma~\ref{lem:fcisdense}, there is $f_c \in \Ff_c (\Bb)$ constructed as \eqref{eq:Fc} such that $\norm{f-f_c}_{L^2} \leq \epsilon/3$, and we use the weak law of large numbers to obtain that
  $
  \PP(\normMSE{\samp{X}(f-f_c)}\geq \epsilon/3) \convn 0.
  $. By Lemma \ref{lem:univ-cgnn}, there exists $\theta$ such that $\norm{\gnn_\theta(\cop, f^{(0)}) - f_c}_{L^2} \leq \epsilon/3$.
  Again by the LLN, we have that $\PP(\normMSE{\samp{X} (\gnn_\theta(\cop, f^{(0)}) - f_c)} \geq \epsilon/3) \to 0$.
  Finally, by Lemma~\ref{lem:cvgtthm}), we also have
  \begin{align*}
  \PP\pa{\normMSE{\gnn_\theta(\dop, \samp{X}f^{(0)})-\samp{X} \gnn_\theta(\cop, f^{(0)})} \geq \epsilon/3} \convn 0 .    
  \end{align*}
  Using a triangular inequality, we have
  \begin{align*}
  \normMSE{\gnn_\theta(\samp{X}f^{(0)}) - \samp{X}f}
  \leq &
  \normMSE{\gnn_\theta(\samp{X}f^{(0)}) -  \samp{X} \gnn_\theta(f^{(0)})} +
  \normMSE{\samp{X} (\gnn_\theta(f^{(0)}) - f_c)} \\ 
  & + \normMSE{\samp{X} (f_c - f)} .
  \end{align*}
  We conclude by a union bound, and $f \in \Ff_\GNN(\Bb)$.

  For the reverse inclusion,
    let $f \in \Ff_\GNN(\Bb)$. By hypothesis, for all $m\in \NN$, there are $\theta \in \Theta$, $f^{(0)}\in \Bb$ such that
    $$
    \PP\pa{\normMSE{\gnn_{\theta}(\dop, \samp{X}f^{(0)}) - \samp{X}f} \geq 1/m} \convn 0
    $$
    By Lemma \ref{lem:cvgtthm},
    $$
    \PP\pa{\normMSE{\gnn_{\theta}(\dop, \samp{X} f^{(0)}) - \samp{X}\gnn_\theta(\cop, f^{(0)}) } \geq 1/m} \convn 0
    $$
    By the LLN,
    $$
    \PP\pa{\abs{\normMSE{\samp{X}(f- \gnn_\theta(\cop, f^{(0)}))} - \norm{f-\gnn_\theta(\cop, f^{(0)})}_{L^2}} \geq 1/m} \to 0
    $$
    Hence, b a union bound and triangular inequality, we obtain the deterministic bound $\norm{f-\gnn_\theta(\cop, f^{(0)})}_{L^2} \leq 3/m$. Since $\gnn_\theta(\cop, f^{(0)}) \in \Ff_{\cop}(\Bb)$ and $\Ff_{\cop}(\cop)$ is closed, by taking $m \to \infty$ we have $f \in \Ff_{\cop}(\Bb)$.
\end{proof}

\subsection{Proof of Prop.~\ref{prop:universality}}\label{app:prop:universality}
Remark that $\Ff_\cop(\Bb) \cap \CLip(\Xx, \RR)$ is in fact a subalgebra of $\CLip(\Xx, \RR)$. Indeed it is a vector space, and moreover it is stable by multiplication: for $f,g \in \Ff_\cop(\Bb) \cap \CLip(\Xx, \RR)$, by stability of $\Ff_\cop(\Bb)$ by composition with continuous functions we have that $x \mapsto [f(x), 0]$, $x \mapsto [0, g(x)]$ are in $\Ff_\cop(\Bb)$, then $(x \mapsto [f(x), g(x)]) \in \Ff_\cop(\Bb)$ by linearity (that is, $\Ff_\cop$ is stable by concatenation), and since $(x,y) \mapsto xy$ is continuous, $(x \mapsto f(x)g(x)) \in \Ff_\cop(\Bb) \cap \CLip(\Xx, \RR)$.

Hence $\Ff_\cop(\Bb) \cap \CLip(\Xx, \RR)$ is a subalgebra of $\CLip(\Xx, \RR)$ and since it separates points by hypothesis, by the Stone-Weierstrass theorem \cite{Hornik1989} it is dense in $\CLip(\Xx,\RR)$ for the uniform norm, and \emph{a fortiori} in $\sqcup_d \CLip(\Xx, \RR^d)$ by concatenation. Since continuous functions are dense in square-integrable functions \cite[Sec. 8.2]{Folland1999} and the $L^2$ norm is dominated by the uniform norm, it results that $\Ff_\cop(\Bb)$ is dense in $L^2_\sqcup$, and even $\Ff_\cop(\Bb) = L^2_\sqcup$ because it is closed.

\section{Proof of Sec.~\ref{sec:pe}} \label{app:pe}

We introduce general notations that are valid all throughout this section of appendix. In this appendix, we will only assume:
\begin{assumption} \label{assumption-pe} We have the following.
    \begin{enumerate}
        \item a bounded kernel $w_\cop$, and either $w_\cop$ is p.s.d. or $\Xx$ is finite;
        \item the operator $\cop f = \int w_\cop(\cdot, x) dP(x)$;
        \item the Gram matrix $W = [w_\cop(x_i,x_j)/n]$;
        \item a graph matrix $\dop$ such that
        \begin{equation}
            \label{eq:generic-convergence}
            \norm{S-W} \to 0
        \end{equation}
        in probability.
    \end{enumerate}
\end{assumption}
These assumptions are verified for \textbf{both} adjacency and normalized Laplacian: in the first case, we take $w_\cop = w$, and $\norm{A/(\alpha_n n) - W}\to 0$ by Theorem \ref{thm:concentration-opnorm}, and in the second, we take $w_\cop(x,y) = \frac{w(x,y)}{\sqrt{d(x)d(y)}}$, which is bounded by our assumptions on $d$, and p.s.d. when $w$ is itself p.s.d., and we have indeed $\norm{L - W} \to 0$ by Theorem \ref{thm:concentration-opnorm}.
We will also use the following property.
\begin{lemma}\label{lem:fpeclosed}
    $\Ff_\PE$ is closed.
\end{lemma}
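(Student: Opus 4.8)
The plan is to prove closure directly from Definition~\ref{def:Fpe}: I take a sequence $(f_m)_m \subset \Ff_\PE$ converging to some $f \in L^2_\sqcup$ and show $f \in \Ff_\PE$. First I would exploit the structure of the disjoint-union metric on $L^2_\sqcup$: since the distance between functions of different output dimensions equals $1$, convergence $f_m \to f$ forces all $f_m$ for $m$ large to share the output dimension $q$ of $f$, with genuine $L^2$-convergence $\norm{f_m - f}_{L^2} \to 0$. It therefore suffices to argue within a fixed $L^2_q$.

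For the core argument, fix $\epsilon > 0$. I would pick $m$ large enough that $\norm{f_m - f}_{L^2} \leq \epsilon/3$, then invoke the membership $f_m \in \Ff_\PE$ at approximation level $\epsilon/3$ to obtain a parameter $\gamma \in \Gamma$ (allowed to depend on $\epsilon$) with
\[
    \PP\Big(\normMSE{\PE_\gamma(\dop,Z)-\samp{X} f_m} \geq \epsilon/3\Big) \convn 0 .
\]
Here $m$ and $\gamma$ are chosen deterministically \emph{before} letting $n \to \infty$. The triangle inequality for the normalized Frobenius norm gives
\[
    \normMSE{\PE_\gamma(\dop,Z)-\samp{X} f} \leq \normMSE{\PE_\gamma(\dop,Z)-\samp{X} f_m} + \normMSE{\samp{X}(f_m - f)} .
\]
The second summand is controlled by the law of large numbers, since $\normMSE{\samp{X}(f_m - f)}^2 = n^{-1}\sum_i \norm{f_m(x_i)-f(x_i)}^2 \convnproba \norm{f_m-f}_{L^2}^2 \leq (\epsilon/3)^2$, whence $\PP\big(\normMSE{\samp{X}(f_m - f)} \geq 2\epsilon/3\big) \convn 0$. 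Because the event $\{\normMSE{\PE_\gamma(\dop,Z)-\samp{X} f} \geq \epsilon\}$ forces one of the two summands to exceed $\epsilon/3$ resp. $2\epsilon/3$, a single union bound yields $\PP\big(\normMSE{\PE_\gamma(\dop,Z)-\samp{X} f} \geq \epsilon\big) \convn 0$, which is exactly $f \in \Ff_\PE$.

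The argument is structurally routine, and the one point that requires care is the \emph{order of quantifiers}: the convergence statements in Definition~\ref{def:Fpe} are limits in $n$ at fixed $(\gamma,m)$, so I would fix $m$ (hence $\gamma$) purely as a function of $\epsilon$ and $f$ using the $L^2$-convergence of $f_m$, and only afterwards take $n \to \infty$, at which point the LLN statement and the PE-convergence statement are two independent in-probability limits that combine cleanly. A secondary bookkeeping point is to ensure that the chosen $\gamma$ produces a PE with $q$ columns, so that $\normMSE{\PE_\gamma(\dop,Z)-\samp{X} f_m}$ is well-defined; this is automatic once $f_m$ and $f$ share the dimension $q$. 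I expect no deeper obstacle, as the closure here is essentially inherited from the completeness of $L^2_q$ together with the LLN.
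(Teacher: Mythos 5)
Your proof is correct and follows essentially the same route as the paper's: fix $\epsilon$, choose $m$ with $\norm{f_m-f}_{L^2}$ small, invoke the definition of $\Ff_\PE$ for $f_m$ to get $\gamma$, control $\normMSE{\samp{X}(f_m-f)}$ by the law of large numbers, and conclude by the triangle inequality and a union bound. Your split into $\epsilon/3$ and $2\epsilon/3$ is a slightly more careful version of the paper's $\epsilon/2$--$\epsilon/2$ split (it avoids the borderline case where $\norm{f_m-f}_{L^2}$ equals the threshold exactly), but the argument is the same.
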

\begin{proof}
Let $f_m$ be a sequence in $\Ff_\PE$ that converges to a $f \in L^2_q$ for some $q$. Remark that $f_m\in L^2_q$ for all $m$ big enough. Let $\epsilon >0$, and $m$ be such that $\norm{f_m-f}_{L^2} \leq \epsilon/2$. By the law of large numbers, for any fixed $m$, $\normMSE{\samp{X}(f_m-f)}$ converges to $\norm{f_m-f}_{L^2} \leq \epsilon/2$ almost surely and \emph{a fortiori} in probability, such that $\PP(\normMSE{\samp{X}(f_m-f)} \geq \epsilon/2) \convn 0$.
By definition, there is $\gamma \in \Gamma$ such that $\PP(\normMSE{\PE_\gamma(\dop, Z)-\samp{X} f_m} \geq \epsilon/2) \to 0$. Hence, by a union bound
\begin{align*}
&\PP(\normMSE{\PE_\gamma(\dop, Z)-\samp{X} f} \geq \epsilon) \\
&\quad \leq \PP(\normMSE{\PE_\gamma(\dop, Z)-\samp{X} f_m}\geq \epsilon/2) + \PP(\normMSE{\samp{X}(f_m-f)} \geq \epsilon/2) \convn 0
\end{align*}
and therefore $f \in \Ff_\PE$.
\end{proof}

\subsection{Proof of Prop.~\ref{prop:feat-noise}}\label{app:feat-noise}

We have
\begin{align*}
    \normMSE{Z - \samp{X}\cop f^{(0)}} \leq \normMSE{\dop \samp{X} f^{(0)} - \samp{X}\cop f^{(0)}} + \normMSE{\dop \nu}
\end{align*}
The first term goes to $0$ by Prop.~\ref{prop:cvgt-mse-example}.

For the second term, using $\normFro{AB} \leq \norm{A}\normFro{B}$, we have
\begin{equation*}
    \normMSE{\dop \nu} \leq \norm{\dop - W} \normMSE{\nu} + \normMSE{W \nu}
\end{equation*}
The first term goes to $0$ in probability since $\normMSE{\epsilon}$ is bounded with probability going to $1$ and $\norm{\dop-W} \to 0$ for our examples of graph operators. For the second term, we have
\begin{equation*}
    \normMSE{W \nu}^2 = \sum_{\ell=1}^{d_0} \frac1{n} \sum_i \pa{\frac1{n}\sum_j w_\cop(x_i,x_j) \nu_{j\ell}}^2 \leq \sum_\ell \norm{\frac1{n}\sum_j w_\cop(\cdot,x_j) \nu_{j\ell}}_\infty^2
\end{equation*}
Using Lemma \ref{lem:chaining} with the iid variable $y_j = (x_j, \nu_{j \ell})$ and $\EE w_\cop(\cdot, x_j) \nu_{j\ell} = 0$ since $\nu$ and $X$ are independent, we obtain
\begin{equation*}
    \forall \ell,~\norm{\frac1{n}\sum_j w_\cop(\cdot,x_j) \nu_{j\ell}}_\infty \convnproba 0
\end{equation*}
which concludes the proof.

\subsection{Eigenvectors positional encodings: SignNet}\label{app:signnet}

In this whole section, the number of eigenvectors $q$ is fixed, and we assume that $\lambda_1^\cop, \ldots, \lambda_{q+1}^\cop$ are pairwise distinct. We first start by generic results that allows to go from the graph matrix $\dop$ to the Gram matrix.

\begin{lemma}[Intermediate result for SignNet]\label{lem:intermediate-signnet}
   Suppose that Assumption \ref{assumption-pe} holds, that $\lambda_1^\cop, \ldots, \lambda_{q+1}^\cop$ are distinct, and that
    \begin{equation}
      \max_{i=1,\ldots,q} \min_{s \in \{1,-1\}}\normMSE{s \sqrt{n} u_i^W - \samp{X} u_i^\cop} + \abs{\lambda^W_i - \lambda^\cop_i} \convnproba 0.
    \end{equation}
    Then the result of Theorem \ref{thm:signnet} holds.
\end{lemma}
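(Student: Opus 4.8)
The plan is to reduce the statement to the eigenvectors of $\dop$, isolate a single Lipschitz-stability estimate that exploits the sign-invariance of $\signop$, and then prove the two inclusions $\overline{\Feig}\subseteq\Ff_\PE$ and $\Ff_\PE\subseteq\overline{\Feig}$. First I would transfer the hypothesis from $W$ to $\dop$. Assumption~\ref{assumption-pe} gives $\norm{\dop-W}\convnproba 0$, so Weyl's inequality yields $\abs{\lambda_i^\dop-\lambda_i^W}\to 0$, and the distinctness of $\lambda_1^\cop,\dots,\lambda_{q+1}^\cop$ provides a spectral gap $\delta>0$ separating the first $q$ eigenvalues of $W$ and of $\dop$ for $n$ large. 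The Davis--Kahan variant of \cite{Yu2015}, applied eigenvector by eigenvector, then gives $\min_{s\in\{1,-1\}}\norm{s\,u_i^\dop-u_i^W}\lesssim \norm{\dop-W}/\delta\to 0$ in the $\RR^n$ norm. Here the $\sqrt n$ renormalization is essential: since $\normMSE{\cdot}=n^{-1/2}\normFro{\cdot}$, the factor $\sqrt n$ cancels and $\min_s\normMSE{s\sqrt n u_i^\dop-\sqrt n u_i^W}=\min_s\norm{s u_i^\dop-u_i^W}\to 0$ without any blow-up in $n$. Combining with the hypothesis through the triangle inequality produces the analogous statement for $\dop$, namely $\max_{i\le q}\min_{s\in\{1,-1\}}\normMSE{s\sqrt n u_i^\dop-\samp{X} u_i^\cop}\convnproba 0$. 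I would also record that each $u_i^\cop$ is bounded (trivially when $\Xx$ is finite, by Mercer's theorem in the p.s.d. case), so its range lies in a fixed compact symmetric interval.

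The second step is a single estimate that drives both inclusions. For any fixed $g\in\CLip(\RR,\RR^p)$ --- in particular any ReLU MLP --- the map $\signop g$ is sign-invariant, $(\signop g)(-t)=(\signop g)(t)$, and $2L_g$-Lipschitz. Using sign-invariance to replace $\sqrt n u_i^\dop$ by the sign-corrected $s_i\sqrt n u_i^\dop$, and then the Lipschitz control of $\signop g$ under the element-wise MSE norm, gives
\[
\normMSE{(\signop g)(\sqrt n u_i^\dop)-\samp{X}\big((\signop g)\circ u_i^\cop\big)}\;\le\;2L_g\,\normMSE{s_i\sqrt n u_i^\dop-\samp{X} u_i^\cop}\;\convnproba 0 .
\]

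For the inclusion $\Ff_\PE\subseteq\overline{\Feig}$, I would take $f\in\Ff_\PE$ and $\epsilon>0$; the defining MLPs $\MLP_{\gamma_i}$ are Lipschitz, so summing the previous estimate over the $q$ column-blocks shows $\PE_\gamma(\dop)$ is MSE-close to $\samp{X} g_\gamma$ with $g_\gamma=[(\signop\MLP_{\gamma_i})\circ u_i^\cop]_{i=1}^q\in\Feig$. A triangle inequality with the defining bound $\PE_\gamma(\dop)\approx\samp{X} f$, followed by the law of large numbers to pass from $\normMSE{\samp{X}(g_\gamma-f)}$ to $\norm{g_\gamma-f}_{L^2}$, yields a deterministic bound $\norm{g_\gamma-f}_{L^2}\lesssim\epsilon$, whence $f\in\overline{\Feig}$. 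For the reverse inclusion, since $\Ff_\PE$ is closed (Lemma~\ref{lem:fpeclosed}) it suffices to show $\Feig\subseteq\Ff_\PE$: given $f=[(\signop f_i)\circ u_i^\cop]_i$, I would use Lemma~\ref{lem:univL2} to produce ReLU MLPs with $\norm{(\signop f_i-\signop\MLP_{\gamma_i})\circ u_i^\cop}_{L^2}$ arbitrarily small, and combine this $L^2$ error (transferred to the samples by the law of large numbers) with the estimate of the second step.

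The main obstacle is this last inclusion. Two points are genuinely delicate. First, the sign ambiguity of eigenvectors, which is exactly neutralized by the sign-invariance of $\signop$ used in the key estimate. Second, the entries of $\sqrt n u_i^\dop$ are only close to the bounded range of $u_i^\cop$ in the averaged MSE sense, so a nonzero fraction of them may stray outside any compact interval $K_i$; a plain uniform approximation of $f_i$ on $K_i$ is then insufficient, because the Lipschitz growth of an MLP on the stray entries is uncontrolled. This is precisely why I rely on the ReLU-clipping construction of Lemma~\ref{lem:univL2}, whose output stays bounded outside $K_i$, so that the stray entries --- of vanishing empirical density --- contribute a vanishing term to the MSE. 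The Davis--Kahan transfer of the first step is standard but indispensable, as it is the bridge from the $W$-statement assumed in the lemma to the $\dop$-statement actually used downstream.
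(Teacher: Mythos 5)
Your proposal is correct and follows essentially the same route as the paper's proof: transfer from $W$ to $\dop$ via the eigenvalue perturbation bound (the paper calls it Kato's inequality) and Davis--Kahan, a Lipschitz-plus-sign-invariance estimate for $\signop\MLP_{\gamma_i}$, Lemma~\ref{lem:univL2} combined with the law of large numbers for $\Feig\subseteq\Ff_\PE$, and the triangle-inequality/LLN argument giving a deterministic $L^2$ bound for the reverse inclusion. Your additional remarks on the boundedness of $u_i^\cop$ and the role of ReLU clipping are consistent with how the paper uses the global Lipschitz property of the MLP and Lemma~\ref{lem:univL2}.
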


\begin{proof}
    Let $f \in \Feig$, written as \eqref{eq:feig}.
    By Assumption \ref{assumption-pe}, 
    $
        \norm{\dop - W} \to 0
    $.
    By Kato's inequality, we have that $\sup_i \abs{\lambda_i^\dop - \lambda_i^W} \to 0$, and by hypothesis, the eigenvalues of $W$ converge to those of $\cop$. Given the hypotheses on the eigenvalues of $\cop$, with probability going to $1$ the $q+1$ first eigenvalues of $\dop$ have single multiplicities. When it is the case, according to Davis-Kahan theorem (Theorem \ref{thm:davis-kahan}), for all $i=1,\ldots, q$ there is $s_i \in \{-1,1\}$ such that
    \begin{equation}
        \max_i \norm{s_i u^\dop_i - u^W_i} \to 0
    \end{equation}
    which, combined with our hypotheses, yields
    \begin{equation}
      \max_{i=1,\ldots,q} \min_{s \in \{1,-1\}}\normMSE{s \sqrt{n} u_i^\dop - \samp{X} u_i^\cop} \convnproba 0.
    \end{equation}

    For $i=1,\ldots, q$, let $f_i:\RR \to \RR^{p_i}$ be continuous functions and $\epsilon>0$.
    By Lemma \ref{lem:univL2}, there is $\MLP_{\gamma_i}$ such that $\norm{(\MLP_{\gamma_i} - f_i)\circ u^\cop_i}_{L^2} \leq \epsilon/(2q)$. Then, call $L_i$ the (uniform) Lipschitz constant of $\MLP_{\gamma_i}$ on $\RR$. We have
    \begin{align*}
        \normMSE{\PE_\gamma - \samp{X} [(\signop f_i) \circ u_i^\cop]_{i=1}^q} &= \normMSE{[(\signop \MLP_{\gamma_i})( \sqrt{n}u_i^\dop)]_{i=1}^q - \samp{X} [(\signop f_i) \circ u_i^\cop]_{i=1}^q} \\
        &\leq \normMSE{[(\signop \MLP_{\gamma_i})( \sqrt{n} u_i^\dop)]_{i=1}^q - \samp{X} [(\signop \MLP_{\gamma_i}) \circ u_i^\cop]_{i=1}^q} \\
        &\quad + \normMSE{\samp{X}[(\signop \MLP_{\gamma_i})\circ u_i^\cop]_{i=1}^q - \samp{X} [(\signop f_i) \circ u_i^\cop]_{i=1}^q} \\
        &\leq \sum_i \normMSE{(\signop \MLP_{\gamma_i})( \sqrt{n} u_i^\dop) - \samp{X} (\signop \MLP_{\gamma_i}) \circ u_i^\cop} \\
        &\qquad + \normMSE{\samp{X}(\signop \MLP_{\gamma_i})\circ u_i^\cop - \samp{X} (\signop f_i) \circ u_i^\cop} \\
        &\leq 2 \sum_i \min_{s_i \in \{1,-1\}}\normMSE{\MLP_{\gamma_i}(s_i \sqrt{n} u_i^\dop) - \samp{X}  \MLP_{\gamma_i} \circ u_i^\cop} \\
        &\qquad + 2 \normMSE{\samp{X}(\MLP_{\gamma_i} - f_i)\circ u_i^\cop}
    \end{align*}
    The first term goes to $0$ in probability by what precedes, while for the second
    $$
    \sum_i \normMSE{\samp{X}(\MLP_{\gamma_i} - f_i)\circ u_i^\cop} \convnproba \sum_i \norm{(\MLP_{\gamma_i} - f_i)\circ u^\cop_i}_{L^2} \leq \epsilon/2
    $$
    which proves that $\Feig \subset \Ff_\PE$, and thus $\overline{\Feig} \subset \Ff_\PE$ since $\Ff_\PE$ is closed by Lemma \ref{lem:fpeclosed}.
    
    For the reverse inclusion, let $f \in \Ff_\PE$. By hypothesis, for all $m\in \NN$, there is $\gamma \in \Gamma$, such that
    $$
    \PP\pa{\normMSE{\PE_\gamma - \samp{X}f} \geq 1/m} \convn 0
    $$
    By what precedes, if we write $\PE_\gamma = (\signop \MLP_{\gamma_i})( \sqrt{n}u_i^\dop)]_{i=1}^q$, we have
    $$
    \normMSE{\PE_\gamma - \samp{X} [(\signop \MLP_{\gamma_i}) \circ u_i^\cop]_{i=1}^q} \convnproba 0
    $$
    By the LLN,
    $$
    \PP\pa{\abs{\normMSE{\samp{X}(f- [(\signop \MLP_{\gamma_i}) \circ u_i^\cop]_{i=1}^q)} - \norm{f-[(\signop \MLP_{\gamma_i}) \circ u_i^\cop]_{i=1}^q}_{L^2}} \geq 1/m} \to 0
    $$
    Hence, by a union bound and triangular inequality, we obtain the deterministic bound $\norm{f-[(\signop \MLP_{\gamma_i}) \circ u_i^\cop]_{i=1}^q}_{L^2} \leq 3/m$. Since $[(\signop \MLP_{\gamma_i}) \circ u_i^\cop]_{i=1}^q \in \Feig$, we have $f \in \overline{\Feig}$.
\end{proof}

We must now prove the hypothesis of Lemma \ref{lem:intermediate-signnet}. This is done separately for p.s.d. kernel and SBM.

\subsubsection{Positive semidefinite kernels}\label{app:signnet-psd}

In this subsection, we assume that $w_\cop$ is p.s.d. The following result is adapted from \cite{Rosasco2010}.
\begin{theorem}[Adapted from \cite{Rosasco2010}]\label{thm:signnet-psd}
    Suppose that Assumption \ref{assumption-pe} holds, that $\lambda_1^\cop, \ldots, \lambda_{q+1}^\cop$ are pairwise distinct, and that $w_\cop$ is p.s.d. (Ex.~\ref{ex:psd}). Then:
    \begin{equation}
        \max_{i=1,\ldots,q} \min_{s \in \{1,-1\}}\normMSE{s \sqrt{n} u_i^W - \samp{X} u_i^\cop} + \abs{\lambda^W_i - \lambda^\cop_i} \to 0
    \end{equation}
    in probability.
\end{theorem}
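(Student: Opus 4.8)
The plan is to carry out the whole argument inside the reproducing kernel Hilbert space (RKHS) $\Hh$ associated with the p.s.d. kernel $w_\cop$, following and then adapting \cite{Rosasco2010}. Since $w_\cop$ is symmetric, bounded and p.s.d. on the compact set $\Xx$, it is a Mercer kernel, and with $K_x \eqdef w_\cop(\cdot, x) \in \Hh$ we have $\kappa^2 \eqdef \sup_x w_\cop(x,x) = \sup_x \norm{K_x}_\Hh^2 < \infty$. On $\Hh$ I would introduce the two self-adjoint, positive, trace-class operators $T \eqdef \int K_x \otimes K_x \, dP(x)$ and $T_n \eqdef \frac1n \sum_i K_{x_i} \otimes K_{x_i}$, so that $T = \EE T_n$ and, by the reproducing property, $T\phi = \int \phi(x) K_x \, dP(x)$ and $T_n \phi = \frac1n\sum_i \phi(x_i) K_{x_i}$. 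The two spectral identities that make this framework effective are: (i) $T$ and $\cop$ share the same nonzero eigenvalues, with eigenfunctions in correspondence through the inclusion $\Hh \hookrightarrow L^2$; and (ii) $T_n$ and the Gram matrix $W$ share the same nonzero eigenvalues, with eigenvectors in correspondence through the sampling map $f \mapsto \samp{X} f$. Thus $\lambda_i^T = \lambda_i^\cop$, $\lambda_i^{T_n} = \lambda_i^W$, and the objects $\sqrt n u_i^W$ and $\samp{X} u_i^\cop$ appearing in the statement are, after the normalization already built into the claim, the images under these restriction maps of the unit-norm $\Hh$-eigenfunctions of $T_n$ and $T$.

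First I would establish the concentration step $\norm{T_n - T}_{\textup{HS}} \convnproba 0$. This is a law of large numbers in the Hilbert space of Hilbert--Schmidt operators: the summands $\xi_i \eqdef K_{x_i}\otimes K_{x_i}$ are i.i.d., centered at $T$, and bounded since $\norm{\xi_i}_{\textup{HS}} = w_\cop(x_i,x_i) \le \kappa^2$. A second-moment computation gives $\EE\norm{T_n - T}_{\textup{HS}}^2 \le \kappa^4/n$, and Chebyshev's inequality yields convergence in probability (a Bernstein-type bound would give exponential rates, which are not needed here).

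The spectral conclusions then follow from perturbation theory applied to $T_n$ and $T$, using that the operator norm is dominated by the HS norm. Weyl's inequality gives $\abs{\lambda_i^{T_n} - \lambda_i^T} \le \norm{T_n - T} \to 0$, that is $\abs{\lambda_i^W - \lambda_i^\cop} \to 0$ for every $i \le q$. For the eigenvectors, the hypothesis that $\lambda_1^\cop,\ldots,\lambda_{q+1}^\cop$ are distinct provides a uniform spectral gap $\delta>0$ isolating each of the first $q$ eigenvalues; by the eigenvalue convergence this gap persists for $T_n$ with probability going to $1$, so the Davis--Kahan theorem (Thm.~\ref{thm:davis-kahan}) applies to each simple eigenspace and produces a sign $s_i \in \{-1,1\}$ with $\norm{s_i \hat\phi_i - \phi_i}_\Hh \convnproba 0$, where $\phi_i$ and $\hat\phi_i$ denote the unit-norm eigenfunctions of $T$ and $T_n$.

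The last and most delicate step is to convert this $\Hh$-norm convergence into the stated empirical-MSE comparison. Using the correspondences (i)--(ii), $\samp{X} u_i^\cop$ is the sampling of the inclusion of $\phi_i/\sqrt{\lambda_i^\cop}$ into $L^2$, while $\sqrt n u_i^W$ is obtained by applying the sampling map to $\hat\phi_i/\sqrt{\lambda_i^{T_n}}$, up to terms governed by the restriction operator. Writing $s_i\sqrt n u_i^W - \samp{X}u_i^\cop$ through these maps, I would bound $\normMSE{\cdot}$ by the $\Hh$-distance $\norm{s_i\hat\phi_i - \phi_i}_\Hh$, the eigenvalue difference $\abs{\lambda_i^W - \lambda_i^\cop}$, and a term measuring how far the empirical sampling operator is from its population counterpart — the same type of quantity controlled in the concentration step, hence $\convnproba 0$. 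The genuine work here is the bookkeeping needed to pass between ``sample then normalize'' and ``normalize then sample'' while carrying the sign $s_i$ and staying within the simple-eigenvalue regime; the boundedness of $\kappa$ and the persistent gap $\delta$ keep all constants uniform. Taking the maximum over the finitely many $i \le q$ and combining with the eigenvalue convergence yields the claimed convergence in probability.
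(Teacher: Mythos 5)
Your proposal is correct and lives in the same RKHS framework as the paper's proof (both are adaptations of \cite{Rosasco2010}), but it takes a different route through the eigenvector step and the transfer to the empirical norm. The paper invokes the convergence of spectral projections from the proof of Theorem 12 in \cite{Rosasco2010} to get $\dotp{v^\cop_i, v^W_i}_\Hh^2 \to 1$ for the unit-norm $\Hh$-eigenfunctions, then exploits an \emph{exact} reproducing-property identity $\dotp{v^\cop_i, v^W_i}_\Hh = \sqrt{\lambda^\cop_i/\lambda^W_i}\; n^{-1/2}\dotp{u^W_i, \samp{X}u^\cop_i}$, so the MSE claim follows by expanding the square and applying the LLN to $\normMSE{\samp{X}u^\cop_i}^2$. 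You instead prove HS-concentration of $T_n$ around $T$, apply Davis--Kahan in $\Hh$ to get $\norm{s_i\hat\phi_i - \phi_i}_\Hh \to 0$, and then transfer. Your final ``bookkeeping'' step does close, and more simply than you suggest: the correspondences are exact, $\sqrt{n}\,u^W_i = \samp{X}\bigl(\hat\phi_i/\sqrt{\lambda^W_i}\bigr)$ and $\samp{X}u^\cop_i = \samp{X}\bigl(\phi_i/\sqrt{\lambda^\cop_i}\bigr)$ on the support of $P$, and for any $f\in\Hh$ one has $\normMSE{\samp{X}f}^2 = \dotp{f, T_n f}_\Hh \leq \kappa^2\norm{f}_\Hh^2$ deterministically, so no additional concentration of a ``sampling operator'' is needed; the $\Hh$-distance plus eigenvalue convergence (with $\lambda^\cop_i>0$ keeping $\lambda^W_i$ bounded away from $0$ eventually) suffice. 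Two points to make rigorous: Theorem~\ref{thm:davis-kahan} as stated is for matrices, so you need its standard extension to self-adjoint compact operators with an isolated simple eigenvalue (this is effectively what \cite{Rosasco2010} proves); and you should note that each $\lambda^\cop_i$, $i\leq q$, is indeed isolated in the full spectrum of $T$ because the ordering forces $\lambda^\cop_q > \lambda^\cop_{q+1} \geq \lambda^\cop_j$ for all $j>q$. Your route buys a slightly more self-contained argument (a second-moment bound in Hilbert--Schmidt norm replaces the citation of the projection-convergence theorem), at the cost of an extra normalization/transfer step that the paper's reproducing-property identity sidesteps.
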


\begin{proof}
    
    Denote by $\Hh$ the RKHS associated with $w_\cop$, and by $T_\Hh: \Hh \to \Hh$ the kernel integral operator. Then, it is known \cite{Rosasco2010} that the spectrum of $\cop$ and $T_\Hh$ are the same (up to $0$'s), and the eigenfunctions (normalized in $\Hh$) $v_i$ of $T_\Hh$ corresponding to positive eigenvalues satisfy:
    \begin{equation}
        v^\cop_i(x) = \begin{cases}
        \sqrt{\lambda^\cop_i} u^\cop_i(x) & \text{for $x \in supp(P)$} \\
        \frac{1}{\sqrt{\lambda^\cop_i}} \int w_\cop(x, y) u^\cop_i(y) dP(y) &\text{else}
        \end{cases}
    \end{equation}

    Note that $\lambda_1^\cop > \ldots > \lambda_q^\cop > \lambda_{q+1}^\cop \geq 0$ by hypothesis. Following \cite{Rosasco2010}, we know that
    \begin{equation}
        \sup_i \abs{\lambda^W_i - \lambda^\cop_i} \to 0
    \end{equation}
    in probability, and that in particular, with probability going to one $ \lambda^W_i>0$ for all $i=1,\ldots, q$. Assuming this is satisfied for all $i$, we denote by $v^W_i = \frac{1}{\sqrt{n \lambda^W_i}} \sum_j w_\cop(\cdot, x_j) u^W_{i,j} \in \Hh$.
     Then, using the fact that the eigenvalues of $\cop$ have single multiplicities, the proof of Theorem 12 in \cite{Rosasco2010} tells us that for all $m=1,\ldots, q$,
    \begin{equation}
        \sum_{j=1}^m \sum_{i\geq m+1} \dotp{v^\cop_i, v^W_j}_\Hh^2 + \sum_{j\geq m+1} \sum_{i=1}^m \dotp{v^\cop_i, v^W_j}_\Hh^2 \to 0
    \end{equation}
    in probability. Since these are all nonnegative quantities, all partial sums go to $0$. The first part (the term for $j=m$) tells us that $\sum_{i\geq m+1} \dotp{v^\cop_i, v^W_m}_\Hh^2 \to 0$, and the second part applied at $m-1$ (again the term with $j=m$) gives us $\sum_{i=1}^{m-1} \dotp{v^\cop_i, v^W_m}_\Hh^2 \to 0$. Hence for all $m=1,\ldots, q$,
    \begin{equation}
        \sum_{i \neq m} \dotp{v^\cop_i, v^W_m}_\Hh^2 \to 0
    \end{equation}
    Since $(v^\cop_i)_i$ and $(v^W_i)_i$ are orthonormal basis of $\Hh$, we have $\norm{v^W_m}_\Hh^2=1=\sum_i \dotp{v^\cop_i, v^W_m}_\Hh^2$, and thus $\dotp{v^\cop_m, v^W_m}^2_\Hh \to 1$. By the reproducing property
    \begin{align*}
        \dotp{v^\cop_m, v^W_m}_\Hh = \frac{1}{\sqrt{\lambda^W_m}} \frac{1}{\sqrt{n}}  \sum_i u^W_{m,i} v^\cop_{m}(x_i) = \sqrt{\frac{\lambda^\cop_m}{\lambda^W_m}} \frac{1}{\sqrt{n}}  \sum_i u^W_{m,i} u^\cop_{m}(x_i)
    \end{align*}
    By the convergence of $\lambda^W_m$ we obtain that $\pa{\frac{1}{\sqrt{n}} \dotp{u^W_i, \samp{X}u^\cop_i}}^2 \to 1$ in probability, and choosing the sign of $u^W_i$ such that $\dotp{u^W_i, \samp{X} u^\cop_i}\geq 0$, we get $\frac{1}{\sqrt{n}} \dotp{u^W_i, \samp{X}u^\cop_i} \to 1$. Finally
    \begin{align*}
        \normMSE{\sqrt{n} u^W_i - \samp{X} u^\cop_i}^2 &= 2\pa{1- \frac{1}{\sqrt{n}} \dotp{u^W_i, \samp{X}u^\cop_i}} + \normMSE{\samp{X} u^\cop_i}^2 - 1
    \end{align*}
    by the LLN, $\normMSE{\samp{X} u^\cop_i}^2 \to \norm{u^\cop_i}^2_{L^2} = 1$ in probability, which concludes the proof.
\end{proof}

By combining Theorem \ref{thm:signnet-psd} with Lemma \ref{lem:intermediate-signnet}, we conclude the proof in the p.s.d. case.

\subsubsection{SBM}\label{app:signnet-sbm}

Variants of the following result appear under (sometimes significantly) different formulations in the literature.

\begin{theorem}\label{thm:signnet-sbm}
    Suppose that Assumption \ref{assumption-pe} holds, that $\lambda_1^\cop, \ldots, \lambda_{q+1}^\cop$ are pairwise distinct, and that $\Xx$ is finite (Ex.~\ref{ex:sbm}). Then:
    \begin{equation}
      \max_{i=1,\ldots,q} \min_{s \in \{1,-1\}}\normMSE{s \sqrt{n} u_i^W - \samp{X} u_i^\cop} + \abs{\lambda^W_i - \lambda^\cop_i} \convnproba 0.
    \end{equation}
\end{theorem}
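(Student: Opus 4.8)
The plan is to exploit the fact that, since $\Xx = \{1,\dots,K\}$ is finite, the Gram matrix $W$ is \emph{exactly} block-constant, so the whole statement collapses to a fixed $K \times K$ perturbation problem driven by the empirical community proportions. Let $H \in \{0,1\}^{n\times K}$ be the membership matrix ($H_{ik}=1$ iff $x_i = k$), let $\hat P_k = \tfrac1n\sum_i H_{ik}$ be the empirical proportions, and write $C_\cop \eqdef [w_\cop(\ell,k)]_{\ell,k=1}^K$. Because $w_\cop(x_i,x_j)$ depends only on the labels, one has the exact factorization $W = \tfrac1n H C_\cop H^\top$, whose nonzero eigenvalues coincide with those of the $K\times K$ matrix $M(\hat P) \eqdef C_\cop \diag(\hat P)$. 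First I would check that $H\phi$ is an eigenvector of $W$ whenever $\phi$ is an eigenvector of $M(\hat P)$ (indeed $W H\phi = \tfrac1n H C_\cop H^\top H \phi = H C_\cop \diag(\hat P)\phi$, using $H^\top H = n\diag(\hat P)$), and that if $\phi$ is normalized in $L^2(\hat P)$, i.e. $\sum_k \hat P_k \phi(k)^2 = 1$, then $H\phi$ has unit Euclidean norm in $\RR^n$. Consequently $\sqrt n\, u_i^W = H\phi_i^{\hat P}$ for the $L^2(\hat P)$-normalized eigenvector $\phi_i^{\hat P}$ of $M(\hat P)$, while $\samp{X}u_i^\cop = H u_i^\cop$ with $u_i^\cop$ the $L^2(P)$-normalized eigenfunction of $\cop$, which acts here as $M(P) \eqdef C_\cop\diag(P)$.

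Next I would reduce both quantities in the statement to a purely finite-dimensional distance. From the previous step,
\[
\normMSE{s\sqrt n\, u_i^W - \samp{X}u_i^\cop}^2 = \frac1n\sum_k n_k\,(s\phi_i^{\hat P}(k)-u_i^\cop(k))^2 = \norm{s\phi_i^{\hat P}-u_i^\cop}_{L^2(\hat P)}^2 ,
\]
and the eigenvalue term is exactly the gap between the $i$-th eigenvalue of $M(\hat P)$ and of $M(P)$. To handle the mismatch between the two data-dependent inner products $L^2(\hat P)$ and $L^2(P)$, I would pass to the symmetric matrices $\tilde M(q) \eqdef \diag(q)^{1/2} C_\cop \diag(q)^{1/2}$: the substitutions $v_i^{\hat P} = \diag(\hat P)^{1/2}\phi_i^{\hat P}$ and $v_i^P = \diag(P)^{1/2}u_i^\cop$ turn $\phi_i^{\hat P}, u_i^\cop$ into \emph{standard} unit eigenvectors of $\tilde M(\hat P), \tilde M(P)$ with the same eigenvalues, and the displayed $L^2(\hat P)$-distance becomes $\norm{s\, v_i^{\hat P} - \diag(\hat P)^{1/2}\diag(P)^{-1/2} v_i^P}$.

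Finally I would conclude by elementary perturbation theory in the fixed dimension $K$. By the law of large numbers $\hat P \convnproba P$, and restricting to the support of $P$ (where $P_k>0$), the matrices $\diag(\hat P)^{\pm1/2}$ are well defined with probability going to $1$ and $\diag(\hat P)^{1/2}\diag(P)^{-1/2}\to \Id$; continuity of $q\mapsto \tilde M(q)$ then gives $\tilde M(\hat P)\convnproba \tilde M(P)$. Weyl's (Kato's) inequality yields $\abs{\lambda_i^W-\lambda_i^\cop}\convnproba 0$, and since the first $q+1$ eigenvalues of $\cop$, hence of $\tilde M(P)$, are two-by-two distinct, the top $q$ of them are simple with a spectral gap, so Davis--Kahan (Thm.~\ref{thm:davis-kahan}) provides signs $s_i\in\{-1,1\}$ with $\norm{s_i v_i^{\hat P}-v_i^P}\convnproba 0$; a triangle inequality together with $\diag(\hat P)^{1/2}\diag(P)^{-1/2}\to\Id$ closes the argument for each of the finitely many $i\le q$, hence for the maximum. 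The main obstacle is bookkeeping: keeping the three normalizations — Euclidean in $\RR^n$, $L^2(P)$, and $L^2(\hat P)$ — consistent through the symmetrization, and verifying that the eigenvalue orderings of $W$ and $\cop$ match so that the correct simple eigenvalue is tracked; both are controlled by the distinctness assumption and the convergence $\hat P \to P$.
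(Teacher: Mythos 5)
Your proof is correct, but it takes a genuinely different route from the paper's. Both arguments start from the exact factorization $W=\tfrac1n HC_\cop H^\top$ (the paper writes $W=\Theta C\Theta^\top$ with $\Theta=H/\sqrt n$) and both ultimately invoke the LLN for $\hat P$, Kato/Weyl for eigenvalues, and Davis--Kahan for eigenvectors. The difference is where the perturbation argument lives. The paper stays in $\RR^n$: it forms the almost-orthonormal vectors $v_i=\Theta u_i^\cop$, Gram--Schmidt orthonormalizes them into a surrogate matrix $G=\sum_i\lambda_i^\cop u_iu_i^\top$ carrying the \emph{limit} eigendata, proves $\norm{W-G}\to 0$, and applies Davis--Kahan between the $n\times n$ matrices $W$ and $G$. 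You instead diagonalize $W$ \emph{exactly}: its nonzero spectral data is that of the $K\times K$ matrix $M(\hat P)=C_\cop\diag(\hat P)$ lifted through $H$, so after symmetrization the whole statement reduces to comparing $\tilde M(\hat P)$ with $\tilde M(P)$ in fixed dimension $K$, with $\hat P\convnproba P$ as the only stochastic input. Your reduction buys exactness (no Gram--Schmidt, no need to control $\norm{W-G}$ separately) and makes transparent that all randomness enters through the empirical proportions; the paper's version keeps the argument structurally parallel to the p.s.d.-kernel case (Thm.~\ref{thm:signnet-psd}), where no exact finite-dimensional reduction is available. Two bookkeeping points you should make explicit when writing this up: (i) under the paper's eigenvalue ordering (nonzero eigenvalues first, then zeros), the pairwise-distinctness of $\lambda_1^\cop,\dots,\lambda_{q+1}^\cop$ forces $\lambda_i^\cop\neq 0$ for $i\le q$, which is precisely what guarantees $u_i^W\in\mathrm{range}(H)$ so that $\sqrt n\,u_i^W=H\phi_i^{\hat P}$ holds exactly (with probability going to one); and (ii) the matching of this nonstandard ordering between $W$ and $\cop$ for indices $i\le q$, which you correctly flag, follows from the eigenvalue convergence together with the distinctness assumption. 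Neither point is a gap, and the paper's own proof needs the same care.
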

\begin{proof}
    In the SBM case, functions are represented by vectors of size $K$. The operator $\cop$ acts as: $\cop f = C \diag(P_k) f$, where $C \eqdef [w_\cop(k,\ell)]_{k\ell}$. Therefore $C \diag(P_k) u_i^\cop = \lambda^\cop_i u^\cop_i$, for $i=1,\ldots, K$. Note that $u^\cop_i$ is orthonormal \emph{in $L^2(P)$}, that is, $(u_i^\cop)^\top \diag(P_k) u_i^\cop=1$ and $(u_i^\cop)^\top \diag(P_k) u_j^\cop = 0$. In particular, $(\lambda_i^\cop, \diag(\sqrt{P_k})u_i^\cop)$ is an eigenvalue/eigenvector pair of the symmetric matrix $C_P \eqdef \diag(\sqrt{P_k}) C \diag(\sqrt{P_k})$. Denoting by $u^P_i \eqdef \diag(\sqrt{P_k}) u_i^\cop$ we have $C_P = \sum_{i=1}^K \lambda_i^\cop u^P_i (u_i^P)^\top$.
    
    Since the space $\Xx$ is finite, each $x_i$ is equal to a community label $1 \leq k_i \leq K$. Define $\Theta \in \{0,1/\sqrt{n}\}^{n \times K}$ the \emph{community matrix}, as:
    \begin{equation}
        \begin{cases}
            \Theta_{i, k_i} = \frac{1}{\sqrt{n}} &\text{for all $1\leq i\leq n$} \\
            \Theta_{i,j} = 0 &\text{otherwise}
        \end{cases}
    \end{equation}
    Then the Gram matrix is
    \begin{equation}
        W = \Theta C \Theta^\top
    \end{equation}
    Also note that $\Theta^\top \Theta = \diag(\hat P)$, where $\hat P_k = \frac{1}{n}\sum_i 1_{k_i=k} \to P_k$ almost surely by the LLN. Note that, when interpreting $u_i^\cop$ as a function on $\Xx=\{1,\ldots, K\}$, we have $\sqrt{n} \Theta u_i^\cop = \samp{X} u_i^\cop$.
    
    Defining $\Theta_P = \Theta \diag(1/\sqrt{P_k})$, we have $$W = \Theta_P C_P \Theta_P^\top = \sum_i \lambda_i^\cop (\Theta_P u_i^P)(\Theta_P u_i^P)^\top = \sum_i \lambda_i^\cop v_i v_i^\top$$ where $v_i = \Theta_P u_i^P = \Theta u_i^\cop$ satisfies
    \begin{equation}
        \norm{v_i}^2 = (u_i^\cop)^\top \diag(\hat P_k) u_i^\cop \to 1, \quad v_i^\top v_j = (u_i^\cop)^\top \diag(\hat P_k) u_j^\cop \to 0
    \end{equation}
    in probability. Hence the $v_i$ are almost orthonormal but not exactly. Define their orthonormalization:
    \begin{equation}
        \tilde u_1 = v_1,~ \forall i=2,\ldots, K,~ \tilde u_i = v_i - \sum_{j=1}^{i-1} (v_i^\top u_j) u_j \text{ and } u_i = \frac{\tilde u_i}{\norm{\tilde u_i}}
    \end{equation}
    such that $u_1, \ldots, u_K \in \RR^n$ are orthonormal and $u_i \in \text{Span}(v_1, \ldots, v_i)$. We define $G = \sum_{i=1}^K \lambda_i^\cop u_i u_i^\top$, whose eigenvalues are the $\lambda_i^\cop$ and eigenvectors $u_i$. By the properties of $v_i$ we have $\sup_{1\leq i \leq n}\norm{v_i - u_i} \to 0$, so $\norm{W - G} \to 0$. Hence by Kato's inequality we have $\sup_{1 \leq i \leq n} \abs{\lambda_i^W - \lambda_i^\cop} \to 0$, and since the $\lambda_1^\cop, \dots, \lambda_{q+1}^\cop$ have unique multiplicity, again by Davis-Kahan theorem for all $i=1,\ldots, q$ we have $\min_{s\in \{-1, 1\}}\norm{s u_i^W - u_i} \to 0$, and by consequence $\min_{s\in \{-1, 1\}}\normMSE{s \sqrt{n} u_i^W - \sqrt{n} v_i} \to 0$. We conclude by recalling that $\sqrt{n} v_i = \sqrt{n} \Theta u_i^\cop = \samp{X} u_i^\cop$.

\end{proof}

As before, we conclude by combining Theorem \ref{thm:signnet-sbm} with Lemma \ref{lem:intermediate-signnet}.

\subsection{Distance-based encodings}\label{app:distance}

Again, we start with an intermediate result, assuming some convergence in Frobenius norm that we will then show for our cases of interest.

\begin{lemma}[Intermediate result for Distance-based encodings]\label{lem:intermediate-gend}
    Suppose that Assumption \ref{assumption-pe} holds, and that:
    \begin{equation}
        \forall \epsilon>0, \exists \gamma_2, \PP(\normFro{\dop_{\gamma_2} - W} \geq \epsilon) \to 0
    \end{equation}
    Then the result of Theorem \ref{thm:distPE} holds.
\end{lemma}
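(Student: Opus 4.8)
The plan is to show that every $g^\star\in\Fdist$, written as $g^\star=\int f([\cop\delta_x(\cdot),\ldots,\cop^q\delta_x(\cdot)])\,dP(x)$ with $f\in\CLip([0,1]^q,\RR^p)$, lies in $\Ff_\PE$. The guiding observation is that, for the Gram matrix $W$, one has $n(W^k)_{ij}\to\Psi_k(x_i,x_j)\eqdef(\cop^k\delta_{x_j})(x_i)$, because $n(W^k)_{ij}=\tfrac1{n^{k-1}}\sum_{\ell_1,\ldots,\ell_{k-1}}w_\cop(x_i,x_{\ell_1})\cdots w_\cop(x_{\ell_{k-1}},x_j)$ is a $V$-statistic whose limit is exactly the iterated integral defining $\cop^k\delta_{x_j}$. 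I would therefore introduce the idealized $W$-based encoding $\widetilde{\PE}\eqdef\tfrac1n\sum_j\MLP_{\gamma_1}(n\cdot[(W^k)_{ij}]_{k=1}^q)$ (row $i$) and split, for a fixed $\epsilon>0$,
$$\normMSE{\PE_\gamma-\samp{X}g^\star}\le\normMSE{\PE_\gamma-\widetilde{\PE}}+\normMSE{\widetilde{\PE}-\samp{X}g^\star},$$
bounding the two terms separately and fixing the quantifiers in the order $\epsilon\to\gamma_1\to\gamma_2$.

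First I would pick, by universal approximation \cite{Hornik1989,Pinkus1999} (after a Lipschitz extension of $f$ to a neighbourhood of $[0,1]^q$), an MLP $\MLP_{\gamma_1}$ uniformly $\epsilon'$-close to $f$ on a compact box containing the ranges of all $\Psi_k$; this freezes its global Lipschitz constant $L_1$. For the first term, Jensen's inequality on the inner average together with the Lipschitz property of $\MLP_{\gamma_1}$ gives $\normMSE{\PE_\gamma-\widetilde{\PE}}^2\le L_1^2\sum_{k=1}^q\normFro{\dop_{\gamma_2}^k-W^k}^2$, where crucially the factors $n$ and $1/n$ cancel. A telescoping identity $\dop_{\gamma_2}^k-W^k=\sum_{\ell}\dop_{\gamma_2}^\ell(\dop_{\gamma_2}-W)W^{k-1-\ell}$, combined with $\normFro{ABC}\le\norm{A}\normFro{B}\norm{C}$, the deterministic bound $\norm{W}\le\norm{w_\cop}_\infty$, and the boundedness of $\norm{\dop_{\gamma_2}}$ on the event $\{\normFro{\dop_{\gamma_2}-W}\le1\}$, reduces this to $\normFro{\dop_{\gamma_2}^k-W^k}\lesssim\normFro{\dop_{\gamma_2}-W}$. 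Invoking the Frobenius hypothesis with a threshold small relative to $L_1,q,\norm{w_\cop}_\infty$ then makes this term $<\epsilon$ with probability tending to $1$. This is exactly where Frobenius rather than operator-norm convergence is indispensable: operator control alone would leave a spurious $\sqrt n$ factor.

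For the second term I would split $\widetilde{\PE}_{i,:}-g^\star(x_i)$ into (i) the MLP-versus-$f$ error $\tfrac1n\sum_j[\MLP_{\gamma_1}-f](n[(W^k)_{ij}]_k)$, which is $\le\epsilon'$ once the arguments lie in the approximation box (true for large $n$, as they concentrate near $[\Psi_k(x_i,x_j)]_k$); (ii) $\tfrac1n\sum_j[f(n[(W^k)_{ij}]_k)-f([\Psi_k(x_i,x_j)]_k)]$, controlled by $L_f\max_{i,j}\norm{[n(W^k)_{ij}-\Psi_k(x_i,x_j)]_k}$; and (iii) the empirical-versus-integral error $\tfrac1n\sum_j f([\Psi_k(x_i,x_j)]_k)-\int f([\Psi_k(x_i,x)]_k)\,dP(x)$. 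Term (iii) is a uniform law of large numbers over $x_i\in\Xx$ for the bounded continuous kernel $(z,x)\mapsto f([\Psi_k(z,x)]_k)$, supplied by the chaining estimate (Lemma \ref{lem:chaining}); evaluating at $z=x_i$ makes it vanish uniformly in $i$. The heart of the matter is term (ii), i.e.\ the uniform convergence $\max_{i,j}|n(W^k)_{ij}-\Psi_k(x_i,x_j)|\convnproba0$, which I would prove by induction on $k$: writing $a^{(k)}_{ij}=n(W^k)_{ij}=\tfrac1n\sum_\ell w_\cop(x_i,x_\ell)a^{(k-1)}_{\ell j}$ and comparing with $\Psi_k(x_i,x_j)=\int w_\cop(x_i,y)\Psi_{k-1}(y,x_j)\,dP(y)$, the difference splits into a propagated error $\le\norm{w_\cop}_\infty\max_{\ell,j}|a^{(k-1)}_{\ell j}-\Psi_{k-1}(x_\ell,x_j)|$ and a fresh uniform-LLN error over $(z,z')\in\Xx^2$ again handled by Lemma \ref{lem:chaining}, with the exact base case $k=1$.

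The main obstacle is precisely this induction: one must propagate the $V$-statistic error through the $q$ powers \emph{uniformly over all pairs} $(i,j)$ — not merely in an averaged sense — so that the Lipschitz map $f$ and the subsequent averaging over $j$ preserve it, while keeping the quantifiers consistent (the constant $L_1$ must be frozen before the Frobenius threshold for $\gamma_2$ is chosen). Once both terms are shown to be $<\epsilon$ with probability tending to $1$, choosing $\epsilon'$ small and $n$ large yields $g^\star\in\Ff_\PE$; since $\Ff_\PE$ is closed (Lemma \ref{lem:fpeclosed}), we in fact obtain $\overline{\Fdist}\subset\Ff_\PE$, and a fortiori the claimed $\Fdist\subset\Ff_\PE$.
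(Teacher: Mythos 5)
Your proposal is correct and follows essentially the same route as the paper: fix $\MLP_{\gamma_1}$ by universal approximation first (freezing its Lipschitz constant), invoke the Frobenius hypothesis for $\gamma_2$ at a threshold depending on that constant, control the powers $\dop_{\gamma_2}^k-W^k$ by telescoping in Frobenius norm, and handle the $W$-based term via the uniform convergence of the iterated $V$-statistics (your induction on $k$ is exactly the paper's Lemma~\ref{lem:chainingk}) together with the uniform LLN of Lemma~\ref{lem:chaining}. The only difference is cosmetic: you compare $\MLP_{\gamma_1}$ to $f$ at the empirical arguments $n[(W^k)_{ij}]_k$ (requiring them to land in the approximation box), whereas the paper makes that comparison at the limiting arguments $J(\cdot,x)\in[0,1]^q$, which avoids that extra check.
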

\begin{proof}
     Let $f \in \CLip([0,1]^q, \RR^d)$ and $\epsilon>0$. By the universality theorem \cite{Pinkus1999}, let $\MLP_{\gamma_1}$ such that $\norm{\MLP_{\gamma_1}-f}_\infty \leq \epsilon$ on $[0,1]^q$. Since it is an MLP, $\MLP_{\gamma_1}$ is uniformly Lipschitz on $\RR^q$, call $L$ its Lipschitz constant. Then, let $\MLP_{\gamma_2}$ such that
     \begin{equation}\label{eq:gend-usvt-inter}
        \PP(\normFro{\dop_{\lambda_2} - W} \geq \epsilon/L) \to 0
    \end{equation}

    For convenience, define $M^{\dop_{\gamma_2}}_j = [\dop_{\gamma_2} e_j, \ldots, (\dop_{\gamma_2})^q e_j] \in \RR^{n \times q}$, $M^W_j = [W e_j, \ldots, W^q e_j]$, $J(x,y) = [\cop \delta_y (x), \ldots, \cop^q \delta_y (x)] \in [0,1]^q$.

    We write
    \begin{align}
        &\normMSE{\PE - \samp{X} \int f(J(\cdot, x)dP(x)} \notag\\
        &\quad \leq \normMSE{\frac1{n} \sum_j \MLP_{\gamma_1}\pa{n \cdot M^{\dop_{\gamma_2}}_j} - \MLP_{\gamma_1}\pa{n \cdot M^W_j}} \notag\\
        &\qquad + \normMSE{\frac1{n} \sum_j \MLP_{\gamma_1}\pa{n \cdot M^W_j} - \samp{X} \int \MLP_{\gamma_1}(J(\cdot, x))dP(x)} \notag\\
        &\qquad +\normMSE{\samp{X} \int \MLP_{\gamma_1}(J(\cdot, x))dP(x) - \samp{X} \int f(J(\cdot, x))dP(x)} \label{eq:gendinter}
    \end{align}

    The third term in \eqref{eq:gendinter} is bounded by $\norm{\MLP_{\gamma_1}-f}_\infty \leq \epsilon$.
    
    The second term in \eqref{eq:gendinter} is
    \begin{align*}
        &\normMSE{\frac1{n} \sum_j \MLP_{\gamma_1}\pa{n \cdot M^W_j} - \samp{X} \int \MLP_{\gamma_1}(J(\cdot,y))dP(x)} \\
        &\leq \normMSE{\frac1{n} \sum_j \MLP_{\gamma_1}\pa{n \cdot M^W_j} - \samp{X}\MLP_{\gamma_1}(J(\cdot,x_j))} \\
        &+ \normMSE{\samp{X}\pa{\frac1{n}\sum_j \MLP_{\gamma_1}(J(\cdot,x_j)) - \int \MLP_{\gamma_1}(J(\cdot,x))dP(x)}} \\
        &\leq L\sum_{\ell=1}^q \sup_{x,x'} \abs{\frac{1}{n^{\ell-1}}\sum_{i_1,\ldots, i_{\ell-1}} w_\cop(x,x_{i_1})w_\cop(x_{i_1}, x_{i_2})\ldots w_\cop(x_{i_{\ell-1}},x') - \cop^{\ell}\delta_{x'}(x)} \\
        &+ \norm{\frac1{n}\sum_j \MLP_{\gamma_1}(J(\cdot,x_j)) - \int \MLP_{\gamma_1}(J(\cdot, x))dP(x)}_\infty
    \end{align*}
    where we have used the Lipschitz property of $\MLP_{\gamma_1}$ and a supremum over $x_i,x_j$ for the first term. The first term converges to $0$ in probability by Lemma \ref{lem:chainingk}, while the second goes to $0$, using the boundedness of $\MLP_{\gamma_1}$ on $[0,1]^q$, by Lemma \ref{lem:chaining}.
    
    Finally, using Schwartz inequality, the first term in \eqref{eq:gendinter} is bounded by
    \begin{align*}
        &\normMSE{n^{-1} \sum_j \MLP_{\gamma_1}(n M^{\dop_{\gamma_2}}_j) - \MLP_{\gamma_1 }(n M^W_j)} \\
        &\leq \frac1{\sqrt{n}} \sqrt{\sum_j \normMSE{\MLP_{\gamma_1}(n M^{\dop_{\gamma_2}}_j) - \MLP_{\gamma_1}(n M^W_j)}^2} \\
        &= \frac1{n} \sqrt{\sum_{ij} \norm{\MLP_{\gamma_1}(n (M^{\dop_{\gamma_2}}_j)_{i,:}) - \MLP_{\gamma_1}(n (M^W_j)_{i,:})}^2} \\
        &\leq L \sqrt{\sum_{ij} \norm{(M^{\dop_{\gamma_2}}_j)_{i,:} -  (M^W_j)_{i,:}}^2} \\
        &\leq  L \sum_\ell\normFro{\dop^\ell_{\gamma_2} - W^\ell} \leq L\sum_\ell  \pa{\sum_{p=1}^{\ell-1}\normFro{\dop_{\gamma_2}}^{p} \normFro{W}^{\ell-1-p}}\normFro{\dop_{\gamma_2} - W} \lesssim q^2 \epsilon
    \end{align*}
    with probability going to $1$, using \eqref{eq:gend-usvt-inter} and the fact that $\normFro{W}$ is bounded, and thus $\normFro{S_{\gamma_2}}$ as well with probability going to $1$. Gathering everything, $f \in \Ff_{\PE}$, which concludes the proof.
\end{proof}

We then prove this property for both p.s.d. kernel and SBM. The following Theorem is similar to Theorem \ref{thm:usvt-mlp}, but under Assumption \ref{assumption-pe}, which is true for ex.~\ref{ex:adj} and \ref{ex:lapl}.

\begin{theorem}[Theorem \ref{thm:usvt-mlp} reformulated]\label{thm:usvt-mlp-detailed}
    Suppose that Assumption \ref{assumption-pe} holds. For both p.s.d. kernel (Ex.~\ref{ex:psd}) or finite $\Xx$ (Ex.~\ref{ex:sbm}), for all $\epsilon>0$, there is an MLP filter $\dop_\gamma = h_{\MLP_\gamma}(\dop)$ such that
    \begin{equation}
        \forall \epsilon>0,~ \exists \gamma,~ \PP(\normFro{\dop_\gamma - W} \geq \epsilon) \to 0
    \end{equation}
\end{theorem}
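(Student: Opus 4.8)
The plan is to follow the Universal Singular Value Thresholding (USVT) philosophy. Since $\cop$ is either finite-rank (SBM, Ex.~\ref{ex:sbm}) or a p.s.d. Hilbert--Schmidt operator (Ex.~\ref{ex:psd}), its eigenvalues $\lambda_i^\cop$ are square-summable and decay to $0$; hence $W$ is well approximated in \emph{Frobenius} norm by a \emph{fixed-rank} truncation, and the whole point of the ReLU filter is to realize this truncation on $\dop$ while \emph{uniformly} annihilating its $\order{n}$ small ``bulk'' eigenvalues. I would first record the two ingredients I can rely on. From Assumption~\ref{assumption-pe} I have $\norm{\dop-W}\convnproba 0$, so by Weyl's inequality the ordered moduli of the eigenvalues of $\dop$ and $W$ are uniformly close. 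Independently, the Gram-matrix eigenvalues converge to those of the operator, $\sup_i|\lambda_i^W-\lambda_i^\cop|\convnproba 0$ (p.s.d.\ case via \cite{Rosasco2010} as in the proof of Thm.~\ref{thm:signnet-psd}, SBM case via the argument of Thm.~\ref{thm:signnet-sbm}, neither of which needs the distinctness hypothesis for the eigenvalues alone), and the law of large numbers gives $\normFro{W}^2=\frac1{n^2}\sum_{ij}w_\cop(x_i,x_j)^2\convnproba\sum_i(\lambda_i^\cop)^2$. Combining the last two yields, for each fixed $r$, the tail control $\sum_{i>r}(\lambda_i^W)^2\convnproba\sum_{i>r}(\lambda_i^\cop)^2$.

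Given $\epsilon$, I would then choose the rank. Because $(\lambda_i^\cop)^2$ is non-increasing and summable one has $r\,(\lambda_{r+1}^\cop)^2\to 0$, so I can pick $r$ at a spectral gap $|\lambda_r^\cop|>|\lambda_{r+1}^\cop|$ (there are infinitely many such $r$, since the $\lambda_i^\cop$ strictly decrease to $0$ infinitely often, or the operator has finite rank) large enough that both $\sum_{i>r}(\lambda_i^\cop)^2$ and $r\,(\lambda_{r+1}^\cop)^2$ are $\order{\epsilon^2}$. I then fix thresholds $0<\tau_1<\tau_2<|\lambda_r^\cop|$ inside this gap with $\tau_1\le 2|\lambda_{r+1}^\cop|$, and define $h$ to be the odd, continuous, piecewise-linear map equal to $0$ on $[-\tau_1,\tau_1]$, equal to the identity outside $[-\tau_2,\tau_2]$, and linear in between. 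Such an $h$ is exactly representable by a one-hidden-layer ReLU MLP, and crucially its parameters depend only on $\cop$, \emph{not} on $n$. This is precisely where ReLU is indispensable, as emphasized in the main text: a polynomial cannot be uniformly $0$ on $[-\tau_1,\tau_1]$ with $n$-independent coefficients.

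Next I would work on the event $E_n$ that $\dop$ has exactly $r$ eigenvalues of modulus exceeding $\tau_2$, the remaining ones all lying below $\tau_1$ in modulus. The two eigenvalue-convergence facts force $\PP(E_n)\to 1$: the top $r$ moduli of $\dop$ converge to $|\lambda_1^\cop|,\dots,|\lambda_r^\cop|$ (all $>\tau_2$) and the $(r{+}1)$-th converges to $|\lambda_{r+1}^\cop|<\tau_1$. On $E_n$ the filter acts as the identity on the top $r$ eigenpairs and as $0$ on all others, so $\dop_\gamma=h(\dop)=\dop_{(r)}$, the best rank-$r$ approximation of $\dop$. Writing $W_{(r)}$ for that of $W$, I would split
\[
\normFro{\dop_\gamma-W}\ \le\ \normFro{\dop_{(r)}-W_{(r)}}+\normFro{W_{(r)}-W},
\]
bounding the second term by $\big(\sum_{i>r}(\lambda_i^W)^2\big)^{1/2}$ (the chosen tail), and the first --- a difference of matrices of rank $\le 2r$ --- by
\[
\normFro{\dop_{(r)}-W_{(r)}}\le \sqrt{2r}\,\norm{\dop_{(r)}-W_{(r)}}\le\sqrt{2r}\big(\norm{\dop-\dop_{(r)}}+\norm{\dop-W}+\norm{W-W_{(r)}}\big),
\]
which on $E_n$ is $\lesssim\sqrt{2r}\,(2\tau_1+\norm{\dop-W})\lesssim\sqrt{r}\,|\lambda_{r+1}^\cop|+o(1)$. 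For the fixed $r$ chosen above the deterministic part is $\order{\epsilon}$, while each $o(1)$ is multiplied by the \emph{constant} $\sqrt{2r}$ and hence killed as $n\to\infty$; a union bound over these finitely many convergences gives $\PP(\normFro{\dop_\gamma-W}\ge\epsilon)\to 0$.

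The main obstacle is the mismatch between the hypothesis (operator-norm closeness) and the conclusion (Frobenius closeness): operator-norm control alone cannot prevent the $\order{n}$ near-zero bulk eigenvalues of $\dop$ from contributing $\order{\sqrt{n}}$ to the Frobenius norm. The resolution is the rank collapse performed by the ReLU filter, which converts the Frobenius norm into $\sqrt{2r}$ times an operator norm at \emph{fixed} $r$; the remaining delicate point is to check that the truncation bias $\sqrt{r}\,|\lambda_{r+1}^\cop|$ genuinely vanishes, which is exactly the Hilbert--Schmidt estimate $r\,(\lambda_{r+1}^\cop)^2\to 0$. Multiplicities cause no difficulty here, since only eigenvalues --- never eigenvectors --- enter the argument, so, unlike the SignNet results, no distinctness assumption is needed.
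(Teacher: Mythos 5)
Your proposal is correct and follows essentially the same route as the paper's proof: a fixed-rank/threshold spectral truncation chosen in a gap of the spectrum of $\cop$, realized exactly by an $n$-independent piecewise-linear ReLU filter on a high-probability event, with the Frobenius error split into a low-rank part controlled by $\sqrt{2|T|}$ times operator-norm bounds (Weyl plus $\norm{\dop-W}\to 0$) and a Hilbert--Schmidt tail. The only cosmetic difference is that you control the tail $\sum_{i>r}(\lambda_i^W)^2$ via the law of large numbers for $\normFro{W}^2$ combined with convergence of the top eigenvalues, whereas the paper invokes the $\ell^2$-convergence of the full spectrum from Rosasco et al.; both are valid.
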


\begin{proof}

    Note that $\cop$ is trace-class (both if $w$ is p.s.d. or in the SBM case), such that $\sum_i \abs{\lambda^\cop_i} < \infty$. In particular, $\abs{\lambda^\cop_i} = o(1/\sqrt{i})$. In the p.s.d. case, all $\lambda_i^\cop$ are nonnegative. We define $\lambda_\epsilon > 0$ and the support $T = \{i~|~\abs{\lambda_i^\cop} \geq \lambda_\epsilon\}$
    \begin{enumerate}[label=\roman*)]
        \item $\sum_{i \in T^c} (\lambda^\cop_i)^2 \leq \epsilon/2$, which can be satisfied since $\cop$ is trace class and therefore Hilbert-Schmidt
        \item $\sqrt{2|T|} \sup_{i \in T^c} \abs{\lambda^\cop_i} \leq \epsilon/4$, which can be satisfied since $\abs{\lambda^\cop_i} = o(1/\sqrt{i})$
        \item $\inf_{i \in T, j \in T^c}\abs{\lambda_{i}^\cop-\lambda_{j}^\cop}>0$, which can be satisfied by choosing $\lambda_\epsilon$ in a gap in the spectrum of $\cop$, since all eigenvalues but $0$ have finite multiplicities.
    \end{enumerate}

    We will first start by approximating $W$ with an ideal filtered matrix $\hat \dop = \sum_{i\in T} \lambda^\dop_i u^\dop_i (u^\dop_i)^\top$. We define $G = \sum_{i \in T} \lambda^W_i u^W_i (u^W_i)^\top$.
    We decompose
    \begin{equation*}
        \normFro{\hat \dop - W} \leq \normFro{\hat \dop - G} + \normFro{G - W}
    \end{equation*}
    For the first term, since this matrix is of rank at most $2 \abs{T}$, we have
    $$
    \normFro{\hat \dop - G} \leq \sqrt{2\abs{T}} \norm{\hat \dop - G}
    $$
    We further decompose
    $$
    \norm{\hat \dop - G} \leq \norm{\hat \dop - \dop} + \norm{\dop - W} + \norm{W - G}
    $$
    The second term is bounded by Theorem \ref{thm:concentration-opnorm}, we have $\norm{\dop - W} \to 0$ in probability.

    The third term is bounded by Kato inequality
    $$
    \sup_{i \in T^c} \lambda^W_i \leq \sup_{i \in T^c} \lambda^\cop_i + \sup_i \abs{\lambda^W_i - \lambda^\cop_i}
    $$
    the last term goes to $0$ in probability.
    
    The first term is bounded by $\sup_{i \in T^c} \lambda^\dop_i$, and by Kato's inequality 
    $$
    \sup_{i \in T^c} \lambda^\dop_i \leq \sup_{i \in T^c} \lambda^W_i + \norm{\dop - W}
    $$
    which is a combination of both previous cases.

    At the end of the day, with probability going to $1$,
    $$
    \norm{\hat \dop - G} \leq \sqrt{2\abs{T}} 2 \sup_{i \in T^c} \lambda^\cop_i + o(1) \leq \epsilon/2 + o(1)
    $$

    Finally, we have
    \begin{align*}
    \normFro{G-W}^2 = \sum_{i \in T^c} (\lambda^W_i)^2 \leq 2 \pa{\sum_{i \in T^c} (\lambda^\cop_i)^2 + \sum_{i} (\lambda^W_i- \lambda^\cop_i)^2}
    \end{align*}
    The first term is bounded by $\epsilon/2$, the second goes to $0$ in probability: in the p.s.d. case, this is a result of \cite{Rosasco2010}, by an application of Hoeffding's inequality in the Hilbert space of Hilbert-Schmidt operators in an RKHS; in the SBM case, there is a finite number of non-zero eigenvalues for both $W$ and $\cop$ that converge to each other and the result follows.

    Now we need to bound $\normFro{S_{\gamma} - \hat S}$ for a well chosen MLP. Define $(i_T, j_T) = \arg\min_{i \in T, j \in T^c}\abs{\lambda_{i}^\cop - \lambda_{j}^\cop}$, $\tau = \frac{\abs{\lambda_{i_T}^\cop - \lambda_{j_T}^\cop}}{4}>0$ and $\bar \lambda = \frac{\abs{\lambda_{i_T}^\cop + \lambda_{j_T}^\cop}}{2}$. Note that we have $T = \{i~|~ \abs{\lambda_i^\cop} \geq \bar \lambda + 2\tau\}$ and $T^c = \{i~|~ \abs{\lambda_i^\cop} \leq \bar \lambda - 2\tau\}$ 
    Define the following $\MLP:\RR\to \RR$:
    \begin{align}
        \MLP(\lambda) &= \frac{\bar \lambda + \tau}{2\tau} \pa{\rho\pa{\lambda-\bar\lambda + \tau} - \rho\pa{\lambda-\bar\lambda -\tau}} + \rho\pa{\lambda - \bar\lambda - \tau} \\
        &\quad + \frac{-\bar \lambda - \tau}{2\tau} \pa{\rho\pa{-\lambda-\bar\lambda + \tau} - \rho\pa{-\lambda-\bar\lambda -\tau}} - \rho\pa{-\lambda - \bar\lambda - \tau}
    \end{align}
    where $\rho$ is ReLU. This is a continuous piecewise linear function that is equal to $\lambda$ on $(-\infty, -\bar \lambda - \tau]$, $0$ on $[-\bar \lambda + \tau, \bar \lambda -\tau]$, and $\lambda$ on $[\bar \lambda + \tau, +\infty)$.
    
    Recall that with probability going to $1$, we have $\sup_{i}\abs{\lambda_{i}^\dop - \lambda_{i}^\cop} \leq \tau$, so in particular, for all $i\in T$ we have $\abs{\lambda_i^\dop} \geq \bar \lambda + \tau$ and for all $i\in T^c$ we have $\abs{\lambda_i^\dop} \leq \bar \lambda - \tau$.
    In that case, the MLP filtering is exactly the ideal filtering and $h(S) = \hat S$, which concludes the proof.
\end{proof}

\subsection{Proof of Prop.~\ref{prop:approx-power}}

The case of $\Fdist$ was proven in \cite[Theorem 4]{Keriven2021}. For $\Feig$, we consider the SBM (ex.~\ref{ex:sbm}) with adjacency matrix (ex.~\ref{ex:adj}) and
$$
C = \pa{\begin{matrix}
1/2 & 1/4 \\
1/4 & 3/8
\end{matrix}},
\quad
P = (1/3, 2/3)
$$
We have $\cop 1 = (1/3, 1/3)$, therefore $\Ff_\cop(1)$ contains only constant functions. Moreover, $u_i^\cop =(1,1)/\sqrt{2}$. From the orthogonality equation $(u_2^\cop)^\top \diag(P) u_1^\cop=0$ we get $(u_2^\cop)_1 = 2(u_2^\cop)_2$. Hence it is possible to choose $f$ such that $f((u_2^\cop)_1) + f(-(u_2^\cop)_1) \neq f((u_2^\cop)_2) + f(-(u_2^\cop)_2)$, thus $\Feig$ contains a non-constant function, which concludes the proof.

\subsection{Proof of Prop.~\ref{prop:fcop-better}}

Consider the SBM case $\Xx = \{1,\ldots, K\}$ with $K$ even and $P = 1_K/K$. Adopting the notations of the section above, we have $\cop f = C \diag(P_k) f$, $C_P =\diag(\sqrt{P_k}) C \diag(\sqrt{P_k})= \sum_{i=1}^K \lambda_i^\cop u^P_i (u_i^P)^\top$ with $u^P_i = \diag(\sqrt{P_k}) u_i^\cop$ orthonormal (in Euclidean $\RR^K$).

For $\Feig$, we consider the case where
\begin{align*}
    &\lambda^\cop_1 > \lambda^\cop_2 > 0,\quad \lambda_i = 0 \text{ for $i\geq 3$} \\
    &(u_1^P)_i =
    \begin{cases}
        \sqrt{2/K} & \text{if $i$ is even}\\
        0 & \text{otherwise,}
    \end{cases}\quad 
    (u_2^P)_i =
    \begin{cases}
        0 & \text{if $i$ is even}\\
        C\cdot i & \text{otherwise,}
    \end{cases}
\end{align*}
where $C$ is a constant such that $\norm{u^P_2}=1$. Consider eigenvectors PEs with $q=1$. The space $\Feig$ contains all the function $g$ of the form
\begin{equation*}
    g_i = f((u_1^\cop)_i) + f(-(u_1^\cop)_i)
\end{equation*}
for any $f$. By construction of $u_1^P$, these functions have the property of being $2$-periodic: for all $i$, $g_i = g_{i+2}$. Now, $\Ff_\cop(\Feig)$ contains e.g. the following function
\begin{equation*}
    \cop 1 = K C_P 1 = \lambda_1^\cop (1^\top u_1^P) u_1^P + \lambda_2^\cop (1^\top u_2^P) u_2^P
\end{equation*}
This function is not $2$-periodic: on $i$ odd, we have $(u_2^P)_i \neq (u_2^P)_{i+2}$ and therefore $g_i \neq g_{i+2}$, which concludes the proof.

For $\Fdist$, taking again $q=1$, we consider $K=4$ and $C_{01} = C_{10} = 1$, and $C_{k\ell} = 0$ otherwise. The space $\Fdist$ contains all the function $g$ of the form
\begin{equation*}
    g = \frac1{K}f(C)1
\end{equation*}
where $f$ is applied element-wise on $C$. With this choice, for any $f$ we have $g_3 = g_4 = f(0)$. However, the space $\Ff_\cop(\Fdist)$ contains the function
$$
h = \frac1{K^2} C f(C) 1
$$
Here we have $h_3 = \frac{f(0)f(1)}{8} + \frac{7f(0)^2}{8}$ and $h_4 = f(0)^2$, which can be made unequal. Hence $h \notin \Fdist$, which concludes the proof.

\section{Universality}\label{app:universality}

Here we recall the universality results of \cite{Keriven2021}, that were derived for an architecture called Structured GNN, in the case of non-random edges. In this paper, these results are valid for the distance-encoded PEs \eqref{eq:distPE}, through Theorem \ref{thm:distPE}, our definition of $\Fdist$ \eqref{eq:Fdist}. The results in \cite{Keriven2021} basically proves that $\Ff_\cop(\Fdist)$ satisfies the hypotheses of Prop. \ref{prop:universality}. We recall them here without proof. The following results are valid for adjacency matrix (ex.~\ref{ex:adj}), distance-encoding PE \eqref{eq:distPE}, and SBM (ex. \ref{ex:sbm}) or p.s.d. kernel (ex.~\ref{ex:psd}), with other additional hypotheses in each cases.

\begin{itemize}
    \item \textbf{SBM}: if $\Xx$ is finite, $C = [w(k,\ell)]$ is invertible, and $P$ is such that for all $s \in \{-1,0,1\}^K$, $s^\top P = 0$ implies $s=0$, then $\Ff_\cop(\Ff_\PE) = L^2_\sqcup$.
    \item \textbf{Additive kernel}: if $w(x,y)=v(u(x)+u(y))$ with $u,v$ that are continuous and injective, then $\Ff_\cop(\Ff_\PE) = L^2_\sqcup$.
    \item \textbf{Unidimensional radial kernel}: if $\Xx = [-1,1]$, $w(x,y) = v(|x-y|)$ with continuous injective $v$, and $P$ is symmetric (that is, $P([a,b]) = P([-b, -a])$ for all intervals), then $\Ff_\cop(\Ff_\PE) = L^2_\sqcup \cap \Ss(\Xx)$ where $\Ss$ are symmetric functions. If $P$ is not symmetric, then $\Ff_\cop(\Ff_\PE) = L^2_\sqcup$.
    \item \textbf{Spherical kernels}: If $\Xx = \mathbb{S}^{d-1}$ is the $d$-dimensional sphere, $w(x,y) = v(x^\top y)$ with continuous injective $v$, and $P$ has a density $p$ w.r.t. the uniform distribution on the sphere such that: the unique decomposition $p(x) = \sum_{k\geq 0} \sum_{j=1}^{N(d,k)} a_{k,j} Y_{k,j}(x)$ where $Y_{k,j}$ are spherical harmonics is such that $x \mapsto [\sum_{j=1}^{N(d,k)} a_{k,j} Y_{k,j}(x)]_k$ is injective (see \cite{Keriven2021} and references therein), then $\Ff_\cop(\Ff_\PE) = L^2_\sqcup$.
\end{itemize}

\section{Technical or third-party results}\label{app:third-party}

The following Lemma can be proved in a number of ways.

\begin{lemma}[Lemma 4 in \cite{Keriven2020a}]\label{lem:chaining}
    Let $\Xx$ be a compact metric space, and $\Yy$ a measurable space. Consider a bivariate measurable function $U:\Xx \times \Yy \to \RR$ that is uniformly bounded, and continuous in the first variable. Let $y_1,\ldots, y_n$ be drawn \emph{i.i.d} from a distribution $P$ on $\Yy$. Then
    \begin{equation*}
        \norm{\frac{1}{n}\sum_i \eta(\cdot, y_i) - \int \eta(\cdot, y)dP(y)}_\infty \convnproba 0
    \end{equation*}
\end{lemma}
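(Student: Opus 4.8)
The plan is to prove this as a uniform law of large numbers, via a covering argument that combines the pointwise strong law at finitely many points with a uniform control of oscillations. Write $g_n(x) = \frac1n \sum_i \eta(x, y_i)$ and $g(x) = \int \eta(x,y)\,dP(y)$, and set $M = \sup_{x,y}|\eta(x,y)| < \infty$. For each fixed $x$, the variables $\eta(x, y_i)$ are i.i.d.\ and bounded by $M$, so the strong law gives $g_n(x) \to g(x) = \EE\,\eta(x,y_1)$ almost surely, hence in probability; the limit $g$ is finite and, by dominated convergence, continuous on $\Xx$. The whole difficulty is to upgrade this pointwise statement to a uniform one: since continuity is assumed only in the first variable for each fixed $y$, the family $\{\eta(\cdot,y)\}_y$ need not be equicontinuous, so a naive net argument fails.

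To replace equicontinuity I would introduce the modulus of continuity $\omega(\delta, y) = \sup_{d(x,x')\le \delta} |\eta(x,y) - \eta(x',y)|$ and its average $\Omega(\delta) = \int \omega(\delta, y)\,dP(y)$. Measurability of $\omega(\delta,\cdot)$ follows from the joint measurability of $\eta$ together with separability of the compact metric space $\Xx$: by continuity in the first variable the supremum may be restricted to a countable dense set of pairs, making $\omega(\delta,\cdot)$ a countable supremum of measurable functions. For each fixed $y$, $\eta(\cdot,y)$ is uniformly continuous on the compact set $\Xx$, so $\omega(\delta,y)\downarrow 0$ as $\delta \downarrow 0$; since $0\le\omega(\delta,\cdot)\le 2M$, dominated convergence yields $\Omega(\delta)\to 0$.

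Given $\epsilon>0$, I would then fix $\delta$ with $\Omega(\delta)\le \epsilon/4$ and cover $\Xx$ by finitely many balls $B(x_1,\delta),\dots,B(x_N,\delta)$ using compactness. For any $x \in B(x_k,\delta)$, the triangle inequality together with $|g_n(x)-g_n(x_k)|\le \frac1n\sum_i\omega(\delta,y_i)$ and $|g(x)-g(x_k)|\le \Omega(\delta)$ gives
\[
  |g_n(x)-g(x)| \le \frac1n\sum_i \omega(\delta,y_i) + |g_n(x_k)-g(x_k)| + \Omega(\delta),
\]
and, taking the supremum over $x$,
\[
  \sup_{x\in\Xx} |g_n(x)-g(x)| \le \frac1n\sum_i \omega(\delta,y_i) + \max_{k\le N}|g_n(x_k)-g(x_k)| + \Omega(\delta).
\]
The first term converges in probability to $\Omega(\delta)\le \epsilon/4$ by the law of large numbers applied to the bounded i.i.d.\ variables $\omega(\delta,y_i)$; the second is a maximum over finitely many indices of pointwise-convergent terms, hence tends to $0$ in probability; the third is at most $\epsilon/4$. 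A union bound then yields $\PP\big(\sup_x|g_n(x)-g(x)|>\epsilon\big)\to 0$, which is the claim.

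The main obstacle is precisely the gap between pointwise and uniform convergence, and the crux of the argument is that the averaged modulus $\Omega(\delta)$ serves as a substitute for equicontinuity while its random counterpart $\frac1n\sum_i\omega(\delta,y_i)$ is itself governed by the law of large numbers. The two genuinely nonroutine points are therefore the measurability of $\omega(\delta,\cdot)$ (handled by separability of $\Xx$) and the decay $\Omega(\delta)\to0$ (handled by dominated convergence); the remaining $\epsilon/4$ bookkeeping is standard.
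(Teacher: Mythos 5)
Your proof is correct. Note that the paper does not actually prove this lemma: it is imported verbatim as Lemma~4 of \cite{Keriven2020a}, with only the remark that it ``can be proved in a number of ways,'' so there is no in-paper argument to compare against. Your single-scale covering argument --- decomposing $\sup_x|g_n(x)-g(x)|$ into the empirical average of the modulus $\omega(\delta,\cdot)$, a finite maximum of pointwise LLN terms, and the deterministic remainder $\Omega(\delta)$ --- is the standard route to such uniform laws of large numbers (it is essentially the classical ULLN under an $L^1$-bracketing condition, of the same flavour as the chaining argument the lemma's name alludes to), and you correctly identify and handle the two genuinely delicate points: the measurability of $\omega(\delta,\cdot)$ via separability of $\Xx$ together with continuity in the first variable, and the decay $\Omega(\delta)\to 0$ via dominated convergence, which substitutes for the equicontinuity that the hypotheses do not provide. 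Two cosmetic remarks: your final bookkeeping as written gives a bound of exactly $\epsilon$ rather than strictly less (replace the $\epsilon/4$'s by $\epsilon/8$'s, or conclude for $2\epsilon$); and it is worth one sentence to observe that $\sup_{x\in\Xx}|g_n(x)-g(x)|$ is itself measurable, since $g_n-g$ is continuous and the supremum may be taken over a countable dense subset. Neither affects the validity of the argument.
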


We extend the previous results to polynomials of the kernel.

\begin{lemma}\label{lem:chainingk}
    For any bounded, continuous kernel $w_\cop$, we have for all $k\geq 1$:
    \begin{equation}
        \sup_{x,x'\in \Xx} \abs{\frac{1}{n^k} \sum_{i_1, \ldots, i_k} w_\cop(x,x_{i_1}) w_\cop(x_{i_1}, x_{i_2}) \ldots w_\cop(x_{i_k}, x') - (\cop^{k+1} \delta_{x'})(x)} \convnproba 0
    \end{equation}
\end{lemma}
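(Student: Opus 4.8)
The plan is to proceed by induction on $k$, using Lemma \ref{lem:chaining} both for the base case and, in a modified form, for the inductive step. Throughout, I write
$$
S_k(x,x') \eqdef \frac{1}{n^k} \sum_{i_1,\ldots,i_k} w_\cop(x,x_{i_1}) w_\cop(x_{i_1},x_{i_2}) \cdots w_\cop(x_{i_k},x'), \qquad I_k(x,x') \eqdef (\cop^{k+1}\delta_{x'})(x),
$$
and set $M \eqdef \sup_{x,y}\abs{w_\cop(x,y)} < \infty$, so that $\abs{I_k} \leq M^{k+1}$ uniformly. The goal is to show $\sup_{x,x'} \abs{S_k - I_k} \convnproba 0$ for all $k\geq 1$.

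For the base case $k=1$, I would apply Lemma \ref{lem:chaining} directly, taking as compact metric space the product $\Xx \times \Xx$ (compact since $\Xx$ is), as i.i.d. variable the sample $x_i \sim P$, and as integrand $\eta\big((x,x'),y\big) = w_\cop(x,y)\,w_\cop(y,x')$. This $\eta$ is uniformly bounded by $M^2$ and jointly continuous in $(x,x')$; its empirical average over the $x_i$ is exactly $S_1(x,x')$, while its integral against $P$ is $I_1(x,x') = \int w_\cop(x,y)\, w_\cop(y,x')\, dP(y)$, so the Lemma gives $\sup_{x,x'}\abs{S_1 - I_1} \convnproba 0$.

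For the inductive step I would peel off the first summation (resp. integration) index via the recursions $S_k(x,x') = \frac1n \sum_j w_\cop(x,x_j)\,S_{k-1}(x_j,x')$ and $I_k(x,x') = \int w_\cop(x,y)\,I_{k-1}(y,x')\,dP(y)$. Adding and subtracting $\frac1n\sum_j w_\cop(x,x_j)\,I_{k-1}(x_j,x')$ splits $S_k - I_k$ into
$$
\underbrace{\frac1n \sum_j w_\cop(x,x_j)\big[S_{k-1}(x_j,x') - I_{k-1}(x_j,x')\big]}_{(A)} \;+\; \underbrace{\Big(\tfrac1n \sum_j w_\cop(x,x_j) I_{k-1}(x_j,x') - \int w_\cop(x,y) I_{k-1}(y,x')\,dP(y)\Big)}_{(B)}.
$$
Term $(A)$ is bounded in absolute value by $M \cdot \sup_{z,x'}\abs{S_{k-1}(z,x') - I_{k-1}(z,x')}$, which tends to $0$ in probability by the induction hypothesis, uniformly in $(x,x')$. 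Term $(B)$ I would handle by a second application of Lemma \ref{lem:chaining}, now with integrand $\eta\big((x,x'),y\big) = w_\cop(x,y)\,I_{k-1}(y,x')$: this is bounded by $M^{k+1}$ and jointly continuous in $(x,x')$, where continuity of $I_{k-1}(y,x')$ in $x'$ follows from dominated convergence applied to its defining integral (the integrand being dominated by $M^{k}$). Hence $\sup_{x,x'}\abs{(B)} \convnproba 0$, and combining the two estimates closes the induction.

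The one point that requires care — and the main conceptual obstacle — is that the same samples $x_1,\dots,x_n$ are reused in every one of the $k$ nested sums, so $S_{k-1}(x_j,x')$ and the outer weights $w_\cop(x,x_j)$ are not independent, and one cannot simply invoke a law of large numbers. The decomposition above is designed precisely to sidestep this: in $(A)$ the random inner factor is absorbed into the \emph{uniform} induction bound irrespective of any dependence, while in $(B)$ the integrand $I_{k-1}$ is \emph{deterministic}, so Lemma \ref{lem:chaining} applies verbatim even though these samples also appear elsewhere. The remaining verifications — joint continuity and uniform boundedness of the two integrands — are routine given that $w_\cop$ is bounded and continuous.
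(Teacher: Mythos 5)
Your proposal is correct and follows essentially the same route as the paper: the identical base case via Lemma~\ref{lem:chaining} on $\Xx\times\Xx$, and the identical inductive decomposition into a term controlled uniformly by the induction hypothesis and boundedness of $w_\cop$, plus a term with deterministic integrand $(\cop^k\delta_{x'})(\cdot)$ handled by another application of Lemma~\ref{lem:chaining}. Your explicit remarks on why the reuse of the same samples is harmless, and on the continuity of $I_{k-1}$ needed to invoke the lemma, are sound and if anything slightly more careful than the paper's own write-up.
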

\begin{proof}
    We prove it by induction. For $k=1$, we have
    \begin{equation}
        \sup_{x,x'\in \Xx} \abs{\frac{1}{n} \sum_{i} w_\cop(x,x_{i}) w_\cop(x_{i}, x') - \int w_\cop(x,y)w_\cop(y,x')dP(y)} \convnproba 0
    \end{equation}
    by applying Lemma \ref{lem:chaining} on $\Xx \times \Xx$, and since $w_\cop$ is continuous on a compact domain and therefore bounded and Lipschitz.

    Then, assuming the property for $k-1$, we write
    \begin{align*}
        &\sup_{x,x'\in \Xx} \abs{\frac{1}{n^k} \sum_{i_1, \ldots, i_k} w_\cop(x,x_{i_1}) w_\cop(x_{i_1}, x_{i_2}) \ldots w_\cop(x_{i_k}, x') - (\cop^{k+1} \delta_{x'})(x)} \\
        &\leq \sup_{x,x'\in \Xx} \abs{\frac{1}{n} \sum_{i_1} w_\cop(x,x_{i_1})\left[\frac1{n^{k-1}} \sum_{i_2,\ldots, i_k} w_\cop(x_{i_1}, x_{i_2}) \ldots w_\cop(x_{i_k}, x') - (\cop^{k} \delta_{x'})(x_{i_1})\right]} \\
        &\quad + \sup_{x,x'\in \Xx} \abs{\frac{1}{n} \sum_{i_1} w_\cop(x,x_{i_1})(\cop^{k} \delta_{x'})(x_{i_1}) - \int w_\cop(x, y) (\cop^{k} \delta_{x'})(y)dP(y)}
    \end{align*}
    The first part converge to $0$ using the boundedness of $w_\cop$ and the recursive hypothesis, while the second converges to $0$ using again Lemma \ref{lem:chaining}.
\end{proof}

\begin{theorem}[Simplified Davis-Kahan, see \cite{Yu2015}]\label{thm:davis-kahan}
    Let $A,\hat A \in \RR^{d \times d}$ be symmetric with eigenvalues $\lambda_i$ and $\hat \lambda_i$ ordered by decreasing order and eigenvector $u_i$, $\hat u_i$. Take $p$ and assume that $\delta = \min(\lambda_{p-1} - \lambda_p, \lambda_{p} - \lambda_{p+1}) > 0$. Then there exists $s\in\{-1,1\}$ such that
    \begin{equation}
        \norm{s u_p - \hat u_p} \leq \frac{\norm{A-\hat A}}{\delta}
    \end{equation}
\end{theorem}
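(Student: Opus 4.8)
The plan is to run the classical Davis--Kahan $\sin\theta$ argument, which controls the angle $\theta$ between $u_p$ and $\hat u_p$, and then to convert that angular bound into the stated bound on the vector difference $\norm{s u_p - \hat u_p}$. First I would exploit that $\hat u_p$ is an \emph{exact} eigenvector of $\hat A$ to obtain a clean residual identity: since $\hat A \hat u_p = \hat\lambda_p \hat u_p$,
\begin{equation*}
    (A - \hat\lambda_p I)\hat u_p = (A - \hat A)\hat u_p + (\hat A - \hat\lambda_p I)\hat u_p = (A - \hat A)\hat u_p,
\end{equation*}
so that $\norm{(A - \hat\lambda_p I)\hat u_p} \le \norm{A - \hat A}$. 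Next I would expand $\hat u_p = \sum_i c_i u_i$ in the orthonormal eigenbasis of $A$ and project off the direction $u_p$ using $P_\perp = I - u_p u_p^\top$; the component along $u_p$ is removed, leaving
\begin{equation*}
    \norm{P_\perp (A - \hat\lambda_p I)\hat u_p}^2 = \sum_{i \ne p}(\lambda_i - \hat\lambda_p)^2 c_i^2 \ge \Big(\min_{i\ne p}\abs{\lambda_i - \hat\lambda_p}\Big)^2 \sum_{i\ne p} c_i^2 .
\end{equation*}
Writing $\sin\theta = \big(\sum_{i \ne p} c_i^2\big)^{1/2}$ and combining with the residual bound (using $\norm{P_\perp M}\le\norm{M}$) gives
\begin{equation*}
    \sin\theta \le \frac{\norm{A - \hat A}}{\min_{i\ne p}\abs{\lambda_i - \hat\lambda_p}} .
\end{equation*}

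The denominator still involves the \emph{perturbed} eigenvalue $\hat\lambda_p$, so the second step is to replace it by the \emph{unperturbed} gap $\delta$. Using Weyl's inequality $\abs{\hat\lambda_p - \lambda_p} \le \norm{A - \hat A}$ together with the separation $\abs{\lambda_i - \lambda_p} \ge \delta$ for every $i \ne p$ (which is exactly what the decreasing ordering of the $\lambda_i$ and the definition $\delta = \min(\lambda_{p-1}-\lambda_p,\ \lambda_p - \lambda_{p+1})$ guarantee), I would lower-bound $\min_{i \ne p}\abs{\lambda_i - \hat\lambda_p}$ in terms of $\delta$, in the relevant regime $\norm{A - \hat A} < \delta$. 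This substitution of the computable unperturbed gap for the perturbed one is precisely the refinement of \cite{Yu2015}.

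Finally, I would choose the sign $s \in \{-1,1\}$ so that $\langle s u_p, \hat u_p\rangle = \cos\theta \ge 0$ and pass from the angle to the distance through the identity $\norm{s u_p - \hat u_p}^2 = 2(1 - \cos\theta)$ and its elementary comparison with $\sin\theta$, yielding the claimed $\norm{s u_p - \hat u_p} \le \norm{A - \hat A}/\delta$.

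The main obstacle I expect is exactly the gap substitution: bounding $\min_{i\ne p}\abs{\lambda_i - \hat\lambda_p}$, which depends on the unknown perturbed spectrum, purely in terms of the unperturbed gap $\delta$ while keeping the multiplicative constant sharp. The remaining ingredients---the residual identity, the projection estimate, and the angle-to-distance conversion---are routine, and for the application in this paper only the qualitative conclusion (the right-hand side vanishing as $\norm{A-\hat A}\to 0$) is actually needed.
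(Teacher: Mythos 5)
The paper never proves this statement: it is imported (as a ``simplified'' version) from \cite{Yu2015} in the third-party results appendix, so there is no internal proof to compare against. Your skeleton is the standard $\sin\theta$ argument and is structurally sound: the residual identity, the projection estimate $\sin\theta \le \norm{A-\hat A}/\min_{i\ne p}\abs{\lambda_i-\hat\lambda_p}$, Weyl's inequality, and the conversion $\norm{s u_p-\hat u_p}\le\sqrt{2}\,\sin\theta$ via $2(1-\cos\theta)\le 2\sin^2\theta$ are all correct. The step you flag as the main obstacle is closed in \cite{Yu2015} by a trivial case split: if $\norm{A-\hat A}\ge\delta/2$ then $\sin\theta\le 1$ already gives the bound, and otherwise Weyl yields $\min_{i\ne p}\abs{\lambda_i-\hat\lambda_p}\ge\delta-\norm{A-\hat A}\ge\delta/2$. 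So the argument does terminate --- but with the constant $2^{3/2}$, i.e.\ $\norm{s u_p-\hat u_p}\le 2^{3/2}\norm{A-\hat A}/\delta$, not the constant $1$ in the statement.

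This is not a defect of your route: the statement with constant $1$ is false as written. Take $A=\diag(2,1,0)$ and $\hat A=\diag(2,0,1)$ with $p=2$; then $\delta=1$ and $\norm{A-\hat A}=1$, yet $u_2=e_2$ while $\hat u_2=e_3$, so $\norm{s u_2-\hat u_2}=\sqrt{2}>1=\norm{A-\hat A}/\delta$ for either sign. No proof can reach the bound as literally stated; you should either prove the $2^{3/2}$ version (which is what \cite{Yu2015} actually establishes) or observe that only the qualitative content --- the right-hand side vanishing as $\norm{A-\hat A}\to 0$ for fixed $\delta$ --- is used downstream (in Lemma~\ref{lem:intermediate-signnet}, Theorem~\ref{thm:signnet-sbm} and Theorem~\ref{thm:usvt-mlp-detailed}), where the multiplicative constant is immaterial. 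As submitted, your proposal leaves the gap-substitution step explicitly open and, even once it is closed, cannot deliver the stated constant; with the case split added and the constant corrected to $2^{3/2}$, it becomes a complete and correct proof.
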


\begin{theorem}\label{thm:concentration-opnorm}
Denote $W = [w(x_i,x_j)/n]_{ij}$, $\bar W = \left[\frac{w(x_i,x_j)}{n\sqrt{d(x_i)d(x_j)}}\right]_{ij}$ and $L(M) = D_M^{-\frac12}MD_M^{-\frac12}$ with $D_M = \diag(M1)$

If $\alpha_n \gtrsim (\log n)/ n$, then
\begin{equation}
    \norm{\frac{A}{\alpha_n n} - W} \convnproba 0.
\end{equation}

Moreover, if $d(x) \geq d_{\min} >0$ and $\alpha_n \geq C (\log n) / n$ where $C$ is a constant that depends on $w$, then
\begin{equation}
    \norm{L(A) - \bar W} \convnproba 0 .
\end{equation}
\end{theorem}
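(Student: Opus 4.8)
The plan is to establish the two bounds separately, reducing the normalized-Laplacian statement to the normalized-adjacency statement together with a sharp control of the empirical degrees.

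For the adjacency bound, I would center conditionally on the latent positions $X$. Writing $B = A - \EE[A\mid X]$, the entries $(B_{ij})_{i<j}$ are independent, bounded by $1$, centered, and have conditional variance at most $\alpha_n$. The operator norm of such an inhomogeneous Erd\H{o}s--R\'enyi fluctuation matrix is controlled by the classical spectral concentration bound (e.g.\ \cite{Lei2015, Keriven2020a}): as soon as $\alpha_n n \gtrsim \log n$, one has $\norm{B} \lesssim \sqrt{\alpha_n n}$ with probability tending to $1$. Since $\EE[A\mid X] = \alpha_n n\, W - \alpha_n \diag(w(x_i,x_i))$ — the conditional mean has zero diagonal whereas $W$ does not — the matrix $\tfrac{A}{\alpha_n n} - W$ equals $\tfrac{1}{\alpha_n n}B$ up to a diagonal term of operator norm $\le 1/n$. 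Dividing by $\alpha_n n$ yields $\norm{\tfrac{A}{\alpha_n n} - W} \lesssim (\alpha_n n)^{-1/2} + n^{-1}$, which tends to $0$ since $\alpha_n n \gtrsim \log n \to \infty$.

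For the Laplacian bound, set $\tilde A = A/(\alpha_n n)$, $\hat d_i = (\alpha_n n)^{-1}\sum_j a_{ij}$, $\hat D = \diag(\hat d_i)$, $D = \diag(d(x_i))$ and $E = \hat D^{-1/2} - D^{-1/2} = \diag(e_i)$, so that $L(A) = \hat D^{-1/2}\tilde A\hat D^{-1/2}$ and $\bar W = D^{-1/2}WD^{-1/2}$. I would telescope
\begin{equation*}
  L(A) - \bar W = \hat D^{-1/2}(\tilde A - W)\hat D^{-1/2} + E W D^{-1/2} + D^{-1/2} W E + E W E .
\end{equation*}
The degrees stay uniformly bounded below: Lemma~\ref{lem:chaining} gives $\max_i \abs{\EE[\hat d_i\mid X] - d(x_i)} \to 0$, and a Bernstein bound combined with a union bound over the $n$ vertices at the \emph{fixed} threshold $d_{\min}/4$ gives $\min_i \hat d_i \ge d_{\min}/2$ with probability tending to $1$ — provided the constant $C$ in $\alpha_n \ge C(\log n)/n$ is taken large enough. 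This is exactly where $C$ must depend on $w$, namely through $d_{\min}$. It follows that $\norm{\hat D^{-1/2}}$ and $\norm{E}$ are bounded with high probability, so the first telescoped term is bounded by $\norm{\hat D^{-1/2}}^2\,\norm{\tilde A - W} \lesssim \norm{\tilde A - W} \to 0$ by the adjacency bound.

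The hard part is the remaining three terms, and the crucial observation is that one must \emph{not} bound them in operator norm through $\norm{E}$. Indeed, in the boundary regime $\alpha_n n \asymp \log n$ the worst-case deviation $\max_i \abs{\hat d_i - d(x_i)}$ is of constant order $\sim C^{-1/2}$ and does not vanish, so $\norm{E}$ stays bounded away from $0$. The remedy is to pass to the Frobenius norm: since $\normFro{EW}^2 = \sum_i e_i^2\, \tfrac{1}{n^2}\sum_j w(x_i,x_j)^2 \le \tfrac1n\sum_i e_i^2$ and, given the lower bound on the degrees, $e_i^2 \lesssim (\hat d_i - d(x_i))^2$, it suffices to prove that the \emph{average} squared deviation $\tfrac1n\sum_i (\hat d_i - d(x_i))^2 \to 0$. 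This requires only $\alpha_n n \to \infty$: its conditional expectation is $\tfrac1n\sum_i \mathrm{Var}(\hat d_i \mid X) \le (\alpha_n n)^{-1} \to 0$, and a Markov argument together with Lemma~\ref{lem:chaining} for the bias term finishes it. Then $\normFro{EW} \to 0$, and bounding $\norm{EWD^{-1/2}} \le \normFro{EW}\,\norm{D^{-1/2}}$, $\norm{D^{-1/2}WE} \le \norm{D^{-1/2}}\,\normFro{EW}$ (using $WE = (EW)^\top$) and $\norm{EWE} \le \norm{E}\,\normFro{EW}$ shows each vanishes, giving $\norm{L(A) - \bar W} \convnproba 0$.
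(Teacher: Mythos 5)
Your proof is correct, but for the normalized-Laplacian part it takes a genuinely different route from the paper. For the adjacency bound both arguments coincide: the paper simply cites Lei--Rinaldo \cite{Lei2015}, and your conditional centering plus the $\norm{A-\EE[A\mid X]}\lesssim\sqrt{\alpha_n n}$ concentration (with the $O(1/n)$ diagonal correction) is exactly what that citation delivers. For the Laplacian bound the paper outsources the hard comparison to \cite[Theorem 6]{Keriven2020a}, which gives $\norm{L(A)-L(W)}\to 0$ as a black box; it then only has to compare $L(W)$ with $\bar W$, i.e.\ the \emph{sampled} degrees $d^W_i=\tfrac1n\sum_j w(x_i,x_j)$ with $d(x_i)$, and for these the sup-norm deviation does vanish (Lemma~\ref{lem:chaining}), so a crude operator-norm telescoping suffices. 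You instead prove the whole statement from scratch with the telescoping $\hat D^{-1/2}(\tilde A-W)\hat D^{-1/2}+EWD^{-1/2}+D^{-1/2}WE+EWE$, and you correctly identify the obstruction the paper's cited theorem must also be overcoming internally: at $\alpha_n n\asymp\log n$ the union bound only gives $\max_i\abs{\hat d_i-d(x_i)}=O(C^{-1/2})$, so $\norm{E}$ does not vanish and the cross terms cannot be killed in operator norm. Your fix — controlling $\normFro{EW}^2\le\tfrac1n\sum_i e_i^2$ via the \emph{average} squared degree fluctuation, whose conditional mean is $O((\alpha_n n)^{-1})$ — is sound, as is the Bernstein-plus-union-bound lower bound on the degrees that pins down why $C$ must depend on $w$ (through $d_{\min}$). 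What each approach buys: the paper's proof is two lines long but leans on a nontrivial third-party result; yours is self-contained, makes the role of the constant $C$ explicit, and isolates precisely where the boundary sparsity regime bites.
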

\begin{proof}
    The first result is due to Lei and Rinaldo \cite{Lei2015}.

    For the second result, we have from \cite[Theorem 6]{Keriven2020a} that
    \begin{equation}
        \norm{L(A) - L(W)} \to 0
    \end{equation}
    
    Then, according to Lemma \ref{lem:chaining} we have that
    \begin{equation}
        \norm{d - \frac{1}{n} \sum_i w(\cdot, x_i)}_\infty \convnproba 0
    \end{equation}
    and in particular with probability going to one $d^W_i\eqdef (D_W)_i \geq d_{\min}/2>0$ for all $i$.
    
    Denoting by $\bar D=\diag(d(x_i))$, and noticing that $W = {\bar D}^{-\frac12} W {\bar D}^{-\frac12}$ and $\norm{W} \leq 1$, we have
    \begin{align*}
        \norm{\bar W - L(W)} &= \norm{{\bar D}^{-\frac12} W {\bar D}^{-\frac12} - D_W^{-\frac12} W D_W^{-\frac12}} \\
        &\lesssim \frac{1}{\sqrt{ d_{\min}}} \norm{{\bar D}^{-\frac12} - D_W^{-\frac12}} = \frac{1}{\sqrt{d_{\min}}} \max_i\abs{\frac{1}{\sqrt{d(x_i)}} - \frac{1}{\sqrt{d^W_i}}} \\
        &= \frac{1}{\sqrt{d_{\min}}} \max_i\abs{\frac{d(x_i)-d^W_i}{\sqrt{d(x_i)d^W_i}(\sqrt{d(x_i)} + \sqrt{d^W_i})}} \\
        &\lesssim \frac{1}{d_{\min}^2} \norm{d-\frac{1}{n}\sum_i w(\cdot, x_i)}_\infty \convnproba 0
    \end{align*}
    which concludes the proof.
\end{proof}

\end{document}